\newif\ifisarxiv
\newcommand{\sqrtlh}[1]{{\sqrt{\white{|}\!\!\smash{\text{\fontsize{9}{9}\selectfont$\hat{l}_{#1}$}}}}}
\newcommand\mydots{\makebox[1em][c]{.\hfil.\hfil.}}
\def\Sd{\mathscr{S}_{\!d}}
\newcommand{\D}{\mathrm{D}}
\newcommand{\DPP}{\mathrm{DPP}}
\newcommand{\dx}{\D_{\!\cal X}} 
\newcommand{\dyx}{\D_{\!\cal Y|\x}} 
\newcommand{\dyxi}{\D_{\!\mathcal{Y}|\x_i}}
\newcommand{\dxk}{\D_{\!\cal X}^k} 
\newcommand{\dxy}{\D}
\newcommand{\dxyt}{\widetilde{\D}}
\newcommand{\dxyh}{\widehat{\D}}
\newcommand{\lev}{\mathrm{Lev}_{\dx}}
\newcommand{\levh}{\mathrm{Lev}_{\Sigmabh,{\cal X}}}
\newcommand{\levhh}{\mathrm{\widehat{L}ev}}
\newcommand{\dxt}{{\widetilde{\D}_{\cal X}}}
\newcommand{\dxh}{{\widehat{\D}_{\cal X}}}
\def\vskx{{\mathrm{VS}_{\!\dx}^k}}
\def\vskxm{{\mathrm{VS}_{\!\dx}^{k-1}}}
\def\vsdx{{\mathrm{VS}_{\!\dx}^d}}
\newcommand{\sigd}{\boldsymbol\Sigma_{\dx}}
\def\kd{K_{\!\dx}}
\def\poly{{\mathrm{poly}}}
\def\Vol{{\mathrm{VS}}}
\def\Lev{{\mathrm{Lev}}}
\newenvironment{proofof}[2]{\par\vspace{2mm}\noindent\textbf{Proof of {#1} {#2}}\ }{\hfill\BlackBox\\[2mm]}
\DeclareMathOperator{\sgn}{\textnormal{sgn}}
\DeclareMathOperator{\adj}{\textnormal{adj}}
\def\xib{\boldsymbol\xi}
\def\Sigmab{\mathbf{\Sigma}}
\def\Sigmabh{\widehat{\Sigmab}}
\def\Sigmabb{\bar{\Sigmab}}
\def\xbt{\widetilde{\x}}
\def\xbh{\widehat{\x}}
\def\xbb{\bar{\x}}
\def\rbb{\bar{\r}}
\def\rb{\bar{r}}
\def\ktd{k^{\underline{d}}}
\def\Xt{\widetilde{X}}
\def\ybb{\bar{\y}}
\def\yb{\bar{y}}
\def\Ec{\mathcal{E}}
\def\Nc{\mathcal{N}}
\def\Hc{\mathcal{H}}
\def\r{\mathbf r}
\def\K{\mathbf K}
\newcommand{\BlackBox}{\rule{1.5ex}{1.5ex}}  
\DeclareMathOperator*{\argmin}{\mathop{\mathrm{argmin}}}
\def\x{\mathbf x}
\def\y{\mathbf y}
\def\ybh{\widehat{\mathbf y}}
\def\ybt{\widetilde{\mathbf y}}
\def\yh{\widehat{y}}
\def\yt{\widetilde{y}}
\def\a{\mathbf a}
\def\b{\mathbf b}
\def\w{\mathbf w}
\def\v{\mathbf v}
\def\wbh{\widehat{\mathbf w}}
\def\wh{\widehat{\mathbf w}}
\def\wbt{\widetilde{\mathbf w}}
\def\e{\mathbf e}
\def\zero{\mathbf 0}
\def\one{\mathbf 1}
\def\u{\mathbf u}
\def\X{\mathbf X}
\def\B{\mathbf B}
\def\A{\mathbf A}
\def\C{\mathbf C}
\def\U{\mathbf U}
\def\V{\mathbf V}
\def\I{\mathbf I}
\def\A{\mathbf A}
\def\P{\mathbf P}
\def\Xt{\widetilde{\mathbf{X}}}
\def\Xb{\bar{\mathbf{X}}}
\def\Xh{\widehat{\mathbf{X}}}
\def\E{\mathbb E}
\def\R{\mathbb R} 
\def\Pr{\mathrm{Pr}} 
\def\tr{\mathrm{tr}}
\def\rank{\mathrm{rank}}
\def\Var{\mathrm{Var}}
\newcommand{\defeq}{\stackrel{\textit{\tiny{def}}}{=}}
\let\origtop\top
\renewcommand\top{{\scriptscriptstyle{\origtop}}} 
\definecolor{silver}{cmyk}{0,0,0,0.3}
\definecolor{yellow}{cmyk}{0,0,0.9,0.0}
\definecolor{reddishyellow}{cmyk}{0,0.22,1.0,0.0}
\definecolor{black}{cmyk}{0,0,0.0,1.0}
\definecolor{darkYellow}{cmyk}{0.2,0.4,1.0,0}
\definecolor{orange}{cmyk}{0.0,0.7,0.9,0}
\definecolor{darkSilver}{cmyk}{0,0,0,0.1}
\definecolor{grey}{cmyk}{0,0,0,0.5}
\definecolor{darkgreen}{cmyk}{0.6,0,0.8,0}
\newcommand{\white}[1]{{\textcolor{white}{#1}}}
\newenvironment{proof}{\par\noindent{\bf Proof\ }}{\hfill\BlackBox\\[2mm]}
\newtheorem{theorem}{Theorem}
\newtheorem{condition}{Condition}
\newtheorem{lemma}[theorem]{Lemma}
\newtheorem{proposition}[theorem]{Proposition}
\newtheorem{remark}[theorem]{Remark}
\newtheorem{definition}{Definition}
\numberwithin{equation}{section}
\numberwithin{figure}{section}
\numberwithin{theorem}{section}
\numberwithin{table}{section}
\newcommand{\bipgraph}[2]{%
    \begin{tikzpicture}[baseline=+.9
	ex,rotate=90,scale=.33,every node/.style={circle,draw,scale=.1}]
    \foreach \xitem in {1,2,...,#1}
    {%
    \node at (0,\xitem) (a\xitem) {};
    \node at (2,\xitem) (b\xitem) {};   
    }%

    \foreach \x [count=\xi] in {#2}
    {%
    \foreach \tritem in \x 
    \draw(a\xi) -- (b\tritem);
    }
    \end{tikzpicture}  
}
\begin{document}

\title{Unbiased estimators for random design regression}

\author{%
\name Micha{\l } Derezi\'{n}ski \email derezin@umich.edu\\
\addr Department of Electrical Engineering \& Computer Science,
University of Michigan
\AND 
\name Manfred K. Warmuth \email manfred@google.com\\
\addr 
UC Santa Cruz and Google Inc.
\AND
\name Daniel Hsu \email djhsu@cs.columbia.edu\\
\addr Department of Computer Science,
Columbia University
}
\editor{}

\maketitle

\begin{abstract}

    In linear regression we wish to estimate the optimum linear least squares
  predictor for a distribution over $d$-dimensional input
  points and real-valued responses, based on a small sample. Under standard random design
  analysis, where the sample is drawn i.i.d.~from the input distribution, the
  least squares solution for that sample can be viewed as the natural
  estimator of the optimum. Unfortunately, this estimator 
  almost always incurs an undesirable bias coming from the randomness
  of the input points, which is a significant bottleneck in model
  averaging. In this paper we show that it is
  possible to draw a non-i.i.d.~sample of input points such that,
  regardless of the response model, 
  the least squares solution is an unbiased estimator of the
  optimum. Moreover, this sample can be produced efficiently by augmenting a 
  previously drawn i.i.d.~sample with an additional set of $d$ points,
  drawn jointly according to a certain determinantal point process constructed from
  the input distribution rescaled by
  the squared volume spanned by the points. Motivated by this,
  we develop a theoretical framework for studying
  volume-rescaled sampling, and in the process prove a number of new matrix
  expectation identities. 
  We use them to show that for any input
  distribution and $\epsilon>0$ there is a random design consisting of
  $O(d\log d+ d/\epsilon)$ points from which an unbiased estimator can
  be constructed whose expected square loss over  
  the entire distribution is bounded by $1+\epsilon$ times
  the loss of the optimum. 

  We provide efficient algorithms for constructing such unbiased
  estimators in a number of practical settings.  In one such setting, we let the input
  distribution be uniform over a large dataset of $n\gg d$ points. Here,
we obtain the first unbiased least
  squares estimator that can be constructed in time nearly-linear in
  the data size, resulting in strong guarantees for model
  averaging. We achieve these computational gains by introducing a new
  algorithmic technique, called distortion-free intermediate 
  sampling, which is the first method to enable sampling from
  determinantal point processes in time polynomial in the
  sample size.
\end{abstract}

\begin{keywords}
    volume sampling, determinantal point process, linear
    regression, unbiased estimators, random design.
\end{keywords}

\section{Introduction}

We consider linear regression where the examples
$(\x^\top,y)\in\R^d\times \R$ are generated by an unknown distribution
$\D$ over $\R^d \times \R$, with $\dx$ denoting the marginal
distribution of a row vector $\x^\top$ and $\dyx$ denoting the conditional distribution of $y$ given $\x$.
In statistics, it is common to assume that the response $y$ is a linear function of $\x$ plus zero-mean Gaussian noise; the goal is then to estimate this linear function.
We decidedly make no such assumption.
Instead, we allow the distribution to be
arbitrary except for the nominal requirement that the
second moments of the point $\x$ and response $y$ are
bounded, i.e., $\E[\|\x\|^2] < \infty$ and $\E[y^2] < \infty$.
The target of the estimation is the linear least squares predictor of $y$ from $\x$ with respect to $\D$:
$$\w^*_{\dxy}\defeq
\argmin_{\w\in\R^d}L_{\dxy}(\w),
\quad\text{where}\;\; 
L_{\dxy}(\w)\defeq \E\big[(\x^\top\w-\y)^2\big] . $$
Here, we assume $\E[\x\x^\top]$ is invertible so we have
the concise formula $\w^*_{\dxy} = (\E[\x\x^\top])^{-1}\E[\x y]$.
Our goal is to construct a ``good'' estimator of this
target $\w^*_{\dxy}$ from a small sample. For the rest of the
paper we use $\w^*$ as a shorthand.

In our setup, the estimator $\wh$ of $\w^*$ is based on solving a
least squares problem on a sample of $k$ examples
$(\x_1^\top,y_1),\dotsc,(\x_k^\top,y_k)$. 
We assume that given $\x_1,\dotsc,\x_k$, the responses $y_1,\dotsc,y_k$ are conditionally independent, and 
the conditional distribution of $y_i$ only depends on
$\x_i$, i.e., $y_i \sim \dyxi$ for $i=1,\dotsc,k$.
However, for the applications we have in mind, the marginal distribution of $\x_1,\dotsc,\x_k$ is allowed to be flexibly designed based on $\dx$.
The most standard choice is i.i.d.~sampling from the distribution $\dx$ of $\x$, i.e., $(\x_1^\top,\dotsc,\x_k^\top) \sim \dxk$.
We shall seek other choices that can be implemented given the ability to sample from $\dx$ but that lead to better statistical properties for $\wh$.

In particular, the properties we want of the estimator $\wh$ are the following.
\begin{enumerate}
  \item Unbiasedness: \ $\E[\wh]=\w^*$.
  \item Near-optimal expected loss: \ $\E\big[L_{\dxy}(\wh)\big]\le
    (1+\epsilon) L_{\dxy}(\w^*)$ for some (small) $\epsilon>0$. 
  \end{enumerate}
Together, these properties have many useful implications, such as a
bound on the out-of-sample prediction variance, i.e., $\Var[\x^\top\wbh]\leq\epsilon$
for $\x^\top\sim\dx$, and improved guarantees for averaging, e.g.,
$\E\big[L_{\dxy}(\frac{\wbh_1+\wbh_2}2)\big]\leq(1+\frac\epsilon2)L_{\dxy}(\w^*)$,
where $\wbh_1$ and $\wbh_2$ are independent copies of $\wbh$.
The central question is how to sample $\x_1,\dotsc,\x_k$ to achieve
these properties with sample size $k = k(\epsilon)$ as small as possible.
Note that while in general it is very natural to seek an
\emph{unbiased} estimator, in the context of
random design regression it is highly unusual. This is because, as we
discuss shortly, standard approaches fail in this regard.
In fact,
until recently, unbiased estimators have been considered out of reach for this problem.

An important and motivating case of our general setup occurs when $\dx$ is
the uniform distribution over a fixed set of $n$ points and
$\dyx$ is deterministic. That is, there is an $n\times d$ {\em fixed design
matrix} $\X$ and a response vector $\y\in\R^n$ 
such that the distribution is uniform over the $n$ rows. Here, the loss of $\w$
can be written as $L_{\dxy}(\w) = \tfrac1n \|\X\w-\y\|^2$. This traditionally
fixed design setting turns into a random design when we are required to
sample $k\ll n$ rows of $\X$, observe \emph{only} the entries of $\y$
corresponding to those rows,
and then construct an estimate $\wh$ of the least squares solution for
all of $(\X,\y)$. Such constraints are imposed either in the context of
experimental design and active learning, where $k$ represents the
budget of responses that we are allowed to observe (e.g., because the
responses are expensive), or to reduce the
computational cost of solving the full least squares problem. Here, an important motivation for
unbiasedness is parallel and distributed model averaging, where we wish
to aggragate many independent copies of an estimator. See
Section~\ref{s:app} for further discussion of model averaging and
experimental design. 

Throughout the introduction we give some intuition about
our results by discussing the one dimensional case. 
For example, consider the following $2\times 1$ fixed design problem:
\begin{align}
  \X=\begin{bmatrix}\text{\fontsize{9}{9}\selectfont$x_1\!\!:$}\ 1\
    \\
    \text{\fontsize{9}{9}\selectfont$x_2\!\!:$}\ 2\ \end{bmatrix}
  ,\quad\y=\begin{bmatrix}
 \text{\fontsize{9}{9}\selectfont$y_1\!\!:$}\ 1\ \\
\text{\fontsize{9}{9}\selectfont$y_2\!\!:$}\ 1\ \end{bmatrix},\quad\text{with
  target:}\quad \w^*=\frac{\sum_ix_iy_i} {\sum_i x_i^2}= \frac{3}{5}.\label{eq:1-dim}
  \end{align}
Suppose that we wish to estimate the target after observing only
a single response (i.e., $k=1$). If we draw the response uniformly at
random (i.e., from the distribution $\dxy$), then the least
squares estimator for this sample will be a \emph{biased} estimate of the target:
$\E[\wh]=\frac{1}{2} \frac{y_1}{x_1}+ \frac{1}{2} \frac{y_2}{x_2} 
	=\frac{3}{4}\ne \frac{3}{5}.$

The bias in least squares estimators is present even when
each input component is drawn independently from a
standard Gaussian. As an example, we let $d=5$ and set:
\begin{align*}
\x^\top\! = (x_1,\dots,x_d)\overset{\text{i.i.d.}}{\sim} \Nc(0,1) ,\quad\ y =
  \xi(\x)\! +\! \epsilon,\quad\text{where }  \   \xi(\x) =
  \sum_{i=1}^d x_i + \frac{x_i^3}{3},\quad \epsilon \sim \Nc(0,1).
\end{align*}
The response $y$ is
a non-linear function $\xi(\x)$ plus independent white noise
$\epsilon$. Note that it is crucial that the response contains some
non-linearity, and it is something that one would
expect in real datasets. The response is cubic and was chosen so that
it is easy to solve algebraically for the optimum solution $\w^*=\argmin_\w L_{\dxy}(\w)$
(see Appendix~\ref{a:exact}).

\begin{wrapfigure}{r}{0.45\textwidth}
    \vspace{-3mm}
    \includegraphics[width=0.5\textwidth]{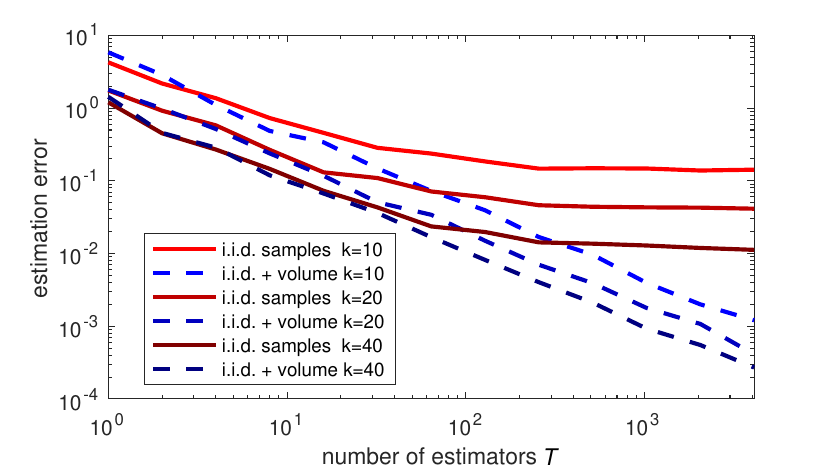}
    \vspace{-7mm}
  \caption{Averaging least squares estimators for Gaussian inputs with
    $d=5$.}\label{f:experiment}
  \vspace{3mm}
\end{wrapfigure}

For this Gaussian setup we evaluate the bias of the least squares estimator produced
for this problem by i.i.d.~sampling of $k$ points.
We do this by performing model averaging, i.e., producing many such estimators
$\wbh_1,\dots,\wbh_T$ independently,
and looking at the estimation error of the
average of those estimators $\wbt:=\frac1T\sum_{t=1}^T\wbh_t$:
\begin{align*}
    \text{estimation error:} \quad 
  \|\wbt-\w^*\|^2.
\end{align*}

Figure~\ref{f:experiment} (red curves) shows the experiment for several
values of $k$ and a range of values of $T$ (each presented data point is
an average over 50 runs). The i.i.d.~sampled estimator
is biased for any sample size (although the bias decreases with $k$),
and therefore the averaged estimator clearly does not converge to the optimum.
We next discuss how to construct an unbiased estimator (dashed blue
curves),
for which the estimation error of the averaged estimator
exhibits $\frac1T$ convergence to zero (regardless of $k$). 
This type of convergence appears as a straight line on the log-log
plot on Figure~\ref{f:experiment}.

Recently, \cite{unbiased-estimates-journal} developed the first method
for constructing \emph{unbiased} estimators in the case where $\dxy$
is uniform over a fixed design $(\X,\y)$. This method, which we will
refer to as
\emph{discrete volume sampling}, jointly draws a subset  
$S\subseteq [n]$ of $k$ rows of the design matrix $\X$ with
probability proportional to $\det(\X_S^\top\X_S)$, where $\X_S$
denotes the submatrix of $\X$ with rows indexed by $S$. For this
distribution, the linear least squares estimator 
$\wh=\X_S^\dagger\y_S$ is unbiased, i.e.,
$\E[\wh]=\w^*= \X^\dagger\y$, where $\X^\dagger$ denotes the
Moore-Penrose pseudoinverse.
Indeed, if we volume sample the set $S$ of size 1 in the example problem
\eqref{eq:1-dim}, then
$\E[\wh]=\frac{x_1^2}{\sum_ix_i^2}\,\frac{x_1y_1}{x_1^2}+\frac{x_2^2}{\sum_ix_i^2}\,\frac{x_2y_2}{x_2^2}
= \frac{\sum_ix_iy_i}{\sum_ix_i^2} = \w^*$.

\subsection{Our contributions}

\paragraph{Contribution 1: {\it Unbiased estimator for random design
  regression}}
Our first contribution in this paper is proposing a new unbiased
estimator for arbitrary distributions $\dxy$
(i.e., not just uniform over a fixed design matrix). Let the sample
$\x_1,\ldots,\x_k\in\R^d$ be drawn jointly with probability
proportional to $\det(\sum_{i=1}^k \x_i\x_i^\top)\, \dx^k (\x_1,\ldots,\x_k)$,
i.e., we reweigh the $k$-fold i.i.d.\ distribution $\dx^k$ by the
determinant of the sample covariance.
We refer to this as \emph{volume-rescaled sampling} from $\dxk$ and denote it
as $\vskx$. In this general context, we are able to prove that for
arbitrary distributions $\dx$ and $\dyx$, volume-rescaled sampling
produces unbiased linear least squares estimators (Theorem \ref{t:unbiased}). This result does
not follow from the fixed design analysis, and in obtaining it we
derive novel extensions of fundamental expectation identities for the
determinant of a random matrix. In the process, we develop a new
tool kit for computing expectations under volume-rescaled sampling,
which includes new expectation formulas for sampled 
pseudoinverses, inverses and adjugates.

\paragraph{Contribution 2: {\it Correcting the bias of
    i.i.d.~sampling}}
The fact that volume-rescaled sampling of size $k\ge d$ \emph{always} produces unbiased estimators
of the target $\w^*$ stands in contrast to i.i.d.\ sampling from $\dx$
which generally fails in this regard. Yet surprisingly, we show
that a volume-rescaled sample of any size $k\ge d$ is essentially 
composed of an i.i.d.\ sample of size $k-d$ from $\dx$ plus a volume-rescaled sample
of size $d$ (Theorem \ref{l:composition}). This means that the linear least squares estimator of such
\emph{composed} sample is also unbiased. Thus, as an immediate corollary of
Theorems \ref{l:composition} and \ref{t:unbiased} we reach the
following remarkable conclusion:

\emph{Even though i.i.d.~sampling typically results in a biased least
  squares estimator, adding a volume-rescaled sample 
of size $d$ to the i.i.d.~sample eliminates that bias altogether:}

\vspace{2mm}
\centerline{
  \fbox{
    \quad\quad\begin{minipage}{30em}
      \vspace{-2mm}
\begin{align*}
&\text{\underline{i.i.d.} sample}& &(\x_1^\top,y_1),\ldots,(\x_k^\top,y_k) \sim D^k\\
&\text{sol. for \underline{i.i.d.} sample}&&
                                             \wbh=\argmin_\w \sum_{i}(\x_i^\top\w-y_i)^2 \\[2mm]
\hline \\[-3mm]
&\text{\underline{volume}-rescaled sample}\quad& &\overset{\text{$d$ points}}{\xbb_{1}^\top,\dots,\xbb_{d}^\top}\ \sim\ \det\!
\text{\fontsize{9}{9}\selectfont$\begin{pmatrix}
    -\xbb_{1}^\top-\\
\dots\\
-\xbb_{d}^\top-
\end{pmatrix}^{\!\!\!2}$}\!
\cdot \dx^d\ \ \, \text{\scriptsize($d$ - input dimension)}\\
&\text{query responses}&&\yb_{i}\ \sim\ \dxy_{{\cal Y}|\xbb_{i}},\quad \forall_{i=1..d} \\
&\text{sol. for \underline{i.i.d} + \underline{volume}}&&
    \wbt=\argmin_\w \Big\{\sum_{i}(\x_i^\top\w-y_i)^2 +\sum_{i}(\xbb_i^\top\w-\yb_i)^2 \Big\}
\end{align*}
\hrule\vspace{4mm}
\centerline{\textit{Our result:}\quad $\E[\wbt] = \w^*$ even though typically $\E[\wbh] \neq \w^*$}
\vspace{2.5mm}
\end{minipage}\quad\quad
}}
\vspace{3mm}

Indeed, in the simple Gaussian experiment used for Figure~\ref{f:experiment},
the estimators produced from i.i.d.\ samples augmented with
a volume-rescaled sample of size $d$ (dashed blue curves) become unbiased (straight lines).
To get some intuition, let us show how the bias disappears
in the one-dimensional fixed design case where $\dx$ is a uniform
sample from $\{(x_1,y_1),\dots,(x_n,y_n)\}$.
In this case, reweighing the probability of just the first sampled
point by its square already results in an unbiased estimator.
Let $\wbh$ be the least squares estimator computed from 
$(x_{i_1},y_{i_1}),\dots,(x_{i_k},y_{i_k})$ with all indices sampled uniformly from
$[n]$. Now, suppose that we replace $i_1$ with $i_1'$ sampled
proportionally to $x_{i_1'}^2$, and denote the modified estimator
as $\wbt$. Due to symmetry, it makes no difference which index
we choose to replace, so
\begin{align*}
\E\big[\wbt\big] = \E\Big[\tfrac{x^2_{i_1}}{\sum_jx_j^2}\,\wbh\Big] =
  \frac1k\sum_{t=1}^k\E\Big[\tfrac{x^2_{i_t}}{\sum_jx_j^2}\,\wbh\Big]
  = \frac{\E[\frac1k(\sum_t x_{i_t}^2)\,\wbh]}{\sum_jx_j^2}.
\end{align*}
By definition of the least squares estimator,
$\E[\frac1k(\sum_t x_{i_t}^2)\,\wbh] =
\E[\frac1k\sum_tx_{i_t}y_{i_y}]=\sum_jx_jy_j$, from which it follows
that $\E[\wbt]=\w^*.$ This simple argument at once shows the
unbiasedness of $\wbt$ and the \emph{composition} property
discussed in the previous paragraph. In higher dimensions, the analysis
gets considerably more involved, but it follows a similar outline.

\paragraph{Contribution 3: {\it Near-optimal expected loss bound}}
Perhaps surprisingly, volume-rescaled sampling may not
lead to estimators with near-optimal loss guarantees: We show that for
any $k\geq d$ there are
distributions $D$ for which volume-rescaled sampling of size $k$
results in the linear least squares estimator
having loss at least twice as large as the optimum
loss (with probability at least 0.25).
However, we remedy this bad behavior by composing a volume-rescaled
sample of size $d$ with an i.i.d.~leverage score sample of size $k-d$. This composition
achieves the following feat: It does not affect the unbiasedness of
the estimator and, and it leads to good approximation properties.
Specifically, in Theorem \ref{t:loss} we show that
$k=O(d \log d + d/\epsilon)$ points are sufficient to construct an
estimator $\wbh$ such that:
\begin{align*}
  \E[\wbh]=\w^*\quad\text{and}\quad\E\big[L_{\dxy}(\wbh)\big]\leq(1+\epsilon) L_{\dxy}(\w^*).
\end{align*}
Note that an analogous loss bound is achievable for vanilla i.i.d.\ leverage score
sampling, but (1) the estimators produced from leverage score sampling
are biased, and (2) the expected loss bound holds only if we condition
on a certain high-probability event (both of those are
significant issues, e.g., in the context of model averaging). To show the
expected loss bound that holds without conditioning and for an unbiased estimator, we
break the analysis into 
two cases, depending on whether the high-probability event occurs. When
it does not, then our analysis crucially relies on the expectation formulas we 
develop for volume-rescaled sampling. Note that the only expected loss bound
previously developed for a volume-based sampling distribution
was limited to fixed design, and required $d^2/\epsilon$ points to obtain
an approximation factor of $1+\epsilon$ \citep{unbiased-estimates-journal}. To our
knowledge, that analysis does not easily extend to $k>d$, which is why
our techniques are radically different.

\paragraph{Contribution 4: {\it Accelerated sampling algorithms}}
Our work also leads to sampling algorithms which significantly improve
on the state-of-the-art time complexity of volume-rescaled sampling,
both in the fixed and random design settings, with further algorithmic
implications for the broader class of determinantal point processes
(see Section \ref{s:app-dpp}). We achieve this by
introducing a new technique called \emph{distortion-free intermediate sampling}:
We first sample a larger pool of points 
based on approximate i.i.d.\ leverage scores and then down-sample from
that pool to construct the volume-rescaled sample. 
We use rejection sampling for the down-sampling step to ensure
exactness of the resulting overall sampling
distribution. Surprisingly, this does not adversely affect the
complexity because of the provably high acceptance rate during
rejection sampling (see Theorem \ref{t:det}).

When distribution $\dxy$ is defined by a fixed design $(\X,\y)$ with $n\gg
d$ data points, then, in Theorem \ref{t:finite}, we improve
upon the time complexity of discrete volume sampling from $O(nd^2)$ to
$O(nd\log n + d^4\log d)$. This cost is nearly-linear in the size of the
dataset and, for the first time, better than solving the full least
squares problem directly, which takes $O(nd^2)$ time.
Importantly, most of the cost 
in the new algorithm  comes from
preprocessing, and the actual sampling takes only $O(d^4)$ time,
i.e., independent of the data size,  which is useful when we wish to
produce multiple independent samples.  Combining this with the new
loss bound, we get the following improvements for obtaining an 
unbiased subsampled estimator with loss within $1+\epsilon$ of the optimum:
The sample size $k$ is reduced from $O(d^2/\epsilon)$ to
$O(d\log d+d/\epsilon)$ and the time complexity from $O(nd^3/\epsilon)$ to 
$O(nd\log n + d^4\log d+d^3/\epsilon)$. 

Remarkably, we show that \emph{exact} volume-rescaled sampling is
possible even when distribution $\dx$ is unknown (and possibly
continuous) and we only have oracle access to it.
In this setting, the size of the intermediate sample that is necessary to achieve this grows 
linearly with a certain condition number of the
distribution (this is likely unavoidable in general). Finally,
in the special case where $\dx$ is a multivariate Gaussian distribution with unknown
covariance, we use a different approach to show that only $d+2$
additional samples from $\dx$ are needed to 
modify a sample from $\dxk$ so that it becomes a volume-rescaled
sample of size $k$.

  \subsection{Applications of our results}
  \label{s:app}
While studying unbiased estimators for least squares regression is an
old and classical problem, our new results have significant
implications for modern data science, both from a computational and
statistical perspective. We outline these implications below, along
with some of the recent related work.

\subsubsection{Model averaging}
Model averaging is a standard technique for boosting the accuracy of a
subsampled estimator 
by constructing multiple independent copies and then averaging
them. This is particularly effective in parallel and distributed
environments, where the computational cost of constructing multiple
estimators is the same as the cost of computing one estimator. While
model averaging has been proposed as a strategy for least squares
regression \citep[e.g., see][]{wang2017sketched}, the bias which
arises for commonly used estimators (e.g., 
based on i.i.d.~sampling) constitutes a significant bottleneck for
this approach.

Our framework for constructing unbiased estimators with
expected loss bounds is uniquely suited for addressing the
problem of estimation bias in model averaging. To see this, consider a
least squares estimator $\wbh$ that 
satisfies both the unbiasedness property, $\E[\wbh]=\w^*$, and
near-optimal expected loss,
$\E[L_{\dxy}(\wbh)]\leq(1+\epsilon)L_{\dxy}(\w^*)$. It immediately
follows that if we construct $m$ independent copies
$\wbh_1,...,\wbh_m$ of $\wbh$, then the averaged estimator satisfies:
\begin{align*}
  \E\big[L_{\dxy}(\wbt)\big]\leq
  \Big(1+\frac\epsilon m\Big)\,L_{\dxy}(\w^*),\quad\text{where}\quad
  \wbt=\frac1m\sum_{i=1}^m\wbh_i.
\end{align*}
Consider for instance the setting where distribution $\dxy$ is defined
by a fixed  design $(\X,\y)$ with $n$ data points. Here, we can use
parallel averaging to boost the accuracy of a subsampled least squares
estimator from $\epsilon$ to $\epsilon/m$ at virtually no additional
computational cost. However, for this to be practical, (1) the
estimator must be unbiased, and (2) the computational cost of
constructing the estimator must be less than $O(nd^2)$, the cost of
solving least squares exactly. We develop the first such estimator,
by not only providing an improved expected loss bound for an unbiased
estimator, but also reducing the  computational cost to
$O(nd\log n+d^4\log d)$, which is much less than $O(nd^2)$ when $n$
is sufficiently larger than $d$. Finally, we point out that our
volume-based sampling algorithms for model averaging have recently proven relevant
in the context of model averaging for distributed second-order optimization and
distributed ridge regression, among others
\citep{surrogate-sketching}.

\subsubsection{Experimental design}

A natural application for volume-rescaled sampling algorithms comes in
the context of experimental design \citep[a.k.a.~optimal design of
experiments; see][]{optimal-design-book,pukelsheim2006optimal}. Here, the goal is to select a
small set of data points for which 
the least squares estimator minimizes a given optimality criterion,
typically related to some notion of variance. Classical
experimental design imposes statistical assumptions on the response
model, making the least squares estimator trivially unbiased
regardless of how we select the set of points. Volume-rescaled
sampling provides a way of preserving the unbiasedness property while
relaxing the assumptions on the responses. In particular, this leads
to a fundamental connection between the expected loss and the
prediction variance, a standard optimality criterion (V-optimality) in experimental
design. Namely, for an estimator $\wbh$ such that $\E[\wbh]=\w^*$,
letting $\x^\top\sim\dx$, we have:
\begin{align*}
  \underbrace{\E\big[L_{\dxy}(\wbh)\big]-L_{\dxy}(\w^*)}_{\text{Excess
  loss}}\hspace{2mm} =
  \hspace{-2mm}\underbrace{\Var[\x^\top\wbh]}_{\text{Prediction variance}}\hspace{-4.5mm}.
\end{align*}
In a recent follow-up work, \cite{minimax-experimental-design} used
these ideas to develop a general framework for experimental design,
which bridges the gap between the statistical perspective (linear
response model) and the setting studied here (arbitrary responses),
relying on our volume-rescaled sampling tool kit (in particular,
Theorem~\ref{l:composition}). Furthermore, our
strategy of combining volume-based sampling methods with
i.i.d.~importance sampling (e.g., leverage scores) has
proven  instrumental in developing randomized rounding methods for
efficiently solving a range of experimental design problems (including
A/C/D/V-optimal design, and Bayesian experimental design), 
drastically reducing their computational cost and improving the
approximation quality, both for discrete
\citep{proportional-volume-sampling,bayesian-experimental-design} and
continuous domains \citep{poinas2020proportional}. 

\subsubsection{Determinantal point processes}
\label{s:app-dpp}

Volume-rescaled sampling of size $d$ (i.e., $\vsdx$, see
Definition~\ref{d:vol}) belongs to a family of distributions called
Determinantal Point Processes (DPPs), which has been studied
extensively in many computational areas as a tractable model of diverse
sampling, including in randomized numerical linear algebra
\citep{dpps-in-randnla}, machine learning \citep{dpp} and
statistics \citep{dpp-statistics}; here we cite selected surveys that
provide a thorough literature review. Our results lead to direct improvements in the
computational cost of sampling for an important class of so-called
Projection DPPs. We outline this here for the case
where the support of the distribution is a finite set.

Determinantal point processes are most commonly defined as a distribution over
subsets $S\subseteq \{1,...,n\}$,
parameterized by a positive semidefinite $n\times n$ kernel matrix $\K$ with all
eigenvalues in $[0,1]$, so that a sample $S\sim \DPP(\K)$ satisfies:
\begin{align*}
  \Pr(T\subseteq S) = \det(\K_{T,T}),\quad\text{for all}\quad
  T\subseteq \{1,...,n\}.
\end{align*}
Here, $\K_{T,T}$ denotes the $|T|\times |T|$ submatrix of $\K$ indexed
by $T$. When $\K$ is a projection matrix, i.e., all of its eigenvalues
are in $\{0,1\}$, then this is called a Projection DPP
and the size of the sampled set $S$ is equal to the rank of $\K$. An
alternate parameterization of a Projection DPP that appears in the
literature relies on an $n\times
d$ matrix $\X$ such that the kernel $\K=\X\X^\dagger$ is the rank $d$ projection
onto the column span of $\X$. By letting $\dx$ be
uniform over the rows of $\X$, we obtain that $\vsdx$ is the
distribution of $\X_S$ for $S\sim\DPP(\K)$, up to a permutation of the
rows (here, $\X_S$ indicates the rows of $\X$ indexed by $S$).

Prior to our work, the cost of generating each sample from a given
Projection DPP was $O(nd^2)$, both for the $\X$ and the $\K$
parameterizations, by using the algorithm of
\cite{dpp-independence}. Our technique of \emph{distortion-free
  intermediate sampling} drastically reduces these costs when $n\gg
d$. If we are using the $\X$ parameterization, then after an initial
preprocessing cost of $O(nd\log n + d^4\log d)$, we can sample from a
Projection DPP in time $O(d^4)$. When given an $n\times n$ projection
matrix $\K$ of rank $d$, we can sample from $\DPP(\K)$ in time
$O(d^6)$. Here, the preprocessing step involves simply reading the
diagonal of $\K$ in $O(n)$ time. In both cases, these are the
first $\poly(d)$ time sampling algorithms for Projection DPPs. Follow-up works
\citep{dpp-intermediate,dpp-sublinear,alpha-dpp} have extended
distortion-free intermediate sampling to the class of L-ensemble DPPs,
and more recently even beyond DPPs, to larger distribution families
such as strongly Rayleigh measures, which have many applications in machine
learning and theoretical computer science \citep{isotropy-log-concave,domain-sparsification}.

\subsection{Related work}
A discrete variant of volume-rescaled sampling of size $k= d$ was introduced to computer
science literature by \cite{pca-volume-sampling} for sampling from a
finite set of $n$ vectors, with algorithms
given later by
\cite{efficient-volume-sampling,more-efficient-volume-sampling}. A first
extension to samples of size $k>d$ is due to \cite{avron-boutsidis13},
with algorithms by
\cite{dual-volume-sampling,unbiased-estimates-journal,leveraged-volume-sampling},
and additional applications in experimental design explored by
\cite{tractable-experimental-design,proportional-volume-sampling,symmetric-polynomials}.
Prior to this work, the best known time complexity for this sampling
method, called here \emph{discrete volume sampling}, was $O(nd^2)$, as
shown by \cite{unbiased-estimates-journal}. Here,
we give an $O(nd\log n+d^4\log d)$ time algorithm.

As discussed in Section \ref{s:app-dpp}, volume-rescaled sampling of
size $d$ is also known in the literature as a type of
determinantal point process, called Projection DPP \cite[to learn
more, see][]{dpps-in-randnla}. Projection DPPs
arise in many computational tasks outside of linear regression, such
as dimensionality reduction \citep{dpp-css},
numerical integration \citep{dpp-mcmc} and graph algorithms
\citep{guenoche1983random}, therefore, efficient sampling algorithms for these
distributions are of significant interest \citep{dpp-zonotope}. More
broadly, determinantal point processes have found machine learning
applications in recommendation systems \citep[e.g.,][]{dpp-shopping}, data
summarization \citep[e.g.,][]{dpp-video}, stochastic
optimization \citep[e.g.,][]{dpp-minibatch,randomized-newton}, and many others
\citep[see][]{dpp}. The algorithmic technique of distortion-free
intermediate sampling, introduced in this work, has already been
applied beyond Projection DPPs \citep{dpp-sublinear,alpha-dpp}, which makes
it relevant to all of these applications.

The unbiasedness of least squares estimators under volume-based
distributions was first explored in the context of sampling from finite
datasets by \cite{unbiased-estimates-journal}, drawing on observations
of \cite{bental-teboulle}. Focusing on small sample sizes,
\cite{unbiased-estimates-journal} proved multiplicative bounds for the
expected loss under sample size $k=d$ with discrete volume
sampling. Because the produced estimators are unbiased,
averaging $d/\epsilon$ such estimators results in an 
unbiased estimator based on a sample of size $k =
d^2/\epsilon$ with expected loss at most $1+\epsilon$ times the
optimum at a total sampling cost of $O(nd^2\cdot d/\epsilon)$. 
In contrast, our new techniques achieve an unbiased estimator with
sample size $O(d\log d+d/\epsilon)$ and time complexity 
$O(nd\log n + d^4\log d+d^3/\epsilon)$.
\cite{unbiased-estimates-journal} also showed additional variance bounds for discrete volume
sampling under the assumption that the responses are linear functions of the input
points plus white noise.  We extend them here to arbitrary
volume-rescaled sampling w.r.t.\ a distribution.

Other techniques applicable to our linear regression
problem include leverage score
sampling~\citep{drineas2006sampling} and algorithms based on spectral
sparsification~\citep[e.g.,][]{chen2017condition,kacham2020optimal}.
Leverage score sampling is an i.i.d.\ sampling procedure which achieves
loss bounds nearly matching the ones we obtain here for volume-rescaled
sampling, however it produces biased estimators 
and experimental results (see Section \ref{s:experiments}) show that
it has weaker performance for small sample sizes. 
A different and more elaborate sampling technique based on spectral
sparsification~\citep{batson2012twice,lee2015constructing} 
was recently shown to be effective for linear
regression~\citep{chen2017condition}: They
show that $O(d/\epsilon)$ samples suffice to produce
an estimator with expected loss $(1+\epsilon) L_{\dxy}(\w^*)$.
However this method also does not
produce unbiased estimators, which is a primary concern of this paper
and desirable in many settings, as discussed in Section \ref{s:app}.

\paragraph{Conference versions of this paper}
Our work greatly expands and generalizes the results
of two conference papers:
\cite{leveraged-volume-sampling,correcting-bias}.
The first paper introduced the leverage score rescaling method in the limited
context of discrete volume sampling, developed the new
intermediate sampling algorithm, and proved the 
$O(d \log d +d/\epsilon)$ sample size bound for obtaining
an unbiased estimator with a $(1+\epsilon)$ loss bound. Note that the
original loss bound was shown to hold with a constant probability, as
opposed to in expectation, which is a significant obstacle to
using it in the context of model averaging. The second paper showed
how to correct the bias of i.i.d.\ 
sampling using a small size $d$ volume-rescaled sample and refined
the analysis of intermediate sampling. The current
paper strengthens the loss bound of the first
conference paper to the desired in-expectation form (this requires new
technical tools such as Lemma \ref{l:decorrelation}), and generalizes
it to the case of an arbitrary data distribution $D$ (Theorem~\ref{t:loss}).
In the process, we develop new formulas for
the expectation of the inverses and pseudoinverses of random matrices
under volume-rescaled sampling (Theorems \ref{t:pseudoinverse} and
\ref{t:square-inverse}) and characterize the marginals of this
distribution (Theorem \ref{t:marginal}). We also extend the
decomposition property of volume-rescaled sampling given in the second
conference paper (Theorem \ref{l:composition}), 
thereby greatly simplifying our proofs.
Finally, we give a new lower bound that complements our main results
(Theorem \ref{t:bias}).

\subsection*{Outline}
In Section \ref{s:vol} we give our basic definition of
volume-rescaled sampling w.r.t.\ an arbitrary distribution over the
examples and prove the basic expectation formulas as well as the fundamental 
decomposition property which is repeatedly used in later sections.
We also show that the linear least squares estimator is unbiased under
volume-rescaled sampling.
The decomposition property is then used in Section \ref{s:loss}
to show that volume-rescaled leverage score sampling produces
a linear least squares estimator with loss at most $(1+\epsilon)L_{\dxy}(\w^*)$ for
sample size $O(d \log d+d/\epsilon)$. 
The lower bounds in Section \ref{s:lower} show that
i.i.d.~sampling leads to biased estimators and plain volume-rescaled
sampling does not have $1+\epsilon$ loss bounds.

In Section \ref{s:algs} we show that if $\dx$ is normal,
then $d+2$ additional samples can be used to construct a
volume-rescaled sample of size $k$. 
When the distribution $\dx$ is
arbitrary but we are given an approximation of the covariance
matrix of $\dx$, then a special variant of approximate leverage score
sampling can be used to construct a
larger intermediate sample that contains a volume-rescaled sample with high
probability. We then show how to construct an approximate
covariance matrix from additional samples from $\dx$. 
The number of samples we need grows linearly with a variant of a
condition number of $\dx$.
Finally we show how the new intermediate sampling
method introduced here
leads to improved time bounds in the fixed design case.

In Section \ref{s:experiments} we compare the performance of the algorithms 
discussed in this paper on some real datasets.
We conclude with an overview and some open problems in Section \ref{s:conclusion}.

\section{Volume-rescaled sampling}
\label{s:vol}
In this section, we formally define volume-rescaled sampling and
describe its basic properties. We then use it to introduce the central concept of
this paper: an unbiased estimator for random design least squares
regression.

\textbf{Notation.} \  Let $\a_i^\top$ denote the $i$th row of a matrix
$\A$, and let $\A_S$ be the submatrix of $\A$ containing rows of $\A$
indexed by the set $S$. Also, we use $\A_{-i}$, $\A_{\,:,-j}$ and $\A_{-i,-j}$ to
denote matrix $\A$ with $i$th row removed, $j$th column removed, and
both removed, respectively. When $\A$ is $d\times d$, we use $\adj(\A)$ to denote the
adjugate of $\A$ which is a $d\times d$ matrix such that
$\adj(\A)_{ij}=(-1)^{i+j}\det(\A_{-j,-i})$.
We use $\dx$ to denote the distribution of a $d$-variate
random row vector $\x^\top$ and we assume throughout that $\sigd=\E[\x\x^\top]$
exists and is full rank. Distribution $D$ is called $(d,1)$-variate if it
produces a joint sample $(\x^\top,y)$ where $\x\in\R^d$ and $y\in\R$.
A random $k\times d$ matrix consisting of $k$ independent rows distributed as
$\dx$ is denoted $\X\sim\dxk$. We also use the following standard shorthand:
  $\ktd = \frac{k!}{(k-d)!} = k\,(k-1)\dotsm(k-d+1)$.
\begin{definition}\label{d:vol}
    Given a $d$-variate distribution $\dx$ and any $k\geq d$, we define
  volume-rescaled size $k$ sampling from $\dx$ as a $k\times
  d$-variate probability measure $\vskx$ such that for any event $A\subseteq
  \R^{k\times d}$ measurable w.r.t.~$\dxk$, its probability is 
    \vspace{-1mm}
  \begin{align*}
    \vskx(A) \ &\defeq\  
	  \frac{\E\big[ \det(\X^\top\X) \cdot\one_{ [\X\in A]} \big]} 
	  {\E\big[\det(\X^\top\X)\big]},\quad\text{
                 where }\ \X\sim\dxk.
  \end{align*}
\end{definition}
For $k=d$, this volume-rescaled sampling is a type of Determinantal Point
Process known as Projection DPP \citep[see Section \ref{s:app-dpp}; to
learn more, see][]{dpps-in-randnla}. The case of $k>d$ can be  
viewed as an extension of that family of distributions.
\begin{remark}\label{r:expectations}
Distribution $\Xb\sim\vskx$ is well-defined whenever
  $\sigd=\E_{\dx}[\x\x^\top]$ is finite and full rank. Also, for any $F:\R^{k\times
    d}\!\rightarrow\!\R$, random variable $F(\Xb)$ is measurable if
  and only if $\det(\X^\top\X)F(\X)$ is measurable
  for $\X\sim\dxk$, and then it follows that
  \begin{align*}
    \E_{\Xb}[F(\Xb)] = \frac{\E_{\X}[\det(\X^\top\X)
    F(\X)]}{\E_{\X}[\det(\X^\top\X)]} =
    \frac{\E[\det(\X^\top\X) F(\X)]}{\ktd\,\det(\sigd)}.
  \end{align*}
  \end{remark}
The remark follows from a key lemma which is an
extension of a classic result by \cite{expected-generalized-variance},
who essentially showed \eqref{eq:l-det} below when $\A=\B$, but not \eqref{eq:l-adj}.
Part \eqref{eq:l-det} of the lemma lets us rewrite the normalization
of volume-rescaled sampling $\vskx$ as: 
\begin{align*}
    \E_{\X} \big[\det(\X^\top\X)\big] =
  (k^{\underline{d}}/k^d)\cdot\det\!\big(\E[\X^\top\X]\big) =
  k^{\underline{d}}\cdot\det\!\big(\sigd\big)
  ,\quad\text{ where }\Sigmab_{\dx}=\E_{\dx}[\x\x^\top].
\end{align*}

\begin{lemma}
  \label{l:determinant}
  If the rows of the random matrices $\A,\B\in\R^{k\times d}$
  are sampled as an i.i.d.~sequence of $k$ pairs of joint random vectors $(\a_i,\b_i)$, then
\begin{align}
  k^d\,\E \big[\det(\A^\top\B)\big]
  &= k^{\underline{d}}\,\det\!\big(\E[\A^\top\B]\big)
    &&\text{for any }k\geq d,\label{eq:l-det}\\
    k^{d-1}\,\E\big[\adj(\A^\top\B)\big]
  &=k^{\underline{d-1}}\,\adj\!\big(\E[\A^\top\B]\big)&&\text{for any
    }k\geq d-1.\label{eq:l-adj}
\end{align}
\end{lemma}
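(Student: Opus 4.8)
The plan is to prove \eqref{eq:l-det} first via the Cauchy--Binet formula, and then to derive \eqref{eq:l-adj} by applying \eqref{eq:l-det} entrywise to the $(d-1)\times(d-1)$ minors of $\A^\top\B$.

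For \eqref{eq:l-det}, I would start from Cauchy--Binet, which writes $\det(\A^\top\B)=\sum_{S}\det(\A_S)\det(\B_S)$, where the sum runs over the $\binom{k}{d}$ size-$d$ subsets $S\subseteq[k]$ and $\A_S,\B_S$ are the corresponding $d\times d$ row submatrices. Taking expectations and using linearity reduces the problem to evaluating $\E[\det(\A_S)\det(\B_S)]$ for a single fixed $S$. Write $\M\defeq\E[\a_1\b_1^\top]$, so that $\E[\A^\top\B]=k\M$. The key computation is to expand both determinants by the Leibniz permutation formula and take expectations term by term: since the $d$ row-pairs indexed by $S$ are independent (distinct indices) but identically distributed, the expectation of each product factorizes into entries of $\M$, namely $\E\big[\prod_{j}(\a_{i_j})_{\sigma(j)}(\b_{i_j})_{\tau(j)}\big]=\prod_j\M_{\sigma(j),\tau(j)}$, which uses the joint law within a row for the matching index and independence across rows. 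Re-indexing the resulting double sum over permutations $\sigma,\tau$ by $\rho=\tau\sigma^{-1}$ collapses it to $d!\det(\M)$, a value independent of $S$. Summing over all $\binom{k}{d}$ subsets then gives $\E[\det(\A^\top\B)]=\binom{k}{d}\,d!\,\det(\M)=k^{\underline{d}}\det(\M)$, and substituting $\det(\M)=k^{-d}\det(\E[\A^\top\B])$ yields \eqref{eq:l-det}.

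For \eqref{eq:l-adj}, I would work entrywise. By definition $\adj(\A^\top\B)_{ij}=(-1)^{i+j}\det\big((\A^\top\B)_{-j,-i}\big)$, and the crucial observation is that deleting row $j$ and column $i$ of $\A^\top\B$ is the same as deleting the corresponding columns of the factors, i.e.\ $(\A^\top\B)_{-j,-i}=\A_{:,-j}^\top\,\B_{:,-i}$, where $\A_{:,-j}$ and $\B_{:,-i}$ are $k\times(d-1)$ matrices whose rows are still i.i.d.\ pairs. Applying \eqref{eq:l-det} with $d$ replaced by $d-1$ to this pair gives $k^{d-1}\E[\det((\A^\top\B)_{-j,-i})]=k^{\underline{d-1}}\det\big((\E[\A^\top\B])_{-j,-i}\big)$; multiplying by $(-1)^{i+j}$ recovers the $(i,j)$ entry of the claimed identity, and since $i,j$ were arbitrary the matrix identity \eqref{eq:l-adj} follows. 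The requirement $k\ge d-1$ is exactly what \eqref{eq:l-det} needs in dimension $d-1$.

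I expect the main obstacle to be the single-subset computation $\E[\det(\A_S)\det(\B_S)]=d!\det(\M)$: the bookkeeping in the permutation expansion, the careful use of the i.i.d.-across-rows but jointly-distributed-within-a-row structure, and the re-indexing $\rho=\tau\sigma^{-1}$ that identifies the double sum as $d!\det(\M)$ (and so produces the $\binom{k}{d}d!=k^{\underline{d}}$ factor). Everything else is routine once this is in place. One could alternatively phrase this step as $\E[\det(\A_S\B_S^\top)]$ and invoke a generalized-variance identity, but the direct Leibniz expansion is the most self-contained route and makes the factor $d!$ transparent.
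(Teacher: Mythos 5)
Your proposal is correct and follows essentially the same route as the paper: the identical Leibniz-expansion computation showing $\E[\det(\A_S)\det(\B_S)]=d!\,\det(\E[\a_1\b_1^\top])$ for a $d\times d$ block, and the identical entrywise reduction of \eqref{eq:l-adj} to \eqref{eq:l-det} in dimension $d-1$ via $(\A^\top\B)_{-j,-i}=\A_{:,-j}^\top\B_{:,-i}$. The only cosmetic difference is that you treat $k>d$ by summing the Cauchy--Binet expansion over all $\binom{k}{d}$ subsets directly, whereas the paper packages the same fact as an induction on $k$ using $\det(\A^\top\B)=\frac{1}{k-d}\sum_{i=1}^k\det(\A_{-i}^\top\B_{-i})$, itself a consequence of Cauchy--Binet.
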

\begin{proof}
First, suppose that $k=d$, in which case $\det(\A^\top\B) =
\det(\A)\det(\B)$. 
Recall that by  definition the determinant can be written as:
  \begin{align*}
\det(\C) = \sum_{\sigma\in \Sd}\sgn(\sigma)\prod_{i=1}^dc_{i,\sigma_i},
\end{align*}
where $\Sd$ is the set of all permutations of $(1..d)$, and
$\sgn(\sigma) = \sgn\big((1..d),\sigma\big)\in\{-1,1\}$ is the parity
of the  number of swaps from $(1..d)$ to $\sigma$. Using this formula
and denoting  $c_{ij} = 
\big(\E[\A^\top\B]\big)_{ij}=d\,\E[a_{1i}b_{1j}]$, we can rewrite the expectation as: 
\begin{align*}
d^d\, \E\big[\! \det(\A)\det(\B)\big]\!
&=\!\!\!\sum_{\sigma,\sigma'\in \Sd}\!\!\!\sgn(\sigma)\sgn(\sigma')
\prod_{i=1}^d \E\big[d\cdot a_{i\sigma_i}b_{i\sigma'_i}\big]
\\
&= \sum_{\sigma\in \Sd}\sum_{\sigma'\in \Sd}\sgn(\sigma,\sigma') \prod_{i=1}^d c_{\sigma_i\sigma'_i}\\
  &=d! \sum_{\sigma'\in \Sd}\sgn(\sigma') \prod_{i=1}^d c_{i\sigma'_i}\\
  &=d!\det\!\big(\E[\A^\top\B]\big),
\end{align*}
which proves \eqref{eq:l-det} for $k=d$. The case of
$k>d$ follows by induction via a standard determinantal formula:
\begin{align*}
\E\big[\det(\A^\top\B)\big] &\overset{(*)}{=}
  \E\bigg[\frac1{k-d}\sum_{i=1}^k\det\!\big(\A_{-i}^\top\B_{-i}\big)\bigg]
  =\frac{k}{k-d}\,\E\big[\det\!\big(\A_{-k}^\top\B_{-k}\big)\big],
\end{align*}
where $(*)$ follows from the Cauchy-Binet formula. Finally, 
\eqref{eq:l-adj} can be derived  from \eqref{eq:l-det}:
\begin{align*}
  k^{d-1}\, \E\big[\adj(\A^\top\B)_{ij}\big]
  &= k^{d-1}\, \E\big[(-1)^{i+j}\det\!\big((\A^\top\B)_{-j,-i}\big)\big]\\
  &= (-1)^{i+j}\,
    k^{d-1}\E\big[\det(\A_{\,:,-j}^\top\B_{\,:,-i})\big]\\
\text{using \eqref{eq:l-det}}\quad  &= (-1)^{i+j}\,
    k^{\underline{d-1}}\det\!\big(\E[\A_{\,:,-j}^\top\B_{\,:,-i}]\big)\\
  &=k^{\underline{d-1}}\,(-1)^{i+j}\det\!\big((\E[\A^\top\B])_{-j,-i}\big)\\
  &=k^{\underline{d-1}}\,\adj\!\big(\E[\A^\top\B]\big)_{ij},
\end{align*}
where recall that $\A_{\,:,-j}\in\R^{k\times {d-1}}$ denotes matrix $\A$ with the $j$th column removed.
\end{proof}
\subsection{Basic properties}
In this section we look at the relationship between the random matrix
$\X\sim\dxk$ of an i.i.d.~sample from $\dx$ and the corresponding
volume-rescaled sample $\Xb\sim\vskx$. Even though the rows of $\Xb$
are not independent, we show that they contain among them an
i.i.d.~sample distributed according to $\dx^{k-d}$.
\begin{theorem}\label{l:composition}
Let $\Xb\sim\vskx$ and $S\subseteq[k]$ be a random size $d$ set
s.t.~$\Pr(S\,|\,\Xb)\propto \det(\Xb_S)^2$.
Then  $\Xb_S\sim\vsdx$, $\Xb_{[k]\backslash S}\sim \dx^{k-d}$, $S$ is
(marginally) uniformly random, and the three 
random variables $\Xb_S$, $\Xb_{[k]\backslash S}$, and $S$ are mutually independent.
\end{theorem}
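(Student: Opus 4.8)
The plan is to compute the joint law of the pair $(\Xb,S)$ explicitly as a density against the reference product measure $\dxk$, and then to read off all four claims from a single factorization. By Definition~\ref{t:cauchy-binet} together with the normalization $\E[\det(\X^\top\X)]=\ktd\det(\sigd)$ (which follows from \eqref{eq:l-det}), the volume-rescaled law has density $\frac{d\,\vskx}{d\,\dxk}(\X)=\det(\X^\top\X)/(\ktd\det(\sigd))$. The sampling rule for $S$ gives $\Pr(S\,|\,\X)=\det(\X_S)^2/\big(\sum_{S'}\det(\X_{S'})^2\big)$, and the Cauchy-Binet formula yields $\sum_{S'\in\binom{[k]}{d}}\det(\X_{S'})^2=\det(\X^\top\X)$, so $\Pr(S\,|\,\X)=\det(\X_S)^2/\det(\X^\top\X)$. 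The crucial observation is that multiplying these two factors cancels the $\det(\X^\top\X)$ term, leaving a joint density against $\dxk\otimes(\text{counting on }\binom{[k]}{d})$ equal to
\[
  p(\X,S)=\frac{\det(\X^\top\X)}{\ktd\det(\sigd)}\cdot\frac{\det(\X_S)^2}{\det(\X^\top\X)}=\frac{\det(\X_S)^2}{\ktd\det(\sigd)},
\]
which depends on $\X$ only through the $d$ rows indexed by $S$.

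I would then factor $p$ into three normalized pieces. Applying \eqref{eq:l-det} with $k=d$ and $\A=\B=\X_S$ gives $\E_{\dx^d}[\det(\X_S)^2]=d!\det(\sigd)$, so that $\det(\X_S)^2/(d!\det(\sigd))$ is exactly the density of $\vsdx$ against $\dx^d$; using $\ktd=\binom{k}{d}\,d!$ I rewrite
\[
  p(\X,S)=\frac{1}{\binom{k}{d}}\,\cdot\,\frac{\det(\X_S)^2}{d!\det(\sigd)}\,\cdot\,1 .
\]
Since the rows of $\X$ are i.i.d.\ under $\dxk$, for each fixed $S$ the reference measure factors as $\dx^d$ on $\X_S$ times $\dx^{k-d}$ on $\Xb_{[k]\backslash S}$, and the integrand is constant in $\Xb_{[k]\backslash S}$. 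Integrating $p$ over a product event $A\times B$ at a fixed $S_0$ then gives $\Pr(\Xb_S\in A,\ \Xb_{[k]\backslash S}\in B,\ S=S_0)=\tfrac{1}{\binom{k}{d}}\,\vsdx(A)\,\dx^{k-d}(B)$. Summing out the other two variables recovers the three marginals (in particular $S$ is uniform on $\binom{[k]}{d}$), and the product form of this joint probability is precisely the statement of mutual independence, so all four claims drop out of the same computation.

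The substance of the argument is the two analytic inputs: the Cauchy-Binet identity $\sum_S\det(\X_S)^2=\det(\X^\top\X)$, which produces the clean cancellation, and the size-$d$ specialization of \eqref{eq:l-det}, which simultaneously identifies the $\vsdx$ density and fixes its normalizing constant. The step I would treat most carefully — more a bookkeeping hazard than a deep obstacle — is the last one, because the decomposition of $\X$ into $(\X_S,\Xb_{[k]\backslash S})$ depends on $S$. One must note that reindexing rows leaves the i.i.d.\ product measure invariant, and that conditionally on each fixed $S_0$ the density factors in exactly the same way (independent of which $S_0$), so that uniformity of $S$ together with an $S$-independent conditional law of $(\Xb_S,\Xb_{[k]\backslash S})$ gives genuine mutual independence rather than merely pairwise statements.
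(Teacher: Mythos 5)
Your proposal is correct and follows essentially the same route as the paper's proof: both hinge on the Cauchy--Binet cancellation of $\det(\X^\top\X)$ between the $\vskx$ density and the conditional law of $S$, the normalization $\E_{\dx^d}[\det(\X_S)^2]=d!\,\det(\sigd)$ from Lemma~\ref{l:determinant}, and the observation that the i.i.d.\ reference measure factors identically for every fixed $S$. The paper phrases this as a direct computation of $\Pr(S\!\in\!A\wedge\Xb_S\!\in\!B\wedge\Xb_{S^c}\!\in\!C)$ while you phrase it as a factorization of the joint density, but these are the same argument.
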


Before proceeding with the proof, we would like to
discuss the implications of the theorem at a high level. 
First, observe that it allows us to
``compose'' a unique matrix $\Xb$ (which must be distributed according to $\vskx$)
from a $d$-row draw from $\vsdx$,
a $(k\!-\!d)$-row draw from $\dx^{k-d}$, 
and a uniformly drawn subset $S$ of size $d$ from $[k]$.
We construct $\Xb$ by placing the $d$ rows at row indices $S$ and
the $k-d$ rows at the remaining indices. Another way to think of the
construction of $\Xb$ is that we index the rows of $\vsdx$ from $1$ to
$d$ and the rows of $\dx^{k-d}$ from $d+1$ to $k$, and then permute
the indices by a random permutation $\sigma$:
{\large
\begin{align}
\text{volume + i.i.d.} \quad 
    &\overbrace{\x_1\dots\x_d}^{\vsdx}\,\,
                   \overbrace{\x_{d+1}\dots\dots\dots\dots\x_{k}}^{\dx^{k-d}}
		   \label{eq:volplusiid}
\\ \Updownarrow \quad\qquad 
    & \;\, \bipgraph{16}{2,6,7,10,4,15,16,12,3,11,8,14,13,9,1,5}
    \nonumber
\\  
    \vskx\qquad 
    & \; \x_{\sigma_1}\!\dots\dots\dots\dots\dots\dots\dots.\,.\,\x_{\sigma_{k}}
    \label{eq:volk}
\end{align}

Perhaps more surprisingly, given a volume-rescaled sample of size $k$ from $\dx$
(i.e., $\Xb\sim\vskx$),
sampling a set $S\subseteq[k]$ of size $d$ with probability $\propto \det(\Xb_S)^2$ (discrete
volume sampling) ``filters out'' a size $d$ volume-rescaled sample
from $\dx$
(i.e., $\Xb_S\sim\vsdx$). That sample is \emph{independent} of the remaining
rows in $\Xb$, so after reordering we recover \eqref{eq:volplusiid}.

We can repeat the steps of going ``back and forth'' between \eqref{eq:volplusiid} and
\eqref{eq:volk}. That is, we can compose a sample from $\vskx$ by
appending the size $d$ sub-sample we filtered out from $\Xb$ with its 
complement and permuting randomly,
and then again filter out a size $d$ volume sub-sample w.r.t.\ $\dx$
from the permuted sample. 
The size $d$ sub-samples produced the
first and second time are likely going to be different, but
they have the same distribution $\vsdx$.

This phenomenon can already be
observed in one dimension (i.e., $d=1$). 
In this case, \eqref{eq:volplusiid} samples one
point $x_1\sim x^2\cdot\dx$ and independently draws
$x_2,\dots,x_k\sim\dx^{k-1}$. Note that the $k$ random variables are mutually
independent but \emph{not} identically distributed. Now, if we
randomly permute the order of the variables as in \eqref{eq:volk},
then the new variables are identically distributed but \emph{not}
mutually independent. Intuitively, this is because
observing (the length of) any one of the
variables alters our belief about where the volume-rescaled sample was
placed. Applying Theorem \ref{l:composition}, we can now
``decompose'' the dependencies by sampling a singleton subset $S=\{i\}$ with probability
proportional to $x_i^2$. Even though the selected variable may not be
the same as the one chosen originally, it is distributed according to
volume-rescaled sampling w.r.t.\ $\dx$ and the remaining $k\!-\!1$ points
are i.i.d.\ samples from~$\dx$. 

\vspace{2mm}
\begin{proof}
The distribution of $S$ conditioned on $\Xb$ is the discrete volume
sampling distribution over sets of size $d$ whose normalization
constant is $\det(\Xb^\top\Xb)$ via the Cauchy-Binet formula. 
Denote $S^c=[k]\backslash S$ and let $A$, $B$ and $C$ be measurable events
for variables $S$, $\Xb_S$ and $\Xb_{S^c}$, respectively. We next show that
the three events are mutually independent and we compute their probabilities.
The law of total probability with respect to the joint distribution of
$S$ and $\Xb$, combined with Remark \ref{r:expectations} (using
$\X\sim\dxk$) implies that: 
\begin{align*}
  \Pr\big(S\!\in\! A\wedge\Xb_S\!\in\! B \wedge \Xb_{S^c}\!\in\!C\big)
  &=\E_{\Xb}\big[\,\Pr(S\!\in\! A\wedge\Xb_S\!\in\! B \wedge
    \Xb_{S^c}\!\in\!C\ |\ \Xb)\,\big]\\
  &=\frac{\E_{\X}\big[\det(\X^\top\X)\cdot
    \Pr(S\!\in\! A\wedge\Xb_S\!\in\! B \wedge \Xb_{S^c}\!\in\!C\ |\ \Xb\!=\!\X)
    \big]}{\ktd\det(\sigd)}\\
  & \overset{(a)}{=}\frac{\E\big[\det(\X^\top\X)\cdot\sum_{S\in A}
    \frac{\det(\X_S)^2}{\det(\X^\top\X)}\;\one_{[\X_S\in B]}\one_{[\X_{S^c}\in C]}
    \big]}{\ktd\det(\sigd)}\\
  &\overset{(b)}{=}\frac{\sum_{S\in A}\E\big[\det(\X_S)^2 \;\one_{[\X_S\in B]}
\;\one_{[\X_{S^c}\in C]}\big]}{\ktd\det(\sigd)}\\
  &\overset{(c)}{=}\frac{|A|\cdot\E\big[\det(\X_{[d]})^2 \;\one_{[\X_{[d]}\in
    B]}\big]\cdot \E\big[\one_{[\X_{[k]\backslash[d]}\in
    C]}\big]}{{k\choose d}d!\det(\sigd)}\\
  &=\frac{|A|}{{k\choose d}}\cdot\vsdx(B)\cdot \dx^{k-d}(C) .
\end{align*}
Here $(a)$ uses Cauchy-Binet to obtain the normalization
for $\Pr(S\,|\,\Xb)$, which is then cancelled out in $(b)$.
Finally $(c)$ follows because the rows of $\X\sim\dxk$ are i.i.d.~so $\X_S$ and
$\X_{S^c}$ are independent for any fixed $S$, and the choice of $S$
does not affect the expectation.
\end{proof}

Theorem \ref{l:composition} implies that for $k\gg d$, the distributions $\vskx$
and $\dxk$ are in fact very close to each other because they only
differ on a small sample of size $d$. 
Since the rows of $\Xb$ 
are exchangeable, they are also identically
distributed. The marginal distribution of a single row exhibits a key connection
between volume-rescaled sampling and leverage score
sampling (when generalized to our distribution setting), which we will exploit later.
Recall that for a fixed matrix $\X\in\R^{n\times d}$, the leverage
score of row $\x_i^\top$ is defined as $\x_i^\top (\X^\top\X)^{-1}\x_i$. 
Note that in this case, the $n$ leverage scores sum to $d$.
The following definition is a natural generalization of leverage scores 
to arbitrary distributions.
\begin{definition}\label{t:lev}
Given a $d$-variate distribution $\dx$, we define
leverage score sampling from $\dx$ as a $d$-variate probability measure 
$\lev$ such that for any event $A\subseteq\R^{1\times d}$ 
measurable w.r.t.~$\dx$, its probability is 
    \vspace{-1mm}
  \begin{align*}
    \lev(A) \ &\defeq\  
      \frac{\E_{\dx}\big[\one_{[\x^\top\in A]}\cdot\x^\top\sigd^{-1}\x\big]}
	   {\E_{\dx}[\x^\top\sigd^{-1}\x]},
	      \text{ where }\ \x^\top\sim\dx.
  \end{align*}
\end{definition}
Clearly, $\E_{\dx}[\x^\top\sigd^{-1}\x] =d$
when $\sigd$ finite. 
\begin{remark}\label{r:lev}
Distribution $\xbb\sim\lev$ is well-defined whenever
  $\sigd=\E[\x\x^\top]$ is finite and full rank. Also, for any $F:\R^{1\times
    d}\!\rightarrow\!\R$, random variable $F(\xbb^\top)$ is measurable if
  and only if $F(\xbb^\top)\,\xbb^\top \sigd^{-1}\xbb$ is measurable
  for $\x^\top\sim\dx$, and then it follows that
  \begin{align*}
    \E_{\lev}[F(\xbb^\top)] = \E_{\dx}[F(\x^\top)\,  \x^\top\sigd^{-1}\x]\,/d.
  \end{align*}
  \end{remark}
  
\begin{theorem}\label{t:marginal}
The marginal distribution of each row vector $\xbb_i^\top$ of
$\Xb\sim\vskx$ is 
\begin{align*}
  \frac dk \cdot\lev+ \Big(1-\frac dk\Big)\cdot \dx.
\end{align*}
\end{theorem}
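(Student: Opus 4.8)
The plan is to leverage the decomposition in Theorem~\ref{l:composition} to reduce the claim to a single-row marginal computation for size-$d$ volume-rescaled sampling. Writing $\Xb\sim\vskx$ in composed form, there is a uniformly random size-$d$ set $S\subseteq[k]$, mutually independent of $\Xb_S\sim\vsdx$ and $\Xb_{[k]\setminus S}\sim\dx^{k-d}$. Fixing a row index $i$ and conditioning on whether $i\in S$, the marginal of $\xbb_i^\top$ becomes a two-component mixture: with probability $\Pr(i\in S)=\binom{k-1}{d-1}/\binom{k}{d}=d/k$ the row $\xbb_i^\top$ is, by exchangeability of the rows of $\vsdx$, distributed as a single row of $\vsdx$, and with the remaining probability $1-d/k$ it is, by the i.i.d.\ structure of $\Xb_{[k]\setminus S}$, distributed as $\dx$. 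Thus the mixture weights in the statement fall out immediately, and the entire task reduces to showing that the marginal of one row of a size-$d$ volume-rescaled sample equals $\lev$.

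The heart of the argument is therefore the identity $\E_{\vsdx}[F(\xbb_d^\top)]=\E_{\lev}[F(\xbb^\top)]$ for test functions $F$. I would start from Remark~\ref{r:expectations} at $k=d$, where $\det(\X^\top\X)=\det(\X)^2$, to write $\E_{\vsdx}[F(\xbb_d^\top)]=\E_{\X}[\det(\X)^2\,F(\x_d^\top)]/(d!\det(\sigd))$ with $\X\sim\dx^d$. Then I condition on the last row $\x_d$ and integrate out the first $d-1$ rows $\Y\defeq\X_{[d-1]}\in\R^{(d-1)\times d}$. Cofactor expansion of $\det(\X)$ along its last row gives $\det(\X)=\x_d^\top\cb$, where the cofactor vector $\cb$, with $\cb_j=(-1)^{d+j}\det(\Y_{\,:,-j})$, depends only on $\Y$; hence $\det(\X)^2=\x_d^\top\,\cb\cb^\top\,\x_d$ and $\E_{\Y}[\det(\X)^2\mid\x_d]=\x_d^\top\,\E[\cb\cb^\top]\,\x_d$.

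The step I expect to be the crux is recognizing $\cb\cb^\top$ as an adjugate so that Lemma~\ref{l:determinant} applies. A short Cauchy--Binet computation on the minors shows $\adj(\Y^\top\Y)_{ij}=(-1)^{i+j}\det\big((\Y_{\,:,-j})^\top\Y_{\,:,-i}\big)=(-1)^{i+j}\det(\Y_{\,:,-j})\det(\Y_{\,:,-i})=\cb_i\cb_j$, i.e.\ $\cb\cb^\top=\adj(\Y^\top\Y)$. Now invoking the adjugate identity~\eqref{eq:l-adj} with $\A=\B=\Y$ at sample size $d-1$ (so that $\E[\Y^\top\Y]=(d-1)\sigd$) yields $\E[\cb\cb^\top]=(d-1)!\,\adj(\sigd)=(d-1)!\det(\sigd)\,\sigd^{-1}$, using $\adj(\sigd)=\det(\sigd)\sigd^{-1}$ for the invertible $\sigd$. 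Substituting back gives $\E_{\Y}[\det(\X)^2\mid\x_d]=(d-1)!\det(\sigd)\,\x_d^\top\sigd^{-1}\x_d$, and plugging into the expression above cancels the $(d-1)!\det(\sigd)$ against the $d!\det(\sigd)$ normalizer to leave $\E_{\vsdx}[F(\xbb_d^\top)]=\tfrac1d\E_{\dx}[F(\x^\top)\,\x^\top\sigd^{-1}\x]=\E_{\lev}[F(\xbb^\top)]$ by Remark~\ref{r:lev}. Combining with the mixture decomposition of the first paragraph proves the theorem. As an aside, the same identities give a one-shot proof that bypasses Theorem~\ref{l:composition}: apply Remark~\ref{r:expectations} directly to $\vskx$, split off row $i$ via the rank-one update $\det(\M+\x_i\x_i^\top)=\det(\M)+\x_i^\top\adj(\M)\x_i$ with $\M=\X_{-i}^\top\X_{-i}$, and evaluate $\E[\det(\M)]$ and $\E[\adj(\M)]$ through~\eqref{eq:l-det} and~\eqref{eq:l-adj} at sample size $k-1$; the falling-factorial arithmetic then reproduces the weights $d/k$ and $1-d/k$ directly.
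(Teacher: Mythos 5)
Your proposal is correct and follows essentially the same route as the paper: for $k=d$ your cofactor expansion $\det(\X)=\x_d^\top\cb$ with $\cb\cb^\top=\adj(\Y^\top\Y)$ produces exactly the paper's intermediate quantity $\x_d^\top\adj(\X_{-d}^\top\X_{-d})\x_d$ (which the paper reaches via the rank-one identity $\det(\B+\v\v^\top)=\det(\B)+\v^\top\adj(\B)\v$ together with $\det(\X_{-d}^\top\X_{-d})=0$), after which both arguments invoke \eqref{eq:l-adj} at sample size $d-1$ and cancel the $(d-1)!\det(\sigd)$ factor, and your mixture reduction of $k>d$ to $k=d$ via Theorem~\ref{l:composition} with $\Pr(i\in S)=d/k$ is precisely the step the paper leaves implicit. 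Your closing aside --- splitting off row $i$ at general $k$ and evaluating $\E[\det(\M)]$ and $\E[\adj(\M)]$ via \eqref{eq:l-det} and \eqref{eq:l-adj} at size $k-1$, with $(k-1)^{\underline{d}}/k^{\underline{d}}=1-d/k$ --- is also correct and gives a clean one-shot proof that bypasses the composition theorem.
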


\begin{proof}
For $k=d$, this can be derived from existing work on determinantal
point processes (see Lemma \ref{l:joint} for more details). We present
an independent proof using the identity $\det(\B+\v\v^\top)
= \det(\B) + \v^\top\!\adj(\B)\v$ and Lemma \ref{l:determinant}. Given $\Xb\sim\vsdx$,
\begin{align*}
  \Pr(\xbb_i^\top\in A)
   &= \frac{\E \big[\E[\one_{[\x_i^\top\in A]}
    \det(\X^\top\X)\,|\,\x_i]\big]}
{d!\det(\sigd)}\quad\quad\text{(where  $\X\sim\dx^d$)}\\
  &= \frac{\E\big[\one_{[\x_i^\top\in A]}
    \E[\det(\X_{-i}^\top\X_{-i}+\x_i\x_i^\top)\,|\,\x_i]\big]}
{d!\det(\sigd)}\\
  &\overset{(a)}{=}\frac{\E\big[\one_{[\x_i^\top\in A]}
    \E[\x_i^\top\!\adj(\X_{-i}^\top\X_{-i})\x_i\,|\,\x_i]\big]}
    {d!\det(\sigd)}\\
  &\overset{(b)}{=}\frac{\E\big[\one_{[\x_i^\top\in A]}\cdot\x_i^\top\!\adj(\sigd)\x_i\big]}
    {\frac{d!}{(d-1)!}\det(\sigd)}\\
  &\overset{(c)}{=}\E\big[\one_{[\x_i^\top\in A]}\cdot\x_i^\top\!\sigd^{-1}\x_i\big]\,/d.
\end{align*}
Here $(a)$ follows because $\det(\X_{-i}^\top\X_{-i})=0$,
and in $(b)$ we use Lemma \ref{l:determinant} and the fact that
$\E[\X_{-i}^\top\X_{-i}]=(d\!-\!1)\cdot\sigd$. 
    Finally $(c)$ employs the identity
$\adj(\A)=\det(\A)\A^{-1}$ which holds for any full rank $\A$. 
    The case of $k>d$ now follows from the case $k=d$ combined with Theorem \ref{l:composition}.
\end{proof}
The key random matrix that arises in the context of volume-rescaled
sampling is not $\Xb$ itself but rather its Moore-Penrose
pseudoinverse, $\Xb^\dagger = (\Xb^\top\Xb)^{-1}\Xb^\top$. Its expected value is given below.
\begin{theorem}\label{t:pseudoinverse}
  Let $\X\sim\dxk$ and
  $\Xb\sim\vskx$ for any $d$-variate $\dx$ and $k\geq d$. Then
  \begin{align*}
    \E\big[\Xb^\dagger\big] = \big(\E[\X^\top\X]\big)^{-1}\E[\X]^\top.
  \end{align*}
\end{theorem}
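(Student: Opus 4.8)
The plan is to reduce the expectation of the pseudoinverse under $\vskx$ to an i.i.d.\ expectation of an \emph{adjugate}, which (unlike the inverse) is a polynomial in the entries of $\X$ and is therefore integrable, defined even at singular matrices, and directly amenable to Lemma~\ref{l:determinant}. First I would invoke Remark~\ref{r:expectations} with $F(\X)=\X^\dagger=(\X^\top\X)^{-1}\X^\top$ to write
\[
\E\big[\Xb^\dagger\big] = \frac{\E\big[\det(\X^\top\X)\,(\X^\top\X)^{-1}\X^\top\big]}{\ktd\det(\sigd)},\qquad \X\sim\dxk.
\]
The key observation is that the determinant weight exactly cancels the inverse: since $\adj(\A)=\det(\A)\A^{-1}$ for invertible $\A$, the numerator equals $\E\big[\adj(\X^\top\X)\,\X^\top\big]$, and this adjugate is a polynomial in the entries of $\X$.

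Next I would compute this numerator column by column. The $j$th column of $\adj(\X^\top\X)\X^\top$ is $\adj(\X^\top\X)\x_j$, and writing $\X^\top\X=\X_{-j}^\top\X_{-j}+\x_j\x_j^\top$ I would apply the rank-one adjugate identity $\adj(\B+\x\x^\top)\,\x=\adj(\B)\,\x$. This follows for invertible $\B$ from Sherman--Morrison together with the matrix determinant lemma $\det(\B+\x\x^\top)=\det(\B)(1+\x^\top\B^{-1}\x)$, and since both sides are polynomials agreeing on a dense set it is a polynomial identity valid for all $\B$. It yields $\adj(\X^\top\X)\x_j=\adj(\X_{-j}^\top\X_{-j})\x_j$, which decouples $\x_j$ from the matrix built out of the remaining rows. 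Because the rows of $\X$ are i.i.d., $\X_{-j}$ is independent of $\x_j$, so $\E[\adj(\X^\top\X)\x_j]=\E[\adj(\X_{-j}^\top\X_{-j})]\,\E[\x]$. Now $\X_{-j}$ has $k-1$ i.i.d.\ rows with $\E[\X_{-j}^\top\X_{-j}]=(k-1)\sigd$, so equation \eqref{eq:l-adj} of Lemma~\ref{l:determinant} (applicable since $k-1\ge d-1$) together with the homogeneity $\adj(c\A)=c^{d-1}\adj(\A)$ gives $\E[\adj(\X_{-j}^\top\X_{-j})]=(k-1)^{\underline{d-1}}\adj(\sigd)=(\ktd/k)\,\adj(\sigd)$. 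Reassembling the columns, $\E[\adj(\X^\top\X)\X^\top]=(\ktd/k)\,\adj(\sigd)\,\E[\X]^\top$.

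Finally I would substitute back and simplify using $\adj(\sigd)=\det(\sigd)\,\sigd^{-1}$ and $\E[\X^\top\X]=k\,\sigd$:
\[
\E\big[\Xb^\dagger\big] = \frac{(\ktd/k)\,\adj(\sigd)\,\E[\X]^\top}{\ktd\,\det(\sigd)} = \tfrac1k\,\sigd^{-1}\E[\X]^\top = \big(\E[\X^\top\X]\big)^{-1}\E[\X]^\top,
\]
as claimed. I expect the main obstacle to be spotting the two algebraic identities that make the whole expression collapse: first that the determinant reweighting converts the awkward pseudoinverse into the polynomial adjugate, and second the rank-one identity $\adj(\B+\x\x^\top)\x=\adj(\B)\x$ that strips off $\x_j$'s own contribution so that independence across rows can be exploited. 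Once these are in place, the rest is bookkeeping with the falling-factorial constant $\ktd$.
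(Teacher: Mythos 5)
Your proof is correct, but it takes a genuinely different route from the paper's. The paper obtains this theorem as a one-line corollary of Theorem~\ref{t:unbiased}: setting $y\equiv 1$ there gives $\E[\Xb^\dagger\one_k]=\sigd^{-1}\E[\x]$, and exchangeability of the columns $(\Xb^\top\Xb)^{-1}\xbb_i$ then yields each column of $\E[\Xb^\dagger]$ as $\frac1k$ of that sum; the real work is hidden in the proof of Theorem~\ref{t:unbiased}, which handles $k=d$ via Cramer's rule and part \eqref{eq:l-det} of Lemma~\ref{l:determinant}, and then inducts on $k$ using the determinantal pseudoinverse identity of Lemma~\ref{l:pseudoinverse}. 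You instead give a direct, non-inductive computation: convert $\det(\X^\top\X)\,\X^\dagger$ into the polynomial $\adj(\X^\top\X)\X^\top$, strip off row $j$ with the rank-one identity $\adj(\B+\x\x^\top)\x=\adj(\B)\x$, exploit independence of $\x_j$ from $\X_{-j}$, and finish with part \eqref{eq:l-adj} of Lemma~\ref{l:determinant}. This treats all $k\ge d$ uniformly, avoids both Cramer's rule and Lemma~\ref{l:pseudoinverse}, and in fact the identical computation with $\x_jy_j$ in place of $\x_j$ would reprove Theorem~\ref{t:unbiased} itself; what the paper's route buys is brevity given that Theorem~\ref{t:unbiased} is already established. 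One small point to tighten: your replacement of $\det(\X^\top\X)\X^\dagger$ by $\adj(\X^\top\X)\X^\top$ is justified only for invertible $\X^\top\X$; on the rank-deficient set the two still agree (both vanish, since $\adj(\X^\top\X)$ annihilates the column space of $\X^\top\X$, which equals that of $\X^\top$), but this deserves a sentence --- note the paper itself warns in Theorem~\ref{t:square-inverse} that $\det(\A)\A^\dagger\ne\adj(\A)$ for singular psd $\A$, so it is the extra factor $\X^\top$ that saves the identity here.
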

Recall that we assume $\E[\X^\top\X]=k\,\sigd$ is full rank throughout the paper.
The proof of Theorem \ref{t:pseudoinverse} is delayed to Section
\ref{s:unbiased} where we give a slightly more general statement
(Theorem \ref{t:unbiased}). We can compute not only the first moment
of $\Xb^\dagger$, but also a second matrix moment, namely
$\E[\Xb^\dagger\Xb^{\dagger\top}]$.  Even though $\X$ may not always
be full rank, $\Xb$ is full rank almost
surely (a.s.), so we can write
$\Xb^\dagger\Xb^{\dagger\top}=(\Xb^\top\Xb)^{-1}$.
\begin{theorem}\label{t:square-inverse}
Let $\X\sim\dxk$ and
  $\Xb\sim\vskx$ for any $d$-variate $\dx$. If
  $\rank(\X)\!=\!d$ a.s., then 
  \begin{align*}
   \E\big[\Xb^\dagger\Xb^{\dagger\top}\big]= \E\big[(\Xb^\top\Xb)^{-1}\big]\ \overset{(*)}{=}\
    \frac{k}{k-d+1}\cdot\big(\E[\X^\top\X]\big)^{-1}.
  \end{align*}
If $\rank(\X)<d$ with some probability then $(*)$ becomes a positive semi-definite
  inequality $\preceq$.
\end{theorem}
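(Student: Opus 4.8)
The plan is to reduce the volume-rescaled expectation to an i.i.d.\ expectation weighted by $\det(\X^\top\X)$, convert the weighted inverse into an adjugate, and then invoke the adjugate identity \eqref{eq:l-adj} of Lemma \ref{l:determinant}. First, since $\Xb\sim\vskx$ is full rank almost surely, $\Xb^\dagger\Xb^{\dagger\top}=(\Xb^\top\Xb)^{-1}$, which gives the first equality; so it suffices to evaluate $\E[(\Xb^\top\Xb)^{-1}]$. By Remark \ref{r:expectations} (applied entrywise to $F(\X)=(\X^\top\X)^{-1}$),
$$\E_{\Xb}\big[(\Xb^\top\Xb)^{-1}\big] = \frac{\E_{\X}\big[\det(\X^\top\X)\,(\X^\top\X)^{-1}\big]}{\ktd\det(\sigd)}.$$
On the event $\rank(\X)=d$ we have the elementary identity $\det(\X^\top\X)(\X^\top\X)^{-1}=\adj(\X^\top\X)$, where $\adj$ is a polynomial in the entries and hence everywhere defined; assuming for the moment that $\rank(\X)=d$ a.s., this turns the numerator into $\E_{\X}[\adj(\X^\top\X)]$.

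Next I would apply \eqref{eq:l-adj} with $\A=\B=\X$ (the rows of $\X$ are i.i.d., as the identity requires), giving $\E[\adj(\X^\top\X)] = (k^{\underline{d-1}}/k^{d-1})\,\adj(\E[\X^\top\X])$. Substituting $\E[\X^\top\X]=k\sigd$ and using the scaling $\adj(k\sigd)=k^{d-1}\adj(\sigd)$ together with $\adj(\sigd)=\det(\sigd)\,\sigd^{-1}$ (valid since $\sigd$ is full rank), the factors of $k^{d-1}$ cancel and
$$\E[\adj(\X^\top\X)] = k^{\underline{d-1}}\,\det(\sigd)\,\sigd^{-1}.$$
Dividing by $\ktd\det(\sigd)$ and using $k^{\underline{d-1}}/\ktd = 1/(k-d+1)$ yields $\E[(\Xb^\top\Xb)^{-1}] = \tfrac{1}{k-d+1}\sigd^{-1} = \tfrac{k}{k-d+1}\big(\E[\X^\top\X]\big)^{-1}$, which is exactly $(*)$.

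The main subtlety, which I expect to be the only delicate point, is the rank-deficient case. Because the determinant weighting assigns zero mass to matrices with $\rank(\X)<d$, the numerator of the first display is unchanged if we insert the indicator, i.e.\ it equals $\E_{\X}[\adj(\X^\top\X)\,\one_{[\rank(\X)=d]}]$. However, \eqref{eq:l-adj} computes the \emph{full} expectation $\E[\adj(\X^\top\X)]$, which additionally picks up the contribution from $\rank(\X)=d-1$ (the adjugate vanishes identically once $\rank(\X)\le d-2$). For a symmetric positive semidefinite matrix of rank exactly $d-1$, the adjugate is itself positive semidefinite—it is the product of the nonzero eigenvalues times the projection onto the one-dimensional kernel. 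Hence $\E_{\X}[\adj(\X^\top\X)\,\one_{[\rank=d-1]}]\succeq 0$, so omitting it can only decrease the matrix in the Loewner order:
$$\E_{\X}\big[\adj(\X^\top\X)\,\one_{[\rank=d]}\big] \preceq \E_{\X}\big[\adj(\X^\top\X)\big].$$
Dividing through by the positive scalar $\ktd\det(\sigd)$ preserves the inequality, so $(*)$ weakens to $\preceq$ exactly as claimed.
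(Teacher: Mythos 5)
Your proposal is correct and follows essentially the same route as the paper's proof: reduce to $\E[\det(\X^\top\X)(\X^\top\X)^{\dagger}]$, replace it by the adjugate, apply identity \eqref{eq:l-adj} of Lemma \ref{l:determinant}, and handle the rank-deficient case by noting that the adjugate of a singular PSD matrix is still PSD (the paper phrases this as $\det(\A)\A^\dagger=\zero\preceq\adj(\A)$, you phrase it via the indicator on the full-rank event, but these are the same observation). The only cosmetic difference is that you expand $\adj(k\sigd)=k^{d-1}\det(\sigd)\sigd^{-1}$ explicitly while the paper keeps everything in terms of $\adj(\E[\X^\top\X])$ and $\det(\E[\X^\top\X])$; the arithmetic is identical.
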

\begin{proof}
For a full rank $d\times d$ matrix $\A$ we have $\A^{-1}=\A^\dagger$
and $\adj(\A)=\det(\A)\A^{-1}$. When $\A$ is not full rank but psd, then
$\det(\A)\A^\dagger=\zero\preceq\adj(\A)$. Thus Lemma
\ref{l:determinant} implies that
\begin{align*}
  \E\big[(\Xb^\top\Xb)^{-1}\big]
  &= \frac{\E\big[\det(\X^\top\X)(\X^\top\X)^\dagger\big]}
{\E\big[\det(\X^\top\X)\big]}\\
  &\overset{(*)}{\preceq}\frac{\E\big[\adj(\X^\top\X)\big]}
    {\E\big[\det(\X^\top\X)\big]}\\
  \text{(Lemma
  \ref{l:determinant})}\quad
  &=\frac{(k^{\underline{d-1}}/k^{d-1})\cdot
                              \adj\!\big(\E[\X^\top\X]\big)}{(\ktd/k^d)
                              \cdot\det\!\big(\E[\X^\top\X]\big)}\\ 
&    =\frac{k}{k-d+1}\cdot\big(\E[\X^\top\X]\big)^{-1}, 
\end{align*}
where $(*)$ becomes an equality if $\X^\top\X$ is full rank with
probability 1.
\end{proof}

\subsection{Unbiased estimator for random design regression}
\label{s:unbiased}
In fixed design linear regression, given a fixed $k\times d$ matrix $\X$ and
a $k$-dimensional response vector $\y$, the least squares estimator
$\X^\dagger\y=\argmin_\w\|\X\w-\y\|^2$ is a canonical solution. When
the response vector is random, then the least squares solution satisfies
$\E[\X^\dagger\y]=\X^\dagger\E[\y]=\argmin_\w\E_{\y}\big[\|\X\w-\y\|^2\big]$, i.e., it is an
unbiased estimator of the minimizer of the expected square loss. In random
design regression, where each row-response pair is drawn independently
as $(\x^\top,y)\sim D$ from some $(d,1)$-variate population
distribution $D$, the matrix $\X\sim\dxk$ also becomes random. In this
context, the minimizer of the expected square loss is defined as
$\argmin_\w\E\big[(\x^\top\w-y)^2\big]=\sigd^{-1}\E[\x\,y]$. Note that
our assumption that $\rank(\sigd)=d$ comes without loss of generality
because the redundant components of vector $\x$ can be removed,
reducing dimension $d$ to match the rank of $\sigd$. The least
squares solution $\X^\dagger\y$ may no longer be an unbiased estimator of
the optimum under the random design model (in most cases it is not). We show that volume-rescaled
sampling provides a natural way of correcting the distribution $\dxk$
so that the least squares estimator is always unbiased.
\begin{theorem}\label{t:unbiased}
  Let $(\x^\top,y)\sim D$ be $(d,1)$-variate. Then for $\Xb\sim\vskx$ and
  $\yb_{i}\sim D_{{\cal Y}|\x=\xbb_{i}}$,
  \begin{align*}
    \E\big[\Xb^\dagger\ybb\big] 
    = \argmin_\w\E\big[(\x^\top\w-y)^2\big] = \w^*.
  \end{align*}
\end{theorem}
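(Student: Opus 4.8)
The plan is to reduce the claim to an asymmetric determinant identity that can be attacked one coordinate at a time with Cramer's rule and then closed off by Lemma~\ref{l:determinant}. First I would observe that volume-rescaled sampling only reweights the marginal over the design: conditioned on $\Xb$, the responses $\ybb$ are drawn from exactly the same conditional model as under i.i.d.\ sampling, so the density of the joint law of $(\Xb,\ybb)$ with respect to the i.i.d.\ law of $(\X,\y)$ is $\det(\X^\top\X)/(\ktd\det(\sigd))$ and depends on $\X$ alone. Extending Remark~\ref{r:expectations} to carry the responses along, this gives
\[
  \E\big[\Xb^\dagger\ybb\big]
  \;=\; \frac{\E\big[\det(\X^\top\X)\,\X^\dagger\y\big]}{\ktd\,\det(\sigd)},
\]
where on the right $(\X,\y)$ is an i.i.d.\ sample and $\X^\dagger=(\X^\top\X)^{-1}\X^\top$ on the full-rank event. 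It therefore suffices to evaluate the numerator coordinate by coordinate.

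For the $j$th coordinate I would invoke Cramer's rule for the normal equations $(\X^\top\X)(\X^\dagger\y)=\X^\top\y$. Writing $\X^{\y}_j$ for the matrix obtained from $\X$ by replacing its $j$th column with the response vector $\y$, one checks that $\X^\top\X$ with its $j$th column replaced by $\X^\top\y$ is exactly $\X^\top\X^{\y}_j$, so $\det(\X^\top\X)\,(\X^\dagger\y)_j=\det(\X^\top\X^{\y}_j)$. This identity holds on the full-rank event by Cramer's rule and extends to all of $\R^{k\times d}$ because when $\rank(\X)<d$ both sides vanish: $\det(\X^\top\X)=0$ on the left, while $\X^\top\X^{\y}_j$ has rank $<d$ on the right. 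The purpose of the rewrite is that $\X^\top\X^{\y}_j$ is a genuine cross-product $\A^\top\B$ whose rows form an i.i.d.\ sequence of pairs: take $\A=\X$ and $\B=\X^{\y}_j$, so that row $i$ is the pair $(\x_i,\widetilde{\x}_i)$, with $\widetilde{\x}_i$ equal to $\x_i$ except that its $j$th entry is replaced by $y_i$.

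Now Lemma~\ref{l:determinant}, part \eqref{eq:l-det}, applies directly, and crucially in its general form with $\A\neq\B$ (the extension beyond the classical $\A=\B$ case), yielding $\E[\det(\X^\top\X^{\y}_j)]=(\ktd/k^d)\det(\E[\X^\top\X^{\y}_j])$. A one-line computation gives $\E[\X^\top\X^{\y}_j]=k\,\E[\x_1\widetilde{\x}_1^\top]=k\,\Sigmab_j$, where $\Sigmab_j$ denotes $\sigd$ with its $j$th column replaced by $\E[\x y]$; indeed every column other than the $j$th reproduces a column of $\sigd=\E[\x\x^\top]$, while the $j$th becomes $\E[\x y]$. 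Hence $\E[\det(\X^\top\X^{\y}_j)]=\ktd\det(\Sigmab_j)$, and applying Cramer's rule in reverse to the fixed matrix $\sigd$ gives $\det(\Sigmab_j)=\det(\sigd)\,(\sigd^{-1}\E[\x y])_j=\det(\sigd)\,w^*_j$. Substituting back, the $j$th coordinate of the numerator equals $\ktd\det(\sigd)\,w^*_j$, the normalizers cancel, and $\E[\Xb^\dagger\ybb]_j=w^*_j$ for every $j$, which is the claim.

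The main obstacle is conceptual rather than computational: recognizing the Cramer determinant $\det(\X^\top\X^{\y}_j)$ as an \emph{asymmetric} cross-product of i.i.d.\ row-pairs, which is precisely what lets the general $\A\neq\B$ version of Lemma~\ref{l:determinant} do the work that a symmetric determinant expectation cannot. Two supporting points must be handled with care. First, the Cramer identity needs the rank-deficient event, where I argue both sides vanish, so the formula is valid almost surely rather than only on the full-rank event. Second, integrability of $\det(\X^\top\X^{\y}_j)$ follows from the standing assumptions $\E[\|\x\|^2]<\infty$ and $\E[y^2]<\infty$, since this determinant expands into sums of products in which each row contributes exactly one factor from $\A$ and one from $\B$; by row-independence the expectation factorizes into cross second moments $\E[(\x_1)_a(\widetilde{\x}_1)_b]$, each finite by Cauchy--Schwarz, so the hypotheses of Lemma~\ref{l:determinant} are met.
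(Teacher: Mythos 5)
Your proof is correct, and it reaches the conclusion by a genuinely different route from the paper's in the case $k>d$. The paper applies Cramer's rule only in the square case $k=d$, where $\det(\X^\top\X)\,\X^\dagger\y=\det(\X)\adj(\X)\y$ produces the column-replaced determinants, and then handles $k>d$ by induction on $k$ using the pseudoinverse identity of Lemma~\ref{l:pseudoinverse} together with exchangeability of the rows. You instead apply Cramer's rule to the normal equations $(\X^\top\X)(\X^\dagger\y)=\X^\top\y$ for rectangular $\X$, noting that replacing the $j$th column of $\X^\top\X$ by $\X^\top\y$ gives exactly $\X^\top(\X\!\overset{j}{\leftarrow}\!\y)$; this turns $\det(\X^\top\X)(\X^\dagger\y)_j$ into an asymmetric cross-product determinant $\det(\A^\top\B)$ with $\A=\X$ and $\B=\X\!\overset{j}{\leftarrow}\!\y$, whose rows form an i.i.d.\ sequence of pairs, so the general-$k$ statement \eqref{eq:l-det} closes the argument in one step without any induction. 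Your identity coincides with the paper's when $k=d$ (then $\det(\X)\det(\X\!\overset{j}{\leftarrow}\!\y)=\det\big(\X^\top(\X\!\overset{j}{\leftarrow}\!\y)\big)$), but for $k>d$ it eliminates both the induction and the need for Lemma~\ref{l:pseudoinverse}. Your handling of the two delicate points is also sound: on the rank-deficient event both sides of the Cramer identity vanish because $\rank(\X)<d$ forces $\rank\big(\X^\top(\X\!\overset{j}{\leftarrow}\!\y)\big)<d$, and integrability follows from the second-moment assumptions via the row-wise factorization of the determinant expansion. The trade-off: the paper's inductive route keeps all determinant manipulations confined to square matrices and exercises machinery (Lemma~\ref{l:pseudoinverse}, exchangeability) reused elsewhere, while your route is shorter, self-contained, and makes transparent that the entire theorem rests on the asymmetric $\A\neq\B$ extension in Lemma~\ref{l:determinant}.
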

  \begin{proof}
Let $(\X,\y)\sim\dxy^k$. We first prove the
theorem for $k=d$. In this case, Cramer's rule implies that since $\X$
is a $d\times d$ matrix, we have
\begin{align*}
\det(\X^\top\X)\X^\dagger\y=\det(\X)\adj(\X)\,\y = \det(\X)\cdot
\begin{bmatrix}
\det(\X\!\overset{1}{\leftarrow}\!\y)\\
\vdots\\
 \det(\X\!\overset{d}{\leftarrow}\!\y)
\end{bmatrix},
\end{align*}
    where $\X\!\overset{i}{\leftarrow}\!\y$ is matrix $\X$
    with column $i$ replaced by $\y$. It follows that:
\begin{align*}
  \E\big[(\Xb^\dagger\ybb)_i\big]
  &=\frac{\E\big[\det(\X^\top\X)(\X^\dagger\y)_i\big]}{d!\det(\sigd)}\\
  &=\frac{\E\big[\det(\X)\det(\X\!\overset{i}{\leftarrow}\!\y)\big]}{d!\det(\sigd)}\\
\text{(Lemma~\ref{l:determinant})}\quad
&=\frac{\det\!\big(\,\E_D[\x\,(\x\!\overset{i}{\leftarrow}\!y)^\top]\,\big)}
           {\det(\sigd)}\\
    &= \frac{\det\!\big(\,\sigd \!\!\overset{i}{\leftarrow}\!\E[\x\,y]\,\big)}
            {\det(\sigd)}\\
&=\sigd^{-1}\,\E[\x\,y]=\argmin_\w\E\big[(\x^\top\w-y)^2\big].
\end{align*}
where we applied Lemma~\ref{l:determinant} to the pair of
    $d\times d$ matrices $\A=\X$ and
$\B=\X \overset{i}{\leftarrow} \y$.
The case of $k>d$ follows by induction based on the following lemma
shown by
\cite{unbiased-estimates-journal}:
\begin{lemma}\label{l:pseudoinverse} 
 For any matrix $\X\in\R^{k\times d}$, where $k> d$, denoting
 $\I_{-i}=\I-\e_i\e_i^\top$, we have
\begin{align*}
  \det(\X^\top\X)\,\X^\dagger =
  \frac1{k-d}\sum_{i=1}^k\det(\X^\top\I_{-i}\X)\,
  (\I_{-i}\X)^\dagger.
\end{align*}
\end{lemma}
Suppose that the induction hypothesis holds for $\Xt\sim\vskxm$ and $\yt_i\sim
D_{{\cal Y}|\x=\xbt_i}$. Then,
\begin{align*}
  \E\big[\Xb^\dagger\ybb\big] 
  &=\frac{\E\big[\det(\X^\top\X)
    \,\X^\dagger\y\big]}{\ktd\det(\sigd)}\\
&\!\overset{(a)}{=} \frac{\E
  \Big[\frac1{k-d}\sum_{i=1}^k
    \det(\X^\top\I_{-i}\X)\,(\I_{-i}\X)^\dagger\y\Big]}{\ktd\det(\sigd)}\\
  &= \frac1{k-d}\frac{\sum_{i=1}^k \E\big[\det(\X^\top\I_{-i}\X)
 (\I_{-i}\X)^\dagger\y\big]}{\ktd\det(\sigd)}\\
  &\!\overset{(b)}{=}\frac{k}{k-d}\,
    \frac{(k\!-\!1)^{\underline{d}}}{\ktd}\,\E\big[\Xt^\dagger\ybt\big]\ =\
\sigd^{-1}\E[\x\,y], 
\end{align*}
where $(a)$
follows from Lemma \ref{l:pseudoinverse}, while $(b)$ follows because
the rows of $\X\sim\dxk$ are exchangeable, so removing the $i$th row
is the same as removing the last row.
\end{proof}
The expected value of random matrix $\Xb^\dagger$ (Theorem \ref{t:pseudoinverse}) 
now follows by setting $y=1$:
\begin{proofof}{Theorem}{\ref{t:pseudoinverse}}
The columns of $\Xb^\dagger$, equal
$(\Xb^\top\Xb)^{-1}\xbb_i$, are exchangeable, so
\begin{align*}
  \E\big[(\Xb^\top\Xb)^{-1}\xbb_i\big] = \frac1k\cdot\E\big[\X^\dagger\one_k\big]\overset{(*)}{=}\frac1k\cdot\big(\E[\x\x^\top]\big)^{-1}\E[\x]=\big(\E[\X^\top\X]\big)^{-1}\E[\x],
\end{align*}
where $(*)$ is Theorem \ref{t:unbiased} with $y=1$.
The desired formula is the matrix form
of the above.
\end{proofof}
We now briefly discuss the implications of our method
in the case when the response variable is linear plus some
well-behaved noise.
More precisely, when the response values are modeled as $y_i=\x_i^\top\w^*+\xi_i$,
where $\E[\xi_i]=0$, $\Var[\xi_i]=\sigma^2$ and $\w^*\in\R^d$, then the covariance matrix of
the least squares estimator in fixed design regression is given by
$\Var[\X^\dagger\y\,|\,\X]=\sigma^2(\X^\top\X)^{-1}$ (here $\X$ is fixed). The covariance matrix
of the volume-rescaled sampling estimator in random design regression takes a similar form.
\begin{theorem}
  Let $(\x^\top,y)\sim \dx$ be $(d,1)$-variate. Suppose that
  $\E[y\,|\,\x]=\x^\top\w^*$ for some $\w^*\in\R^d$ and
  $\Var[y-\x^\top\w^*\ |\ \x]=\sigma^2$ almost surely. Then
  for $\Xb\sim\vskx$ and $\yb_{i}\sim D_{{\cal Y}|\x=\xbb_{i}}$,  
  \begin{align*}
    \Var\big[\Xb^\dagger\ybb\big]\overset{(*)}{=}
    \frac{k}{k-d+1}\cdot\sigma^2\big(\E[\X^\top\X]\big)^{-1},
    \quad\text{where }\X\sim\dxk,
  \end{align*}
  as long as $\rank(\X)=d$ almost surely, otherwise $(*)$ is replaced by inequality $\preceq$.
\end{theorem}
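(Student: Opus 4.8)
The plan is to reduce the claim to the second-moment identity already proved in Theorem~\ref{t:square-inverse}. First I would use Theorem~\ref{t:unbiased}, which gives $\E[\Xb^\dagger\ybb]=\w^*$, so that the covariance matrix in question is
\[
  \Var[\Xb^\dagger\ybb]=\E\big[(\Xb^\dagger\ybb-\w^*)(\Xb^\dagger\ybb-\w^*)^\top\big].
\]
Next I would split the response vector into its linear part and noise, writing $\ybb=\Xb\w^*+\xib$ where the $i$th coordinate of $\xib$ is $\yb_i-\xbb_i^\top\w^*$. Because volume-rescaled sampling reweights the density of $\dxk$ by $\det(\X^\top\X)$, the matrix $\Xb$ has full rank $d$ almost surely even when $\X$ does not, so $\Xb^\dagger\Xb=\I$ and therefore
\[
  \Xb^\dagger\ybb-\w^*=\Xb^\dagger(\Xb\w^*+\xib)-\w^*=\Xb^\dagger\xib,
\]
which reduces the target to $\Var[\Xb^\dagger\ybb]=\E[\Xb^\dagger\xib\xib^\top\Xb^{\dagger\top}]$.

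I would then condition on $\Xb$ and invoke the noise model. Under the running assumption that, given the inputs, the responses are conditionally independent and each $\yb_i$ depends only on $\xbb_i$, with conditional mean $\xbb_i^\top\w^*$ and conditional variance $\sigma^2$, the centered noise coordinates $\xi_i$ are, conditionally on $\Xb$, independent with mean $0$ and variance $\sigma^2$. Hence $\E[\xib\xib^\top\,|\,\Xb]=\sigma^2\I$, and the conditional covariance becomes $\sigma^2\Xb^\dagger\Xb^{\dagger\top}=\sigma^2(\Xb^\top\Xb)^{-1}$. Taking the outer expectation over $\Xb\sim\vskx$ gives $\Var[\Xb^\dagger\ybb]=\sigma^2\,\E[(\Xb^\top\Xb)^{-1}]$, and substituting the value $\E[(\Xb^\top\Xb)^{-1}]=\tfrac{k}{k-d+1}(\E[\X^\top\X])^{-1}$ from Theorem~\ref{t:square-inverse} yields the claimed formula.

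The one step that genuinely requires care is the conditioning: although the rows of $\Xb$ are \emph{dependent} under volume-rescaled sampling, the identity $\E[\xib\xib^\top\,|\,\Xb]=\sigma^2\I$ still holds, because the independence of the noise terms is assumed only conditionally on the inputs and is unaffected by how the inputs are jointly generated. Everything else is bookkeeping. Finally, the $\preceq$ clause transfers automatically: the equality $\Var[\Xb^\dagger\ybb]=\sigma^2\,\E[(\Xb^\top\Xb)^{-1}]$ is exact in all cases (since $\Xb$ is full rank a.s.), and only the last substitution inherits the positive semi-definite inequality of Theorem~\ref{t:square-inverse}, which is precisely the regime where $\rank(\X)<d$ with positive probability.
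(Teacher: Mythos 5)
Your proposal is correct and follows essentially the same route as the paper: decompose $\ybb=\Xb\w^*+\xib$, use the conditional noise model to get $\E[\xib\xib^\top\,|\,\Xb]=\sigma^2\I$, and reduce to $\sigma^2\,\E[(\Xb^\top\Xb)^{-1}]$, which Theorem~\ref{t:square-inverse} evaluates (with the $\preceq$ caveat entering only at that last substitution). The only cosmetic difference is that you cancel the $\w^*\w^{*\top}$ terms up front via $\Xb^\dagger\Xb=\I$, whereas the paper carries them through the variance expansion before cancelling.
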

\begin{proof}
  Since $\E\big[\Xb^\dagger\ybb\big]=\E\big[\Xb^\dagger\,\E[\ybb\,|\,\Xb]\big]=\w^*$, denoting
  $\xib=\ybb-\Xb\w^*$, we have
  \begin{align*}
    \Var\big[\Xb^\dagger\ybb\big]
    &= \E\big[\Xb^\dagger(\Xb\w^*\!+\xib)
(\Xb\w^*\!+\xib)^\top\Xb^{\dagger\top}\big] - \w^*\w^{*\top} \\
    &=\E\big[\,\Xb^\dagger\,\E[\xib\xib^\top|\Xb]\,\Xb^{\dagger\top}\,\big]
      +
      \E\big[\Xb^\dagger\Xb\w^*\w^{*\top}(\Xb^\dagger\Xb)^\top\big]
      -\w^*\w^{*\top}\\
    &=\sigma^2\cdot\E\big[\Xb^\dagger\Xb^{\dagger\top}\big]\\
    &\overset{(*)}{=} \sigma^2\cdot
      \frac{k}{k-d+1}\big(\E[\X^\top\X]\big)^{-1}.
  \end{align*}
  Here, $(*)$ uses Theorem \ref{t:square-inverse}. It is
  replaced by $\preceq$ when $\rank(\X)<d$ with positive probability.
\end{proof}

\section{Loss bound for an unbiased estimator}
\label{s:loss}

For any distribution $\dxy$ defining a regression problem
$(\x^\top,y)\sim \dxy$, the quality of a vector $\w\in\R^d$ is measured by the expected square loss over $\dxy$:
\begin{align*}
  L_{\dxy}(\w) = \E\big[(\x^\top\w-y)^2\big].
\end{align*}
How many samples do we need to use to produce an \textit{unbiased} estimator $\wbh$ such
that the expected loss of $\wbh$ is no more than
$1+\epsilon$ times the optimum loss for the problem?
Concretely, given the input distribution $\dx$ and $\epsilon>0$, our goal is to find the
smallest $k$ for which there is a
$k\times d$-variate distribution $V_{\dx}^k$ and an estimator
$\wbh(\ybb|\Xb)$ such that 
$$\E\big[\wbh(\ybb|\Xb)\big]=\w^*,\quad\text{and}\quad
\E\big[L_{\dxy}\big(\wbh(\ybb|\Xb)\big)\big]\leq (1+\epsilon)L(\w^*),$$
where $\w^*=\argmin_\w L_{\dxy}(\w)$, $\Xb\sim V_{\dx}^k$ and $\yb_i\sim \dxy_{{\cal Y}|\x=\xbb_i}$.
Theorem \ref{t:unbiased} suggests that a natural candidate
for the sampling distribution $V_{\dx}^k$ of the $k$ points 
is volume-rescaled sampling $\vskx$ paired with the
estimator $\Xb^\dagger\ybb$.  Surprisingly we will show that this estimator
can have very large loss. Since the estimator does not depend on
the ordering of the rows of $\Xb$, it follows from Theorem
\ref{l:composition} that it can be equivalently constructed from a
volume-rescaled sample of size $d$ and an i.i.d.~sample of size $k-d$
from $\dx$. We denote such a sample as $\vsdx\cdot\dx^{k-d}$. Even
though this estimator is unbiased, most of the samples are coming from
the input distribution $\dx$, so if this distribution is particularly
ill-conditioned then we may not draw a point with high leverage until
a large number of samples were drawn. In the next
section, we present Theorem
\ref{t:lower-vskx} which implies the following lower
bound: \textit{For any $k\geq d$, there is a $(d,1)$-variate
  distribution  $D$ such that if $\Xb\sim\vskx$, then 
  $L_{\dxy}(\Xb^\dagger\ybb)\geq 2\cdot L_{\dxy}(\w^*)$ with probability at 
  least $0.25$.}

The standard solution for avoiding the case when the
examples have drastically different leverage scores is 
to replace the input distribution with the leverage score
distribution $\lev$. If the $k$ points are sampled i.i.d.\ from 
$\lev^k$ then it is known how to construct a \textit{biased} estimator 
which satisfies the $1+\epsilon$ loss bound for size $k=O(d\log d+d/\epsilon)$.  
In the below result we use a sampling distribution
consisting of a size $d$ volume-rescaled sample and a leverage score sample of size $k-d$,
i.e., the $k$ points are drawn from 
$\vsdx\!\cdot \lev^{k-d}$ to achieve both unbiasedness
and the loss bound with sample size $k=O(d\log d+d/\epsilon)$. 
The proof is broken down into two parts. The first part shows that the
loss bound holds when conditioned on a high probability event which
indicates when the leverage score sample is sufficiently well conditioned. This
part follows similarly to the standard analysis of leverage score sampling,
except we must additionally account for the negative dependence 
between the samples drawn by volume-rescaled sampling. The second part
of the proof analyzes the expected loss when the high probability
event fails. Here, standard analysis fails, and to address this, we
use a novel decomposition of the loss, relying on an expectation
inequality for volume-rescaled sampling (Lemma \ref{l:decorrelation}),
which is potentially of independent interest. In what follows, we use
$l_\x=\x^\top\sigd^{-1}\x$ to denote the leverage score of point $\x$.
\begin{theorem}\label{t:loss}
   Let $\dx$ be a $d$-variate distribution. For any $\epsilon >0$, there
   is  $k=O(d\log d+d/\epsilon)$ such that for any $D_{{\cal Y}|\x}$, if we sample $\Xb\sim \vsdx\!\cdot \lev^{k-d}$
   and $\yb_i\sim D_{{\cal Y}|\x=\xbb_i}$ then the estimator  $\wbh =
   \argmin_\w \sum_{i=1}^k \frac1{l_{\xbb_i}}(\xbb_i^\top\w -
   \yb_i)^2$ satisfies: 
   \begin{align*}
\quad \E[\wbh] &= \argmin_\w L_{\dxy}(\w)\quad\text{and}\quad \E\big[L_{\dxy}(\wbh)\big] \leq
     (1+\epsilon)\cdot \min_\w L_{\dxy}(\w).
     \end{align*}
   \end{theorem}
   
\begin{proof}
Let $\xbh^\top\sim \lev$ and $\yh\sim D_{{\cal Y}|\x=\xbh}$ jointly define
distribution $(\xbh^\top\!,\yh)\,\sim\,\dxyh$ and
\begin{align*}
  (\xbt^\top\!,\,\yt) = \bigg(\frac1{\sqrt{l_{\xbh}}}\,\xbh^\top\!,\ 
  \frac1{\sqrt{l_{\xbh}}}\,\yh\bigg)\ \sim\ \dxyt.
  \end{align*}
By Remark \ref{r:lev}, $\dxy$ and $\dxyt$ define the same loss function
up to a constant factor: 
\begin{align*}
  L_{\dxyt}(\w)=\E_{\lev}\!\Big[\frac 1{l_{\x}} \E_{\yh}\big[(\xbh^\top\w-\yh)^2\,|\,\xbh\big]\Big]
  =\E_{\dxy}\Big[\frac 1{l_{\x}}\big(\x^\top\w-y)^2\cdot
  l_{\x}\Big]\,/d = L_{\dxy}(\w)\,/d.
\end{align*}
Similarly, it follows that $\Sigmab_{\dxt}=\frac1d\sigd$. The key
property of distribution $\dxt$ is that it has uniform leverage
scores, implying that $\mathrm{Lev}_{\dxt}\!=\dxt$:
\begin{align}
\xbt^\top\Sigmab_{\dxt}^{-1}\xbt =\frac1{l_{\xbh}}\xbh^\top\Sigmab_{\dxt}^{-1}
  \xbh=\frac d{l_{\xbh}}\xbh^\top\sigd^{-1}\xbh=d.\label{eq:const-lev}
  \end{align}
    Let $\Xb$ and $\ybb$ be distributed as in the theorem. Note that we
can write the estimator $\wbh$ as follows:
\begin{align*}
\wbh=(\P_{\!\!\Xb}\Xb)^\dagger\P_{\!\!\Xb}\ybb,\quad\text{where}\quad
     \P_{\!\X}=\sum_{i=1}^k\frac1{\sqrt{l_{\x_i}}}\e_i\e_i^\top\!\in\R^{k\times k}.
\end{align*}
For any measurable function
$F(\P_{\!\!\Xb}\Xb, \P_{\!\!\Xb}\ybb)$, using Remarks \ref{r:expectations}
and \ref{r:lev}, as well as
$\det(\P_{\!\X})^2=\prod_{i=1}^k\frac1{l_{\x_i}}$ and $\det(\sigd)=\det(\Sigmab_{\dxt})d^d$, we obtain
\begin{align*}
\E\big[F(\P_{\!\!\Xb}\Xb, \P_{\!\!\Xb}\ybb)\big]
  &=\frac{\E_{\dx^k}\big[\E_{\y}[F(\P_{\!\X}\X,\P_{\!\X}\y)\,|\,\X] \cdot\det(\X_{[d]})^2\,\prod_{i=d+1}^kl_{\x_i}\big]}
    {d!\det(\sigd)\, d^{k-d}}&& (\X,\y)\sim \dxy^k
\\ &=\frac{\E_{\dx^k}\big[\E_{\y}[F(\P_{\!\X}\X, \P_{\!\X}\y)
     \,|\,\X]\cdot\det(\P_{\!\X_{[d]}}\X_{[d]})^2\prod_{i=1}^kl_{\x_i}\big]} 
     {d!\det(\Sigmab_{\dxt})d^d\, d^{k-d}}
\\ &=\frac{\E_{\dxh^k}\big[\E_{\ybh}[F(\P_{\!\!\Xh}\Xh,
     \P_{\!\!\Xh}\ybh)\,|\,\Xh]\cdot\det(\P_{\!\!\Xh_{[d]}}\!\Xh_{[d]})^2\big]}
     {d!\det(\Sigmab_{\dxt})}&&(\Xh,\ybh)\sim\dxyh^k,
\\
  &=\frac{\E_{\dxt^k}\big[\E_{\ybt}[F(\Xt,\ybt)\,|\,\Xt]\cdot\det(\Xt_{[d]})^2\big]}{d!\det(\Sigmab_{\dxt})}
&&(\Xt,\ybt)\sim\dxyt^k.
\end{align*}
This means that $\P_{\!\!\Xb}\Xb\sim\Vol_{\dxt}^d\!\!\cdot\dxt^{k-d}$ and
  $\P_{\!\xbb_i}\yb_i\sim \dxyt_{{\cal Y}|\x=\P_{\!\xbb_i}\!\xbb_i}$. So, since the losses
  $L_\dxy$ and $L_{\dxyt}$ are the same up to a constant factor and the estimator
  $\wbh=(\P_{\!\!\Xb}\Xb)^\dagger\P_{\!\!\Xb}\ybb$ is distributed identically to the
  corresponding estimator for $\dxyt$, proving the result for $\dxyt$ immediately implies the same for
  $\dxy$. Thus without loss of generality we can assume
  from now on that distribution $\dxy$ is the same as $\dxyt$, i.e. 
  we assume that $l_\x\!=d$ a.s.~for $\x\sim\dx$. This
 implies that $\lev\!=\dx$ and $\wbh=\Xb^\dagger\ybb$. 
    Also by Theorem \ref{l:composition}, matrix $\Xb\sim\vsdx\!\cdot\dx^{k-d}$
    after randomly reordering the rows becomes distributed as $\vskx$. 
    Thus by Theorem \ref{t:unbiased}, $\E[\Xb^\dagger\ybb]=\w^*$, 
    where $\w^*=\argmin_\w L_\dxy(\w)$,  showing the unbiasedness
    property of $\wbh$. 

    We are now ready to prove the loss bound.
      Note that $\E[(\x^\top\w^*\!-y)\,\x] =
\E[\x\x^\top]\w^*\!-\E[\x\,y]=\zero$, because
$\w^*=\sigd^{-1}\E[\x\,y]$. We use this to
perform a standard decomposition of the square loss:
\begin{align}
  L_{\dxy}(\w)
  &=
  \E_{\dxy}\big[(\x^\top\w-y)^2\big]\nonumber\\[-4mm]
&=\E\big[\big(\x^\top(\w-\w^*)\big)^2\big]
+
\overbrace{\E\big[\x^\top(\x^\top\w^*-y)\big]}^{\zero}(\w-\w^*)
+\E\big[(\x^\top\w^*-y)^2\big]\nonumber\\
  &=\E\big[\big(\x^\top(\w-\w^*)\big)^2\big] + L_{\dxy}(\w^*)\nonumber\\
  &=(\w-\w^*)^\top\E[\x\x^\top](\w-\w^*) + L_{\dxy}(\w^*) =
    \big\|\sigd^{\sfrac12}(\w-\w^*)\big\|^2 + L_{\dxy}(\w^*).\label{eq:loss-decomposition}
\end{align}
Substituting $\wbh=\Xb^\dagger\ybb=(\Xb^\top\Xb)^{-1}\Xb^\top\ybb$ for
$\w$, we additionally obtain:
\begin{align*}
  \big\|\sigd^{\sfrac12}(\wbh-\w^*)\big\|^2
  &=\big\|\sigd^{\sfrac12}(\Xb^\top\Xb)^{-1}\Xb^\top(\ybb-\Xb\w^*)\big\|^2\\
  &=\big\|(\sigd^{-\sfrac12}\Xb^\top\Xb \sigd^{-\sfrac12})^{-1}
    \sigd^{-\sfrac12}\Xb^\top(\ybb-\Xb
    \sigd^{-\sfrac12}\E[\sigd^{-\sfrac12}\x y])\big\|^2.
\end{align*}
Note that, without loss of generality, we can replace the distribution
$\x^\top\sim\dx$ by the distribution of $\x^\top\sigd^{-\sfrac12}$,
so from now on we will let $\sigd=\I$, in which case it suffices to bound $\E[\|\wbh-\w^*\|^2]=\E[\|(\Xb^\top\Xb)^{-1}\Xb^\top(\ybb-\Xb\w^*)\|^2]$.
A key step in the analysis is to ensure that the inverse
$(\Xb^\top\Xb)^{-1}$ is bounded. We can ensure that this is true with
high probability by relying on standard matrix Chernoff bounds, such
as the one stated below, essentially due to \cite{ahlswede2002strong}.
The particular version we use is adapted from \cite{chen2017condition}.
\begin{lemma}\label{l:matrix-tail}
There is a $C>0$, such that for any $\dx$ satisfying
$\x^\top\sigd^{-1}\x \leq K$ for all $\x\in\mathrm{supp}(\dx)$, if $\X\sim\dx^m$ and $m\geq C
K\epsilon^{-2}\log d/\delta$, then
\begin{align*}
    (1-\epsilon)\sigd\preceq\frac1m\X^\top\X\preceq
    (1+\epsilon)\sigd\quad\text{with probability }\geq 1-\delta.
\end{align*}
\end{lemma}

  Applying Lemma \ref{l:matrix-tail} for $\dx$ with $K=d$, $m=k-\lfloor k/2\rfloor$
and $\epsilon=1/2$ we obtain that if $k\geq d+4Cd\log d/\delta$
then the following event holds with probability $1-\delta$ with
respect to $\Xb\sim\vsdx\cdot\dx^{k-d}$ (where, recall that we let $\sigd=\I$):
\begin{align}
  \Ec:\qquad\Xb_{[s]^c}^\top\Xb_{[s]^c}\succeq\frac k4\cdot\I,\quad
  \text{ where }s=\lfloor k/2 \rfloor.\label{eq:embedding}
\end{align}
We next decompose the expectation $\E[\|\wbh-\w^*\|^2]$ into two
components, depending on whether event $\Ec$ occurs:
\begin{align}
  \E[\|\wbh-\w^*\|^2] = \Pr(\Ec)\cdot\E[\|\wbh-\w^*\|^2\mid \Ec] +
  \Pr(\neg\Ec)\cdot\E[\|\wbh-\w^*\|^2\mid \neg\Ec].
  \label{eq:decomposition}
\end{align}
The intuition here is that when $\Ec$ succeeds then this ensures a
strong control over the inverse $(\Xb^\top\Xb)^{-1}$ through
matrix concentration thanks to the i.i.d. sampled part of the matrix,
i.e., $\Xb_{[s]^c}\sim\dx^{k-s}$; whereas when $\Ec$ fails, then we can
still control the inverse by relying on the volume-rescaled sample
$\Xb_{[s]}\sim \vsdx\cdot \dx^{s-d}$. Here, thanks to the exponentially small failure
probability, $\Pr(\neg\Ec)$, we can rely on looser bounds for the
expectation.

\paragraph{Part 1: Event $\Ec$ succeeds}  We start by bounding the first
term in \eqref{eq:decomposition}, using a standard error decomposition
\citep[see Lemma 1 of][]{drineas2011faster}:
\begin{align*}
  \Pr(\Ec)\cdot\E[\|\wbh-\w^*\|^2\mid \Ec]
  &\leq
    \Pr(\Ec)\E\big[\|(\Xb^\top\Xb)^{-1}\|^2
    \|\Xb^\top(\ybb-\Xb\w^*)\|^2\mid\Ec\big]
  \\
  &\leq \frac{4^2}{k^2}
    \Pr(\Ec)\E\big[\|\Xb^\top(\ybb-\Xb\w^*)\|^2\mid\Ec\big]
  \\
  &\leq \frac{4^2}{k^2}\E\big[\|\Xb^\top(\ybb-\Xb\w^*)\|^2\big],
\end{align*}
where we used that $\|(\Xb^\top\Xb)^{-1}\|\leq
\|(\Xb_{[s]^c}^\top\Xb_{[s]^c})^{-1}\|\leq 4/k$, when conditioned on
$\Ec$.

We next bound the expectation $\E[\|\Xb^\top(\ybb-\Xb\w^*)\|^2]$. Unlike with
i.i.d.~leverage score sampling, this requires controlling the pairwise dependence
between indices because of the jointness of volume-rescaled
sampling. Denoting $\rbb=\ybb-\Xb\w^*$, and observing that vectors
$\Xb_{[d]}^\top\rbb_{[d]}$, $\xbb_{d+1}\rb_{d+1},\dots,
\xbb_{k}\rb_{k}$ are independent and mean zero, we have 
\begin{align}
  \E\big[\big\|\Xb^\top\rbb\big\|^2\big]
       &=
  \E\big[\big\|\Xb_{[d]}^\top\rbb_{[d]}\big\|^2\big] 
+\sum_{i\in[d]^c}\E\big[\|\xbb_i\rb_i\|^2\big]\nonumber
  \\ &=
       \sum_{i,j\in[d]}\E\big[\rb_i\rb_j\xbb_i^\top\xbb_j\big]
       + (k-d)\,\E\big[d\,(y-\x^\top\w^*)^2\big]\nonumber
  \\ &= d(d\!-\!1)\,\E\big[\rb_1\rb_2\xbb_1^\top\xbb_2\big]
+       d^2L_{\dxy}(\w^*)+(k-d)d\,L_{\dxy}(\w^*).\label{eq:marginals0}
\end{align}
The only difference in using volume-rescaled sampling rather than just
$\dx^k$ is the presence of the first term in \eqref{eq:marginals0},
which would be zero if the rows were fully independent. We will show
that due to the negative dependence of $\vsdx$ this term is in
fact non-positive. We rely on the
following lemma which describes the marginal distribution of subsets
of rows in volume-rescaled sampling of size $d$ by relying on known
properties of determinantal point processes \citep[see Proposition 19 in][]{dpp-independence}.
\begin{lemma}\label{l:joint}
The marginal distribution of $t$ rows of
$\Xb\sim\vsdx$ indexed by $T\subseteq[d]$ is
\begin{align*}
\Pr\big(\Xb_T\!\in\! A\big)=
  \E_{\dx^t}\big[\one_{[\X_T\in A]}\cdot \det\!\big(\X_T\sigd^{-1}\X_T^\top\big)\big]\,/d^{\underline{t}},
\end{align*}
where $A\subseteq\R^{t\times d}$ is measurable w.r.t.~$\dx^t$.
\end{lemma}
We apply Lemma \ref{l:joint} to the set
$T=\{1,2\}$ and compute the determinant of a $2\times 2$
matrix:
\begin{align*}
  \det(\X_T\sigd^{-1}\X_T^\top)
&=l_{\x_1}l_{\x_2}
- (\x_1^\top\sigd^{-1}\x_2)^2,
\end{align*}
Recall that we assumed $l_\x=d$ for $\x\sim \dx$, and $\sigd=\I$. We next show that
the first term in \eqref{eq:marginals0} is non-positive, so the pairwise
  dependence between the rows in volume-rescaled sampling can only 
  improve the bound. Denoting
  $r_i=y_i-\x_i^\top\w^*$, we have
  \begin{align*}
    d(d\!-\!1)\,\E\big[\rb_1\rb_2\xbb_1^\top\xbb_2\big]
    &=
      d(d\!-\!1)\,\E_{\dxy^2}\big[r_1r_2\x_1^\top\x_2\,\det(\X_T\Sigmab_{\dx}^{-1}\X_T^\top)\big]/d^{\underline{2}}
      \\
     &=\E_{\dxy^2}\big[r_1r_2\x_1^\top\x_2
      \big(d^2 -(\x_1^\top\x_2)^2\big)\big]
\\ &= d^2\big\|\underbrace{\E_{\dxy}[\x\, (y-\x^\top\w^*)]}_{\zero}\big\|^2-
     \underbrace{\E_{\dxy^2}\big[r_1r_2(\x_1^\top\x_2)^3\big]}_{E}.
  \end{align*}
$E$ can be written as a sum
  $\sum_c\E_{\dxy^2}[f_c(\x_1,y_1)f_c(\x_2,y_2)]=\sum_c(\E_{\dxy}[f_c(\x_1,y_1)])^2\geq 0$, where $f_c(\cdot)$ is
  some expression of its arguments, because $(\x_1,y_1)$ and
  $(\x_2,y_2)$ are independent and identically distributed.

Altogether, we obtained that
    $\E\big[\|\Xb^\top\rbb\|^2\big]\leq kd\,L_{\dxy}(\w^*)$, which in
    turn implies that
    \begin{align*}
        \Pr(\Ec)\cdot\E[\|\wbh-\w^*\|^2\mid \Ec]\leq \frac{4^2d}{k}\, L_{\dxy}(\w^*).
    \end{align*}

    \paragraph{Part 2: Event $\Ec$ fails} Let us again use
    the notation of $\rbb=\ybb-\Xb\w^*$. To bound the  second term in
    \eqref{eq:decomposition}, we use a somewhat different
    decomposition of $\|\wbh-\w^*\|$ than we did in Part 1:
\begin{align*}
  \|\wbh-\w^*\|^2
  &= \|\Xb^\dagger\rbb\|^2 \leq
  \|\Xb^\dagger\|^2\cdot\|\rbb\|^2
 =\|(\Xb^\top\Xb)^{-1}\|\cdot\big(\|\rbb_{[s]}\|^2 +
    \|\rbb_{[s]^c}\|^2\big)
  \\
  &\leq \tr\big((\Xb_{[s]}^\top\Xb_{[s]})^{-1}\big)\cdot\big(\|\rbb_{[s]}\|^2 +
    \|\rbb_{[s]^c}\|^2\big).
\end{align*}
So, taking expectation, and noting that $\Xb_{[s]}$ and $\rbb_{[s]}$
are independent of $\Ec$, we have:
\begin{align*}
  \E\big[\|\wbh-\w^*\|^2\mid\neg\Ec\big]\leq
  \E\big[\tr((\Xb_{[s]}^\top\Xb_{[s]})^{-1})\|\rbb_{[s]}\|^2\big]
  +\E\big[\tr((\Xb_{[s]}^\top\Xb_{[s]})^{-1})\big]\, \E\big[\|\rbb_{[s]^c}\|^2\mid\neg\Ec\big].
\end{align*}
Thus, we are able to restrict the conditioning on $\neg\Ec$ to only
the term $\E\big[\|\rbb_{[s]^c}\|^2\mid\neg\Ec\big]$, which allows us
to analyze the remaining terms as if they were distributed according
to volume-rescaled sampling, without the distribution being distorted
by the conditioning. In particular, using Theorem \ref{t:square-inverse} we obtain that:
\begin{align*}
  \E\big[\tr((\Xb_{[s]}^\top\Xb_{[s]})^{-1})\big] \leq
  \frac{d}{s-d+1}\leq \frac{3d}{k}.
\end{align*}
Next, with a slight abuse of notation, assume that the rows of
$\Xb_{[s]}$ are permuted (i.e., that $\Xb_{[s]}\sim
{\mathrm{VS}_{\!\dx}^s}$) so that they are identically
distributed. Then, we have:
\begin{align*}
  \E\big[\tr((\Xb_{[s]}^\top\Xb_{[s]})^{-1})\|\rbb_{[s]}\|^2\big]
  &=\sum_{i=1}^s\E\big[\rb_i^2\,\tr((\Xb_{[s]}^\top\Xb_{[s]})^{-1})\big]
=    s\cdot
    \E\big[\rb_s^2\,\tr((\Xb_{[s]}^\top\Xb_{[s]})^{-1})\big].
\end{align*}
To apply Theorem
\ref{t:square-inverse} again, we must disentangle the trace from
$r_s^2$, which is addressed in the following lemma proven at the end
of the section.
\begin{lemma}\label{l:decorrelation}
  If $\Xb\sim\vskx$, where $\sigd=\I$, then for any
  $f:\R^d\rightarrow \R_{\geq 0}$ and $i\in[k]$, 
  \begin{align*}
    \E\big[f(\xbb_i)\,\tr((\Xb^\top\Xb)^{-1})\big] \overset{(*)}\leq \frac dk\cdot \E_{\dx}\big[f(x)\big] +
    \frac{d-1}{k(k-d+1)}\cdot \E_{\dx}\big[\|\x\|^2f(\x)\big],
  \end{align*}
  as long as the right-hand side is well-defined, where $(*)$ becomes
  an equality if $\X\sim\dxk$ is a.s.~rank $d$. If we also assume
  that $\|\x\|^2=d$ a.s.~for $\x^\top\sim\dx$, then we get
  \[\E\big[f(\xbb_i)\,\tr((\Xb^\top\Xb)^{-1})\big]=\E[f(\xbb_i)]\E[\tr((\Xb^\top\Xb)^{-1})]
    = \frac d{k-d+1}\E_{\dx}[f(\x)].\]
\end{lemma}
Using the lemma with $f(\xbb_s)=\E[\rb_s^2\mid \xbb_s]$, we conclude that:
\begin{align*}
  \E\big[\tr((\Xb_{[s]}^\top\Xb_{[s]})^{-1})\|\rbb_{[s]}\|^2\big] \leq
  \frac{s d}{s-d+1}\,  L_{\dxy}(\w^*)\leq 2d \,  L_{\dxy}(\w^*).
\end{align*}
It remains to bound the final term,
$\E\big[\|\rbb_{[s]^c}\|^2\mid\neg\Ec\big]$. To that end, we define an
additional event $\Ec'$ as follows:
\begin{align*}
  \Ec':\qquad \Xb_{[s+1,k-1]}^\top\Xb_{[s+1,k-1]}
  \succeq \frac k4\cdot \I.
\end{align*}
Note that $\Ec'$ implies $\Ec$, and we can easily use
Lemma~\ref{l:matrix-tail} to bound its failure probability.
Also, observe that, since the marginal distribution of each vector $\xbb_i$
for $i\in[s]^c$ is the same, and the event $\Ec$ is invariant under
permutation of the indices of these vectors, the marginal
distributions of $\rb_i^2=(\yb_i-\xbb_i^\top\w^*)^2$ conditioned on $\neg\Ec$ are the
same for each $i\in[s]^c$, so:
\begin{align*}
  \E\big[\|\rbb_{[s]^c}\|^2\mid\neg\Ec\big]
  &= \sum_{i=s+1}^k\E[\rb_i^2\mid\neg\Ec]
\leq k\cdot\E[\rb_k^2\mid\neg\Ec]
=k\cdot \frac{\E[\rb_k^2\cdot\one_{\neg\Ec}]}{\Pr(\neg\Ec)}
  \\
  &\leq k\cdot \frac{\E[\rb_k^2\cdot\one_{\neg\Ec'}]}{\Pr(\neg\Ec)}
  =k\cdot \frac{\E[\rb_k^2]\Pr(\neg\Ec')}{\Pr(\neg\Ec)} =
   k\, \frac{\Pr(\neg\Ec')}{\Pr(\neg\Ec)}\,L_{\dxy}(\w^*),
\end{align*}
where we used the fact that $\Ec'$ is independent of $\rb_k$. Putting
everything together, we conclude that:
\begin{align*}
  \Pr(\neg\Ec)\cdot\E\big[\|\wbh-\w^*\|^2\mid\neg\Ec\big]
  &\leq \Pr(\neg\Ec)\cdot\Big(2d\,L_{\dxy}(\w^*) + \frac{3d}{k}\cdot
    k\,\frac{\Pr(\neg\Ec')}{\Pr(\neg\Ec)}\, L_{\dxy}(\w^*) \Big)
  \\
  &\leq \Pr(\neg\Ec') 5d\, L_{\dxy}(\w^*).
\end{align*}
It remains to note that, setting $\delta = 1/k$ in Lemma
\ref{l:matrix-tail}, we can ensure that $\Pr(\neg\Ec')\leq 1/k$ for
$k\geq C' d\log(dk)$ with sufficiently large constant $C'$. This can
be easily converted to a condition of the form $k\geq C''d\log
d$. Under this condition, combining Part 1 and Part 2, we obtain the following bound: 
\begin{align*}
  \E[L_{\dxy}(\wbh)] - L_{\dxy}(\w^*) =\E\big[\|\wbh-\w^*\|^2\big]\leq \frac{9d}k\,L_{\dxy}(\w^*) +
  \frac{5d}k\,L_{\dxy}(\w^*) =\frac {14d}k\,L_{\dxy}(\w^*),
\end{align*}
which concludes the proof.
\end{proof}

Note that, using Markov's inequality, we can convert the expected loss
bound to a bound that holds with high probability. Namely, sample
size $O(d\log d + d/(\epsilon\delta))$ suffices to obtain that
$L_{\dxy}(\wbh)\leq (1+\epsilon)\, L_{\dxy}(\w^*)$ holds with
probability $1-\delta$.

The above result can also be achieved if we replace the exact leverage
score sampling distribution with its approximation. As discussed in
Section \ref{s:algs}, producing samples from such approximation can be
more practical in settings where exact leverage scores are too
expensive to compute.
\begin{lemma}\label{l:loss}
  Theorem \ref{t:loss} still holds if we replace $l_\x$ with any
  $\hat{l}_\x$ such that $\frac12 l_\x\leq \hat{l}_\x\leq \frac32
  l_\x$ for all $\x^\top\!\in\mathrm{supp}(\dx)$ and also replace
  $\lev$ with the following $d$-variate distribution:
  \begin{align*}
   \levhh(A) \defeq \frac{\E_{\dx}\big[\one_{[\x^\top\in A]}\,\hat{l}_\x\big]}{\E_{\dx}\big[\hat{l}_\x\big]}.
  \end{align*}
\end{lemma}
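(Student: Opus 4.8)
The plan is to retrace the proof of Theorem~\ref{t:loss} essentially verbatim, substituting the approximate scores $\hat l_\x$ for the exact scores $l_\x$ throughout, and then to control the two places where exactness of $l_\x$ was actually used. I would set $(\xbt^\top,\yt)=\big(\tfrac1{\sqrt{\hat l_\xbh}}\xbh^\top,\,\tfrac1{\sqrt{\hat l_\xbh}}\yh\big)$ with $(\xbh^\top,\yh)\sim\dxyh$, where now $\dxyh$ has $\x$-marginal $\levhh$. Writing $Z\defeq\E_\dx[\hat l_\x]$, the definition of $\levhh$ gives $\E_{\levhh}[F(\xbh^\top)]=\E_\dx[F(\x^\top)\,\hat l_\x]/Z$, from which $\Sigmab_{\dxt}=\sigd/Z$ and $L_{\dxyt}(\w)=L_{\dxy}(\w)/Z$ follow by the same manipulation as in the proof of Theorem~\ref{t:loss}. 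Since $Z$ is a constant independent of $\w$, the two losses remain proportional and share the minimizer $\w^*$; moreover $\tfrac12 l_\x\le\hat l_\x\le\tfrac32 l_\x$ together with $\E_\dx[l_\x]=d$ gives $\tfrac d2\le Z\le\tfrac{3d}2$. With these facts the change-of-measure computation from the start of the proof of Theorem~\ref{t:loss} goes through unchanged, so $\P_{\!\!\Xb}\Xb\sim\Vol^d_{\dxt}\!\cdot\dxt^{k-d}$, and the unbiasedness claim \eqref{eq:loss-1} follows exactly as before from Theorems~\ref{l:composition} and~\ref{t:unbiased}.

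The one structural change is that $\dxt$ no longer has uniform leverage scores. A direct computation gives, for $\xbt$ the rescaling of $\xbh$, the leverage score $\xbt^\top\Sigmab_{\dxt}^{-1}\xbt=Z\,l_\xbh/\hat l_\xbh$, which by the bounds above lies in $[d/3,\,3d]$; in particular it is bounded by $K=3d=O(d)$. Hence Lemma~\ref{l:matrix-tail} applies to $\dxt$ with $K=O(d)$, so \eqref{eq:embedding} and the subspace-embedding bound $\|\Sigmab_{\dxt}^{\sfrac12}(\Xb^\top\Xb)^{-1}\Sigmab_{\dxt}^{\sfrac12}\|^2=O(1/k^2)$ still hold for $k=O(d\log d/\delta)$, only with a worse absolute constant. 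For the same reason every \emph{diagonal} contribution to $\E[\|\Sigmab_{\dxt}^{-\sfrac12}\Xb^\top\rbb\|^2]$ --- both the $(k-d)$ i.i.d.\ terms $\E_{\dxt}[r^2\,\xbt^\top\Sigmab_{\dxt}^{-1}\xbt]$ and, via the single-row marginal of Theorem~\ref{t:marginal}, the $d$ volume-sampled diagonal terms $\E_{\mathrm{Lev}_{\dxt}}[r^2\,\xbt^\top\Sigmab_{\dxt}^{-1}\xbt]$ --- is bounded by a constant multiple of its exact-case value, contributing $O(kd)\,L_{\dxyt}(\w^*)$ in total.

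The crux, and the step I expect to be the main obstacle, is the off-diagonal pairwise term $d(d-1)\,\E[\rb_1\rb_2\,\xbb_1^\top\Sigmab_{\dxt}^{-1}\xbb_2]$, which in the exact case was shown to be non-positive. Applying Lemma~\ref{l:joint} to $T=\{1,2\}$ and reweighting $\dxt$ back to $\dx$ as before yields
\[
d(d-1)\,\E\big[\rb_1\rb_2\,\xbb_1^\top\Sigmab_{\dxt}^{-1}\xbb_2\big]
=\v^\top\Sigmab_{\dxt}^{-1}\v-\E_{\dxt^2}\big[r_1r_2(\x_1^\top\Sigmab_{\dxt}^{-1}\x_2)^3\big],
\qquad \v\defeq\E_{\dx}\Big[\tfrac{l_\x}{\hat l_\x}\,r_\x\,\x\Big],
\]
where $r_\x=y-\x^\top\w^*$. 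The subtracted term is still a sum of squares (the i.i.d.\ symmetry argument is unaffected by the rescaling), so it only helps; the difficulty is that, unlike the exact case, the first term $\v^\top\Sigmab_{\dxt}^{-1}\v$ need not vanish. The key observation is that $\E_\dx[r_\x\x]=\zero$ by definition of $\w^*$, so writing $\tfrac{l_\x}{\hat l_\x}=1+\eta_\x$ with $|\eta_\x|\le1$ gives $\v=\E_\dx[\eta_\x r_\x\x]$; testing against an arbitrary unit vector $\u$ and using $\E[(\u^\top\sigd^{-\sfrac12}\x)^2]=1$, one Cauchy--Schwarz step gives $\|\sigd^{-\sfrac12}\v\|^2\le\E_\dx[r_\x^2]=L_{\dxy}(\w^*)$, whence $\v^\top\Sigmab_{\dxt}^{-1}\v=Z\|\sigd^{-\sfrac12}\v\|^2=O(d^2)\,L_{\dxyt}(\w^*)$. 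Since $k\ge d$, this $O(d^2)$ contribution is absorbed into the $O(kd)\,L_{\dxyt}(\w^*)$ bound already in hand, so $\E[\|\Sigmab_{\dxt}^{-\sfrac12}\Xb^\top\rbb\|^2]=O(kd)\,L_{\dxyt}(\w^*)$; Markov's inequality and a union bound with \eqref{eq:embedding}, exactly as in the proof of Theorem~\ref{t:loss}, then deliver $L_{\dxyt}(\Xb^\dagger\ybb)\le(1+\epsilon)L_{\dxyt}(\w^*)$ for $k=O(d\log d+d/\epsilon)$, which transfers to $L_{\dxy}$ by proportionality, establishing \eqref{eq:loss-2}.
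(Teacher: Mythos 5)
Your proposal is correct and follows essentially the same route as the paper's proof in Appendix~\ref{a:loss}: the same reduction to the rescaled distribution $\dxyt$ with $\Sigmab_{\dxt}=\sigd/\hat{d}$, the same observation that the rescaled leverage scores lie in $[d/3,3d]$ so Lemma~\ref{l:matrix-tail} applies with $K=3d$, the same three-term decomposition of $\E\big[\|\Sigmab_{\dxt}^{-\sfrac12}\Xb^\top\rbb\|^2\big]$ via Theorem~\ref{t:marginal} and Lemma~\ref{l:joint}, and the same conclusion by Markov plus a union bound. The only (cosmetic) difference is in bounding the no-longer-vanishing first-moment term in the cross term: you center it using $\E_{\dx}[r_\x\x]=\zero$ and apply Cauchy--Schwarz to $\eta_\x=l_\x/\hat{l}_\x-1$, whereas the paper bounds $\big\|\E[(y-\x^\top\w^*)\,l_\x\,\sigd^{-\sfrac12}\x]\big\|^2\leq\E[(y-\x^\top\w^*)^2l_\x^2]\leq 9d^2L_{\dxy}(\w^*)$ directly via the same orthonormal-projection inequality; both yield $O(d^2)L(\w^*)$, which is absorbed.
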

The proof presented in Appendix \ref{a:loss}, follows a similar
outline as for Theorem \ref{t:loss}, however it has some additional
steps because when $\levhh\neq \lev$ then the marginal
distribution of volume-rescaled sampling $\vsdx$ (which is still
$\lev$, see Theorem \ref{t:marginal}) is no longer $\levhh$.

\begin{proofof}{Lemma}{\ref{l:decorrelation}}
 Since the rows of $\Xb$ are exchangeable, without loss of generality
 assume that $i=1$. By definition of volume-rescaled sampling, we have:
  \begin{align*}
    \E\big[f(\xbb_1)\,\tr((\Xb^\top\Xb)^{-1})\big]
    \leq
    \frac{\E[f(\x_1)\,\tr(\adj(\X^\top\X))]}{\E[\det(\X^\top\X)]},\quad\text{for
    }\X\sim \dxk.
  \end{align*}
 We next derive a recursion for the numerator in the above
 expression. To that end, let $F(k)
 =\E[f(\x_1)\,\tr(\adj(\X^\top\X))]$. As a simple consequence of the
 Cauchy-Binet formula, we have
 $\adj(\X^\top\X) =
 \frac1{k-d+1}\sum_{i=1}^k\adj(\X_{-i}^\top\X_{-i})$ for any $k\geq d$, so:
 \begin{align*}
   F(k)
   &=
   \frac1{k-d+1}\sum_{i=1}^k\E\big[f(\x_1)\,\tr(\adj(\X_{-i}^\top\X_{-i}))\big]
   \\
   &=\E[f(\x_1)]\frac{\E[\tr(\adj(\X_{-1}^\top\X_{-1}))]}{k-d+1} +
     \frac{k-1}{k-d+1}\E\big[f(\x_1)\,\tr(\adj(\X_{-k}^\top\X_{-k}))\big]
   \\
   &\overset{(a)}{=}\E_{\dx}[f(\x)]\frac{\frac{(k-1)!}{(k-d)!}\tr(\adj(\sigd))}{k-d+1} +
     \frac{k-1}{k-d+1}F(k-1)
   \\
   &\overset{(b)}{=}\frac{(k-1)!}{(k-d+1)!}\,d\,\E_{\dx}[f(\x)]   +
     \frac{k-1}{k-d+1}F(k-1)
   \\
   &\overset{(c)}{=}\frac{(k-1)!}{(k-d)!}\,d\,\E_{\dx}[f(\x)] + {k-1\choose d-2}\,F(d-1),
 \end{align*}
 where in $(a)$ we used Lemma \ref{l:determinant}, $(b)$ follows because of the
 assumption that $\sigd=\I$, and in $(c)$ we unroll the recursion on
 $F(k)$ for as long as the Cauchy-Binet can be applied to the adjugate
 matrices. To compute $F(d-1)$, we use the definition of the adjugate
 matrix, together with the formula $\det(\A+\v\v^\top)=\det(\A)
 +\v^\top\adj(\A)\v$. Suppose that $\X\sim\dx^{d-1}$,  and let
 $j\in[d]$. Before we compute the desired expectation formula for the
 trace, we first derive the expectation formula for the $j$th diagonal
 element of the corresponding matrix:
 \begin{align*}
   \E\big[f(\x_1)\adj(\X^\top\X)_{jj}\big]
   &=\E\big[f(\x_1)\det((\X^{-j})^\top\X^{-j})\big]
   \\
   &=\E\big[f(\x_1)\det((\X_{-1}^{-j})^\top\X_{-1}^{-j}+\x_1^{-j}(\x_1^{-j})^\top)\big]
   \\
   &\overset{(a)}{=}\E\big[f(\x_1)(\x_1^{-j})^\top\E[\adj((\X_{-1}^{-j})^\top\X_{-1}^{-j})]\x_1^{-j}\big]
   \\
   &\overset{(b)}{=} (d-2)!\,\E\big[f(\x_1)(\x_1^{-j})^\top\!\adj(\I_{d-1}) \x_1^{-j}\big]
   \\
   &=(d-2)!\,\E_{\dx}\big[f(\x)\|\x^{-j}\|^2\big],
 \end{align*}
 where $\x^{-j}$ denotes vector $\x$ without the $j$th entry and
 $\X^{-j}$ denotes matrix $\X$ without the $j$th column, $(a)$
 follows because $\det((\X_{-1}^{-j})^\top\X_{-1}^{-j})=0$ 
 and $(b)$ comes from Lemma~\ref{l:determinant}. Finally, to compute
 the trace, we sum up over $j$:
 \begin{align*}
   F(d-1)
   &= \E\big[f(\x_1)\tr(\adj(\X^\top\X))\big]
=\sum_{j=1}^d  \E\big[f(\x_1)\adj(\X^\top\X)_{jj}\big]
   \\[-2mm]
   &=(d-2)!\sum_{j=1}^d \E_{\dx}\big[f(\x)\|\x^{-j}\|^2\big]
     =(d-2)!\sum_{j=1}^d\sum_{l\neq j}\E_{\dx}\big[f(\x)(\x^{j})^2\big]
   \\
   &=(d-2)!\cdot (d-1)\,\E_{\dx}\big[f(\x)\|\x\|^2\big] = (d-1)!\,\E_{\dx}\big[f(\x)\|\x\|^2\big].
 \end{align*}
 Finally, recalling from Lemma \ref{l:determinant} that
 $\E_{\dxk}[\det(\X^\top\X)] = \frac{k!}{(k-d)!}\det(\sigd)$, we
 obtain that for $\X\sim\dxk$:
 \begin{align*}
   \frac{\E[f(\x_1)\,\tr(\adj(\X^\top\X))]}{\E[\det(\X^\top\X)]}
   &=\frac
     {\frac{(k-1)!}{(k-d)!}}{\frac{k!}{(k-d)!}}\,d\,\E_{\dx}[f(x)]+
     \frac{{k-1\choose d-2}}{\frac{k!}{(k-d)!}} (d-1)!\,
     \E_{\dx}\big[f(\x)\|\x\|^2\big]
   \\
   &=\frac dk\,\E_{\dx}[f(x)] + \frac{d-1}{k(k-d+1)}\, \E_{\dx}\big[f(\x)\|\x\|^2\big],
 \end{align*}
 which completes the proof of the claim. Note that, analogously as in
 Theorem \ref{t:square-inverse}, under the assumption that
 $\rank(\X)=d$ almost surely, we can replace the inequality in the
 statement by an equality. If we additionally let $\|\x\|^2=d$ almost surely for
 $\x^\top\sim\dx$, which due to the assumption that $\sigd=\I$
 corresponds to the distribution $\dx$ having uniform leverage scores,
 then the result can be stated in a simpler way:
 \begin{align*}
   \E_{\vskx}\big[f(\xbb_i)\tr((\Xb^\top\Xb)^{-1})\big]
   &=\E_{\dx}[f(\x)]\Big(\frac dk + \frac{d(d-1)}{k(k-d+1)}\Big)
   \\
   &=\E_{\dx}[f(\x)]\,\frac d{k-d+1}
   \\
   &=\E_{\vskx}[f(\xbb_i)]\cdot \E_{\vskx}[\tr((\Xb^\top\Xb)^{-1})],
 \end{align*}
 which implies that random variables $f(\xbb_i)$ and
 $\tr((\Xb^\top\Xb)^{-1})$ are uncorrelated.
\end{proofof}

\section{Lower bounds} \label{s:lower}
In this section we present lower bounds demonstrating the
limitations of the least squares estimator under certain random
designs, starting with $\X\sim\dx^k$ which
samples $k$ points directly from the data distribution. The key
shortcoming of the least squares estimator $\X^\dagger\y$ in this
context is that it is usually biased. In particular, this means that the loss
of the mean of that estimator, $L_{\dxy}\big(\E[\X^\dagger\y]\big)$,
is larger than the minimum loss $L(\w^*)$, where $\w^*=\argmin_\w L_{\dxy}(\w)$. 
We next show that for some distributions $\dxy$ this bias can be quite significant.
 \begin{theorem}\label{t:bias}
   Let $(\x^\top,y)\sim\dxy$ be a $(d,1)$-variate distribution
   s.t.~$(\x^\top,y)= (Z\e_{J}^\top,Z^3)$ for $Z\sim \Nc(0,1)$ and 
     $J\sim\mathrm{Uniform}\big([d]\big)$ drawn independently.
 Then, for any $k\geq0$ and $(\X,\y)\sim\dxy^k$,
 \begin{align*}
   \E[\X^\dagger\y] &= (1-\delta) \cdot \w^*
\quad\quad\text{and}\quad\quad  L_{\dxy}\big(\E[\X^\dagger\y]\big)
   = \bigl(1 + \tfrac32\delta^2\bigr)
    \cdot L_\dxy (\w^*), \\[1mm]
\text{where}\quad  \delta &= \frac{2d}{k+1} \cdot
  \bigg( 1 - \frac{d}{k+2} + \frac{d-1}{k+2} \cdot \Big(1 - \frac1d \Big)^{k+1} \bigg) .
 \end{align*}
    \end{theorem}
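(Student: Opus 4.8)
The plan is to exploit the extreme sparsity of this design: since every example is $(\x^\top,y)=(Z\e_J^\top,Z^3)$, each row of $\X$ has exactly one nonzero entry. First I would record the population quantities. Using $\E[Z^2]=1$, $\E[Z^4]=3$, $\E[Z^6]=15$, together with $J\sim\mathrm{Uniform}([d])$ independent of $Z$ (so $\E[\e_J]=\tfrac1d\one$ and $\E[\e_J\e_J^\top]=\tfrac1d\I$), I get $\sigd=\E[\x\x^\top]=\tfrac1d\I$ and $\E[\x y]=\E[Z^4]\,\E[\e_J]=\tfrac3d\one$, hence $\w^*=\sigd^{-1}\E[\x y]=3\one$. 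The same moments give $L_\dxy(\w^*)=\E[(3Z-Z^3)^2]=9-18+15=6$, a constant I will reuse at the end.

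Next I would analyze the least squares solution. Because distinct columns of $\X$ have disjoint supports, $\X^\top\X$ is diagonal with $j$th entry $s_j=\sum_{i:J_i=j}Z_i^2$, so the minimum-norm solution has $(\X^\dagger\y)_j=\big(\sum_{i:J_i=j}Z_i^4\big)/\big(\sum_{i:J_i=j}Z_i^2\big)$ when $s_j>0$ and $(\X^\dagger\y)_j=0$ otherwise (the pseudoinverse zeroes out coordinates belonging to empty columns). By symmetry of the coordinates, $\E[\X^\dagger\y]=\alpha\one$ with $\alpha=\E[(\X^\dagger\y)_1]$, so the whole problem reduces to a one-dimensional computation controlled by the number $N_1=|\{i:J_i=1\}|\sim\mathrm{Binomial}(k,1/d)$ of examples landing in the first coordinate.

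The core computation is the conditional expectation $g(m)=\E\big[\sum_{i=1}^m Z_i^4\big/\sum_{i=1}^m Z_i^2\big]$ for i.i.d.\ standard normals. Here I would use the classical fact that for $Z_i\sim\Nc(0,1)$ the normalized squares $Z_1^2/S,\dots,Z_m^2/S$ (with $S=\sum_i Z_i^2\sim\chi^2_m$) follow a Dirichlet distribution that is independent of $S$; writing $\sum_i Z_i^4/S=S\sum_i (Z_i^2/S)^2$ and using $\E[(Z_1^2/S)^2]=\tfrac{3}{m(m+2)}$ (the second moment of a $\mathrm{Beta}(\tfrac12,\tfrac{m-1}2)$ variate) together with $\E[S]=m$ yields $g(m)=\tfrac{3m}{m+2}$. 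Conveniently this formula also returns $0$ at $m=0$, matching the empty-column case, so no separate treatment of rank-deficient $\X$ is needed. Writing $\tfrac{3m}{m+2}=3\big(1-\tfrac{2}{m+2}\big)$ then gives $\alpha=\E[g(N_1)]=3\big(1-2\,\E[\tfrac1{N_1+2}]\big)$, i.e.\ $\E[\X^\dagger\y]=(1-\delta)\,\w^*$ with $\delta=2\,\E[\tfrac1{N_1+2}]$.

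It remains to evaluate the binomial moment and to close the loss. For the moment I would use $\tfrac1{N_1+2}=\int_0^1 t^{N_1+1}\,dt$ and the generating function $\E[t^{N_1}]=(1-p+pt)^k$ with $p=1/d$, so $\E[\tfrac1{N_1+2}]=\int_0^1 t\,(1-p+pt)^k\,dt$; the substitution $u=1-p+pt$ reduces this to elementary power integrals, and after collecting the terms carrying the factor $q^{k+1}$ with $q=1-1/d$ one recovers exactly $\delta=\tfrac{2d}{k+1}\big(1-\tfrac{d}{k+2}+\tfrac{d-1}{k+2}(1-\tfrac1d)^{k+1}\big)$. Finally, plugging $\E[\X^\dagger\y]=(1-\delta)\w^*$ into the decomposition $L_\dxy(\w)=\|\sigd^{1/2}(\w-\w^*)\|^2+L_\dxy(\w^*)$ from \eqref{eq:loss-decomposition} (which holds for any distribution, as it only uses $\E[\x(\x^\top\w^*-y)]=\zero$), the cross term vanishes and $\|\sigd^{1/2}(\delta\w^*)\|^2=\delta^2\,\w^{*\top}\sigd\w^*=9\delta^2=\tfrac32\delta^2\,L_\dxy(\w^*)$, since $\w^{*\top}\sigd\w^*=9$ and $L_\dxy(\w^*)=6$, giving $L_\dxy(\E[\X^\dagger\y])=(1+\tfrac32\delta^2)\,L_\dxy(\w^*)$. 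The main obstacle is the ratio-of-sums expectation $g(m)$: the Dirichlet--$\chi^2$ independence is exactly what makes it tractable, and everything downstream is routine algebra.
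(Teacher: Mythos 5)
Your proposal is correct and follows essentially the same route as the paper's proof: the diagonal structure of $\X^\top\X$, the reduction of $\E\big[\sum_i Z_i^4/\sum_i Z_i^2\big]$ to $\tfrac{3m}{m+2}$ via the independence of the normalized squares from their sum (your Dirichlet phrasing is the same fact as the paper's uniform-on-the-sphere decomposition $Z = L\cdot\u$), and the evaluation of $\E[\tfrac{2}{N_1+2}]$ through the probability generating function integral. The only cosmetic difference is that you obtain the loss identity from the general decomposition \eqref{eq:loss-decomposition} rather than by directly expanding $\E[(Z^3-3(1-\delta)Z)^2]$, which yields the same $6+9\delta^2$.
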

\begin{proof}
  Since $\E[\x\x^\top] = (1/d) \I$ and $\E[y\x] = \E[Z^4\e_{J}] = (3/d,\dotsc,3/d)$, it follows that $\w^* = (3,\dotsc,3)$.
  For any $c \in \R$, the loss of $(1-c) \cdot \w^*$ is $L_{\dxy}((1-c) \cdot \w^*) = \E[ (Z^3 - 3(1-c)Z)^2 ] = 6 + 9c^2 = (1+3c^2/2) \cdot L_{\dxy}(\w^*)$.

  It remains to show that $\E[\X^\dagger\y] = (1-\delta) \cdot \w^*$, i.e., each entry of $\X^\dagger\y$ has expectation $3 \cdot (1-\delta)$.
  Let us write $\x_i = Z_i \e_{J_i}$ and $y_i = Z_i^3$ for $i=1,\dotsc,k$, where $(Z_i,J_i)$ for $i=1,\dotsc,k$ are independent copies of $(Z,J)$.
  Furthermore, let $S_j := \{ i \in [k] : J_i = j \}$ for $j=1,\dotsc,d$.
  Then $\X^\top\X$ is a diagonal matrix whose $(j,j)$-th entry is $\sum_{i \in S_j} Z_i^2$, and $\X^\top\y$ is a vector whose $j$-th entry is $\sum_{i \in S_j} Z_i^4$.
  Therefore, the $j$-th entry of $\X^\dagger\y$ is
  \[ (\X^\dagger\y)_j = \frac{\sum_{i \in S_j} Z_i^4}{\sum_{i \in S_j} Z_i^2} . \]
  Here, we use the convention $0/0 = 0$ to handle the possibility of $S_j = \emptyset$.

  We first condition on $S_j$, and then take expectation with respect to the $Z_i$'s.
  For notational convenience, assume $S_j = \{1,\dotsc,m\}$.
  Recall that the joint distribution of $(Z_1,\dotsc,Z_m)$ is the same as that of $L \cdot \u$, where $L^2$ is a $\chi^2$ random variable with $m$ degrees of freedom, $\u = (u_1,\dotsc,u_m)$ is uniformly distributed on the unit sphere in $\R^m$, and $L^2$ and $\u$ are independent.
  Then
  \[
    \E\left[ \frac{\sum_{i=1}^m Z_i^4}{\sum_{i=1}^m Z_i^2} \right]
    = \E\left[ \frac{L^4 \sum_{i=1}^m u_i^4}{L^2 \sum_{i=1}^m u_i^2} \right]
    \overset{(a)}{=} \E\left[ L^2 \sum_{i=1}^m u_i^4 \right]
    \overset{(b)}{=} m^2 \cdot \E[ u_1^4 ]
    \overset{(c)}{=} m^2 \cdot \frac{3}{m(m+2)}
    .
  \]
  Above, $(a)$ uses the fact that $\sum_{i=1}^m u_i^2 = 1$;
  $(b)$ uses the independence of $L^2$ and $\u$, symmetry, and the fact $\E[L^2]=m$;
  and $(c)$ follows from Proposition~\ref{prop:uniform-moments}.
  Therefore, returning to the original notation, we have
  \[ \E\left[ (\X^\dagger\y)_j \mid S_j \right] =  3 \cdot \left( 1 - \frac{2}{|S_j|+2} \right) . \]
  (Note that this is consistent with the case where $S_j = \emptyset$.)

  Now we take expectation with respect to $S_j$.
  Observe that $|S_j|$ is Bernoulli-distributed with $k$ trials and success probability $\Pr(J = j) = 1/d$.
  Therefore, using the probability generating function for $|S_j|$, which is given by $G(t) := (1-1/d+t/d)^k$, we have
  \[
    \E\left[ \frac{2}{|S_j|+2} \right]
    = 2 \int_0^1 t \cdot G(t) \dif t
    = 2 \cdot \frac{d(k-d+2) + (d-1)^2 (1-1/d)^k}{(k+1)(k+2)}
    = \delta
  \]
  \citep[see, e.g.,][]{chao1972negative}.
  So we conclude $\E[ (\X^\dagger \y)_j ] = 3 \cdot (1-\delta)$.
\end{proof}
In Section~\ref{s:unbiased} we showed that a random design based on
volume-rescaled sampling, $\Xb\sim\vskx$, makes the least squares
estimator unbiased for all distributions $\dxy$. Recall that by
Theorem \ref{l:composition} the same estimator can also be obtained from
$\Xb\sim\vsdx\cdot\dx^{k-d}$. Despite offering unbiasedness, this
random design does not guarantee
strong loss bounds. This forced us to combine
volume-rescaled sampling with leverage score sampling  in Section
\ref{s:loss}, obtaining distribution $\vsdx\cdot\lev^{k-d}$. 
The following lower bound
shows that the loss bound obtained for this random design (Theorem
\ref{t:loss}) cannot be achieved by vanilla volume-rescaled sampling $\vskx$. 
This general lower bound can also be
easily adapted to the previously studied variants of discrete volume sampling from
finite datasets \citep{avron-boutsidis13,unbiased-estimates-journal}.

\begin{theorem}\label{t:lower-vskx}
  Let $(\x^\top,y)\sim\dxy$ be a $(d,1)$-variate distribution for which:
  \begin{align*}
    (\x^\top,y) = \begin{cases}
      (\e_i^\top,1)&\text{ for each $i\in[d]$ with probability}\ \frac\delta d,\\
      (\gamma\e_i^\top,0)&\text{ for each $i\in[d]$ with probability}\ \frac{1-\delta}{d}.
    \end{cases}
  \end{align*}
For any $k\geq d$, there is $\gamma,\delta\in(0,1)$ such
  that if $\Xb\sim\vskx$ and
  $\yb_{i}\sim D_{{\cal Y}|\x=\xbb_{i}}$, then
  \begin{align*}
      \Pr\Big(L_{\dxy}\big(\Xb^\dagger\ybb\big)\geq 2\cdot\min_\w
    L_{\dxy}(\w)\Big)\geq 0.25.
   \end{align*}
 \end{theorem}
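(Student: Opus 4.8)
The plan is to exploit the axis-aligned, permutation-symmetric structure of $\dxy$, which makes every population quantity explicit and decouples the least-squares estimator across the $d$ coordinates. First I would record that, since every point is a scalar multiple of some $\e_i$, the covariance is a scaled identity $\sigd=\sigma^2\I$ with $\sigma^2=\frac{\delta+(1-\delta)\gamma^2}{d}$, while $\E[\x y]=\frac{\delta}{d}\one$, so that $\w^*=w^*\one$ with $w^*=\frac{\delta}{\delta+(1-\delta)\gamma^2}$. Introducing $c:=\frac{(1-\delta)\gamma^2}{\delta}$ gives the convenient forms $w^*=\frac1{1+c}$ and $\sigma^2=\frac{\delta(1+c)}{d}$; a direct computation yields $L_\dxy(\w^*)=\delta(w^*-1)^2+(1-\delta)\gamma^2(w^*)^2=\frac{\delta c}{1+c}$, while the bias--variance split \eqref{eq:loss-decomposition} gives $L_\dxy(\wh)=L_\dxy(\w^*)+\sigma^2\|\wh-\w^*\|^2$ for any estimate $\wh$.

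Next I would describe the estimator $\wh=\Xb^\dagger\ybb$. Because $\Xb^\top\Xb$ is diagonal and $\Xb^\top\ybb$ is coordinatewise, $\wh$ decouples as $\wh_i=\frac{m_i}{m_i+(n_i-m_i)\gamma^2}$, where $n_i$ is the number of sampled rows in direction $i$ and $m_i\le n_i$ of them are response-$1$ rows $\e_i$ (the response-$0$ rows $\gamma\e_i$ contribute $0$ to $\Xb^\top\ybb$); in particular $\wh_i=0$ exactly when $m_i=0$. I would then invoke Theorem \ref{l:composition} to realize $\Xb\sim\vskx$ as a size-$d$ volume sample from $\vsdx$ together with $k-d$ i.i.d.\ draws from $\dx$. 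For this distribution the size-$d$ volume sample places exactly one point in each direction, and since the squared-volume reweighting factorizes over coordinates as $\prod_i c_i^2$ (with $c_i\in\{1,\gamma\}$ the magnitude chosen in direction $i$), these $d$ points are mutually independent, each equal to the response-$1$ point $\e_i$ with probability $w^*$ and to $\gamma\e_i$ with probability $1-w^*$. This independence is the crucial structural fact.

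These two facts combine with the choice $c=1/d$, i.e.\ $\gamma^2=\frac{\delta}{d(1-\delta)}$. On the event $\{m_{i_0}=0\}$ for some $i_0$ we have $\wh_{i_0}=0$, so $\|\wh-\w^*\|^2\ge(w^*)^2$ and the excess loss is at least $\sigma^2(w^*)^2=\frac{\delta}{d(1+c)}$, which for $c=1/d$ equals $\frac{\delta c}{1+c}=L_\dxy(\w^*)$; hence $L_\dxy(\wh)\ge 2L_\dxy(\w^*)$ on this event. It therefore suffices to show $\Pr(\exists i:m_i=0)\ge 0.25$.

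To bound this probability I would decouple the dependence coming from the i.i.d.\ allocation by conditioning on the clean event that none of the $k-d$ i.i.d.\ points is a response-$1$ point, which has probability $(1-\delta)^{k-d}$. On this event each $m_i$ is determined by the volume sample alone, so $\{\exists i:m_i=0\}$ is exactly the event that not all $d$ (independent) volume points are response-$1$, of probability $1-(w^*)^d=1-\bigl(\tfrac{d}{d+1}\bigr)^d\ge\tfrac12$ for every $d\ge1$. Hence $\Pr(\exists i:m_i=0)\ge(1-\delta)^{k-d}\bigl(1-(w^*)^d\bigr)$, and choosing $\delta$ small enough that $(1-\delta)^{k-d}\ge\tfrac12$ (possible for any fixed $k$) delivers the required $\ge0.25$. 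The main obstacle is precisely this last step: the single-coordinate failure probability $1-w^*=\frac1{d+1}$ is too small for a union bound over coordinates to yield a constant, so one must genuinely use that the $d$ directions can fail independently. The conditioning trick is what makes this rigorous---it reduces $\{\exists i:m_i=0\}$ to a statement about the independent volume sample and turns the awkward multinomial dependence of the i.i.d.\ part into a harmless factor $(1-\delta)^{k-d}$. A minor point to check en route is that $\Xb$ has full rank almost surely---so that every $\wh_i$ is well defined with $n_i\ge1$---which is immediate since $\vsdx$ already contributes one point in each direction.
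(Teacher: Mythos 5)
Your proposal is correct and takes essentially the same route as the paper: both use Theorem \ref{l:composition} to split $\Xb$ into a size-$d$ volume sample plus $k-d$ i.i.d.\ points, identify the bad event that some coordinate direction receives no response-$1$ point (forcing that coordinate of $\Xb^\dagger\ybb$ to zero and hence loss $\ge 2L_{\dxy}(\w^*)$ for $\gamma^2\approx\delta/(d(1-\delta))$), and bound its probability by a product of a volume-sample term and an i.i.d.\ term, each at least a constant for $\delta$ small relative to $k$. The only cosmetic differences are your constants ($c=1/d$ versus the paper's $1/(2d)$), your conditioning on \emph{no} i.i.d.\ response-$1$ draw rather than on missing one specific direction, and your derivation of the volume-sample probability $1-(w^*)^d$ via coordinatewise independence of the determinantal weights instead of the paper's direct density calculation --- all of which yield the same bound.
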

 Note that the above statement immediately implies a
 lower bound for the \emph{expected} loss of the  estimator $\Xb^\dagger\ybb$, namely,
 that  $\E\big[L_\dxy(\Xb^\dagger\ybb)\big]-L_\dxy(\w^*)\geq
0.25\cdot L_\dxy(\w^*)$.  This shows that the guarantee in
 Theorem~\ref{t:loss}  cannot be established  for vanilla volume-rescaled
 sampling with $\epsilon<0.25$.
\begin{proof}
First, we find $L_{\dxy}(\w^*)$. Simple calculations show that: 
\begin{align*}
\sigd&= \frac{\delta+\gamma^2(1-\delta)}{d}\, \I
\quad
\text{and}\quad \w^* =
            \frac{\delta}{\delta+\gamma^2(1-\delta)}\one_d,\quad \text{so}\\[2mm]
L_{\dxy}(\w^*) &= \delta\,(1-\e_1^\top\w^*)^2 + (1-\delta)\,(\gamma\e_1^\top\w^*)^2
  = \frac{\gamma^2\delta(1-\delta)}{\delta+\gamma^2(1-\delta)}.  
\end{align*}
Let $A_{\Xb}$ denote the event that there exists
$j\in[d]$ such that no vector 
$\xbb_i$ is equal to $\e_j$. If $A_{\Xb}$ holds then the $j$th component of
$\Xb^\dagger\ybb$ is $0$ so, setting $\gamma^2=
\frac\delta{2d(1-\delta)}$, 
\begin{align*}
 L_{\dxy}(\Xb^\dagger\ybb)\geq \frac\delta d =
  2\,\frac{\gamma^2\delta(1-\delta)}{\delta}\geq
  2\,\frac{\gamma^2\delta(1-\delta)}{\delta+\gamma^2(1-\delta)}=2\,
  L_{\dxy}(\w^*)\qquad\text{(conditioned on $A_{\Xb}$)}.
\end{align*}
It remains to lower bound the probability of $A_{\Xb}$. We use Theorem
\ref{l:composition} to decompose $\Xb$ into $\Xb_S\sim\vsdx$ and
$\Xb_{S^c}\sim\dx^{k-d}$. Setting $\delta=\frac d{4k}$, we obtain:
\begin{align*}
  \Pr(A_{\Xb})&\overset{(a)}{\geq} \Pr(A_{\Xb_S})\,\Big(1-\frac\delta d\Big)^{k-d}
\\ &\overset{(b)}{=}\bigg(1-\frac{\det(\I)}{d!\det(\sigd)}\cdot d! \Big(\frac\delta
                d\Big)^d\bigg)\Big(1-\frac\delta d\Big)^{k-d}
  \\ &=\bigg(1-\frac1{(1+\gamma^2\frac{1-\delta}{\delta})^d}\bigg)
       \Big(1-\frac\delta d\Big)^{k-d}
\\ & =\bigg(1-\frac1{(1+\frac1{2d})^d}\bigg)
     \Big(1-\frac\delta d\Big)^{k-d}
  \\ &\overset{(c)}{\geq} \bigg(1-\frac1{1+\frac1{2}}\bigg)
            \Big(1-\delta\,\frac{k-d}d\Big)\ \geq\ \frac13 \cdot
       \frac34\ =\ \frac14,
\end{align*}
where $(a)$ follows because if some unit vector $\e_j$ is missed by
$\Xb_S$ and it is not selected by any of the $k-d$ i.i.d.~samples then
$A_{\Xb}$ holds. In $(b)$, factor $d!(\frac\delta d)^d$ is the
probability of selecting some row-permutation of the identity matrix in $\dx^d$.
Finally, $(c)$ is Bernoulli's inequality applied twice.
\end{proof}

\section{Algorithms}
\label{s:algs}
We present a number of algorithms for implementing size $d$
volume-rescaled sampling $\vsdx$ under various assumptions on the
distribution $\dx$. Theorem \ref{l:composition} implies that we can
then construct $\vskx$ by combining $\vsdx$ with an i.i.d.~sample
$\dx^{k-d}$. We can also combine $\vsdx$ with a leverage
score sample $\lev^{k-d}$ or its approximation (see Theorem \ref{t:loss} and Lemma
\ref{l:loss}) to obtain an unbiased estimator with strong loss
bounds. Efficient algorithms for approximate leverage score sampling
were given by \cite{fast-leverage-scores}, as discussed in Section
\ref{s:finite}. Our discussion of volume-rescaled sampling algorithms
starts with the Gaussian random design (Theorem \ref{t:gaussian}). We
then propose a more general algorithm for arbitrary distributions
(Theorem \ref{t:det}), based on a novel idea of
\textit{distortion-free intermediate sampling}, and we adapt it to
some practical settings. Perhaps the most important setting from the
perspective of 
computer science is when distribution $\dx$ is defined as uniform
over a given finite set of $n$ row vectors in $d$ dimensions, where
$n\gg d$. In this case, we improve the time complexity of discrete volume
sampling from $O(nd^2)$ to $O(nd\log n + d^4\log d)$.

\subsection{Volume-rescaled Gaussian distribution}
\label{s:gaussian}
In this section, we obtain a simple formula for producing volume-rescaled
samples when $\dx$ is a centered multivariate Gaussian with any
(non-singular) covariance matrix. We achieve this by making a
connection to the Wishart distribution. The main result follows.
\begin{remark}
For this theorem, given a p.d.~matrix $\A$, we use $\A^{\frac12}$
to denote the unique lower triangular matrix with positive diagonal
entries s.t. $\A^{\frac12}(\A^{\frac12})^\top=\A$.
\end{remark}
\begin{theorem}\label{t:gaussian}
Assume $\dx$ 
    is the normal distribution, i.e., $\x\sim\Nc_d(\zero, \sigd)$. If
  $\X_1 \sim\dx^{k}$ and $\X_2\sim\dx^{k+2}$ are jointly independent,
  then $\X_1(\X_1^\top\X_1)^{-\frac12}(\X_2^\top\X_2)^{\frac12}  \ \sim\ \vskx$.
\end{theorem}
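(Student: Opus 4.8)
The plan is to recognize $\Xb$ as a matrix whose orientation is Haar-uniform and whose Gram matrix is Wishart with $k+2$ degrees of freedom, and to show that this is exactly the decomposition of a sample from $\vskx$; the whole theorem then hinges on one numerical coincidence, namely that raising the Wishart degrees of freedom from $k$ to $k+2$ multiplies its density by $\det(\cdot)$, which is precisely the determinantal reweighting in Definition~\ref{t:cauchy-binet}. Throughout I write $(\cdot)^{1/2}$ for the symmetric positive-definite square root (so that $\U_1:=\X_1(\X_1^\top\X_1)^{-1/2}$ has orthonormal columns) and set $\M_1=\X_1^\top\X_1$, $\M_2=\X_2^\top\X_2$, so that $\Xb=\U_1\,\M_2^{1/2}$ and hence $\Xb^\top\Xb=\M_2^{1/2}\U_1^\top\U_1\M_2^{1/2}=\M_2$. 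The first classical fact I would invoke is the polar decomposition of a Gaussian matrix: because the rows of $\X_1$ are i.i.d.\ $\Nc_d(\zero,\sigd)$, the factor $\U_1$ is Haar-distributed on the Stiefel manifold $\{\Q\in\R^{k\times d}:\Q^\top\Q=\I\}$ and is independent of $\M_1$. (For any $k\times k$ orthogonal $\Q$ one has $\Q\X_1\overset{d}{=}\X_1$ while $\U_1$ is left-equivariant, which forces $\U_1$ to be Haar; independence of $\U_1$ and $\M_1$ then follows by conditioning on $\M_1$.) Since $\U_1$ is a function of $\X_1$ and $\M_2$ of the independent $\X_2$, the pair $(\U_1,\M_2)$ is independent, and $\M_2\sim W_d(\sigd,k+2)$, the Wishart law with $k+2$ degrees of freedom and scale $\sigd$.

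Next I would describe a sample $\Xb'\sim\vskx$ through the same orientation/Gram coordinates. By Definition~\ref{t:cauchy-binet} its density on $\R^{k\times d}$ is proportional to $\det(\X^\top\X)\exp(-\tfrac12\tr(\sigd^{-1}\X^\top\X))$, i.e.\ the Gaussian density reweighted by $\det(\X^\top\X)$. This density depends on $\X$ only through $\X^\top\X$, so it is invariant under left multiplication by $O(k)$; consequently the orientation of $\Xb'$ is again Haar-uniform on the Stiefel manifold and independent of its Gram $\M':=\Xb'^\top\Xb'$. The crux is the law of $\M'$: under the change of variables $\X\mapsto(\U,\M)$ the Gaussian puts the Wishart density $\propto \det(\M)^{(k-d-1)/2}\exp(-\tfrac12\tr(\sigd^{-1}\M))$ on the Gram, and multiplying by the reweighting factor $\det(\M)$ yields $\det(\M)^{(k+2-d-1)/2}\exp(-\tfrac12\tr(\sigd^{-1}\M))$, which is exactly the $W_d(\sigd,k+2)$ density. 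Hence a $\vskx$ sample also decomposes as a Haar-uniform orientation times an independent $W_d(\sigd,k+2)$ Gram.

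Finally I would match the two descriptions. The law of any matrix of the form $\Q\,\A^{1/2}$, with $\Q$ Haar-uniform on the Stiefel manifold and independent of a positive-definite $\A$, depends only on the law of the Gram $\A$: right multiplication of $\Q$ by an orthogonal matrix preserves its Haar law, so only the orbit of $\A^{1/2}$ under $O(d)$, i.e.\ $\A$ itself, matters. Since both $\Xb$ and $\Xb'$ are of this form with the same Gram law $W_d(\sigd,k+2)$ and independent Haar orientations, they are equal in distribution, which is the claim. I expect the main obstacle to be the rigorous bookkeeping on the $\vskx$ side: one must justify the change of variables $\X\mapsto(\U,\M)$ and check that its Jacobian is the same for the Gaussian and for its $\det$-reweighting (it is, since the Jacobian depends only on $\M$ and the factor $\det(\M)$ is absorbed purely into the Gram marginal), so that the orientation stays Haar and only the Wishart degrees of freedom shift. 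A secondary point, needed so that all square roots and the polar factor are well defined, is that $\X_1$ and $\X_2$ have full column rank almost surely, which holds because $k\ge d$ and $\sigd$ is nonsingular. Reducing everything to this single Wishart identity is also what makes the precise count $k+2$ (rather than, say, $k+1$) appear.
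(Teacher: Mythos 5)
Your proof is correct, and it takes a genuinely different route from the paper's for the main step. Both arguments share the same pivotal observation (the paper's Lemma~\ref{t:wishart}): reweighting the Gaussian law of $\X$ by $\det(\X^\top\X)$ multiplies the Wishart density of the Gram matrix by $\det(\cdot)$, which shifts the degrees of freedom from $k$ to $k+2$. Where you diverge is in showing that the explicit construction $\X_1(\X_1^\top\X_1)^{-1/2}(\X_2^\top\X_2)^{1/2}$ realizes the correct conditional law given the Gram. The paper does this in two pieces: Lemma~\ref{l:conditional} (a limiting/Radon--Nikodym argument showing that $\vskx$ and $\dxk$ have the same conditional distribution given $\X^\top\X$, valid for arbitrary $\dx$), followed by the matrix-variate beta/Dirichlet splitting machinery (Lemma~\ref{l:mvb} and Theorem~6.3.14 of Gupta--Nagar) to identify the conditional law of the Gaussian given its Gram with the law of the constructed rows. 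You instead use the classical polar decomposition of a Gaussian matrix: left-$O(k)$-invariance of both the Gaussian density and its $\det$-reweighting, together with uniqueness of the invariant probability measure on the Stiefel manifold, shows that in both cases the orientation is Haar and independent of the Gram, after which matching the two descriptions is immediate. Your route is more elementary and self-contained (no external matrix-variate distribution theory), and your closing observation --- that right-$O(d)$-invariance of the Haar orientation makes the law of $\Q R$ depend only on the law of $R^\top R$ --- has the added benefit of making the statement insensitive to which square root is used, a point worth noting since you take the symmetric root while the paper's preceding remark fixes a Cholesky-type convention. What the paper's approach buys in exchange is that Lemma~\ref{l:conditional} is distribution-free and reusable outside the Gaussian setting, whereas your Haar argument is intrinsically tied to the rotational invariance of the normal distribution.
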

The remainder of Section \ref{s:gaussian} is dedicated to proving
Theorem \ref{t:gaussian}, so we assume that matrix $\X\sim\dxk$
consists of centered $d$-variate normal row vectors with covariance
$\sigd$. Then matrix $\Sigmab=\X^\top\X\in\R^{d\times d}$ is distributed according to
Wishart distribution $W_d(k,\sigd)$ with $k$ degrees of
freedom. The density function of this random matrix is proportional to
$\det(\Sigmab)^{(k-d-1)/2}\exp(-\frac12\tr(\sigd^{-1}\Sigmab))$. On the
other hand, if $\Sigmabb = \Xb^\top\Xb$ is constructed from
$\Xb\sim\vskx$, then its density function is 
multiplied by an additional $\det(\Sigmabb)$, thus increasing
the value of $k$ in the exponent of the determinant. This observation
leads to the following result.
\begin{lemma}\label{t:wishart}
If $\x\sim \Nc_d(\zero,\sigd)$ and $\Xb\sim\vskx$, then
$\Xb^\top\Xb\sim W_d(k+2,\sigd).$
\end{lemma}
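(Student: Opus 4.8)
The plan is to carry out the entire argument at the level of the Gram matrix $\Sigmab=\X^\top\X$, exploiting the fact that volume-rescaled sampling reweights the law of $\X$ by $\det(\X^\top\X)$, which depends on $\X$ only through $\Sigmab$. This reduces the claim to a one-line manipulation of the Wishart density. First I would record the standard distributional fact, already written out in the text just before the lemma: for $\X\sim\dxk$ with $\dx=\Nc_d(\zero,\sigd)$ and $k\geq d$, the Gram matrix $\Sigmab=\X^\top\X$ follows the Wishart law $W_d(k,\sigd)$, whose density on the cone of positive definite matrices is $c_k\,\det(\Sigmab)^{(k-d-1)/2}\exp(-\tfrac12\tr(\sigd^{-1}\Sigmab))$, with $c_k$ the usual normalizing constant involving the multivariate gamma function $\Gamma_d(k/2)$.

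The key step is to transport the reweighting through the map $\X\mapsto\Sigmab$. By Remark \ref{r:expectations}, for any measurable test function $g$ on $d\times d$ matrices,
\[ \E_{\Xb\sim\vskx}\big[g(\Xb^\top\Xb)\big] = \frac{\E_\X\big[\det(\X^\top\X)\,g(\X^\top\X)\big]}{\E_\X\big[\det(\X^\top\X)\big]}. \]
Since both $\det(\X^\top\X)$ and $g(\X^\top\X)$ are measurable with respect to $\Sigmab=\X^\top\X$, I may replace the expectation over $\X\sim\dxk$ by one over $\Sigmab\sim W_d(k,\sigd)$ directly, \emph{without} computing the Jacobian of the many-to-one map $\X\mapsto\X^\top\X$: that structure has already been integrated out in forming the Wishart law. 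It follows that the pushforward law of $\Sigmabb=\Xb^\top\Xb$ has density proportional to $\det(\Sigmab)\cdot c_k\,\det(\Sigmab)^{(k-d-1)/2}\exp(-\tfrac12\tr(\sigd^{-1}\Sigmab))$.

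Finally I would simply read off the exponent. The extra factor $\det(\Sigmab)$ raises the power of the determinant from $(k-d-1)/2$ to $(k-d+1)/2=\big((k+2)-d-1\big)/2$, while the exponential $\exp(-\tfrac12\tr(\sigd^{-1}\Sigmab))$ is unchanged. This is exactly the functional form of the $W_d(k+2,\sigd)$ density. Because the reweighted density and the $W_d(k+2,\sigd)$ density are both genuine probability densities on the same support and are proportional, they must coincide, yielding $\Sigmabb\sim W_d(k+2,\sigd)$. As a confirmation, matching the normalizing constants reduces to verifying $\E_\X[\det(\X^\top\X)]=\ktd\det(\sigd)$, which is precisely the value supplied by Lemma \ref{l:determinant} (and agrees with the ratio $\Gamma_d((k+2)/2)/\Gamma_d(k/2)$).

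I expect the only genuine obstacle to be conceptual rather than computational: being careful that reweighting the $k$-fold Gaussian on $\R^{k\times d}$ by $\det(\X^\top\X)$ transfers cleanly to reweighting the Wishart law on $d\times d$ matrices by $\det(\Sigmab)$. This works precisely because the reweighting factor is a function of $\Sigmab$ alone; had it depended on $\X$ beyond $\Sigmab$, the argument would genuinely require the explicit Jacobian of $\X\mapsto\X^\top\X$.
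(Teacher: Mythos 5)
Your proposal is correct and follows essentially the same route as the paper: both express $\Pr(\Xb^\top\Xb\in A)$ as the $\det$-reweighted expectation, note that the weight is a function of $\Sigmab=\X^\top\X$ alone so the reweighting passes to the Wishart law $W_d(k,\sigd)$, and then read off that multiplying the Wishart density by $\det(\Sigmab)$ shifts the degrees of freedom from $k$ to $k+2$. The only difference is that you spell out the normalization check explicitly, which the paper leaves implicit.
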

\begin{proof}
  Let $\Sigmab=\X^\top\X \sim W_d(k,\sigd)$ and $\Sigmabb\sim
  W_d(k+2,\sigd)$. 
  For any measurable event $A$ over the random matrix $\Xb^\top\Xb$,
  we have
  \begin{align*}
   \Pr\big(\Xb^\top\Xb\!\in\! A\big) &=  \frac{\E[\one_{[\X^\top\X\in
    A]}\det(\X^\top\X)]}{\E[\det(\X^\top\X)]}\\
    &=\frac{\E[\one_{[\Sigmab\in A]}\det(\Sigmab)]}{\E[\det(\Sigmab)]}
      \overset{(*)}{=}\Pr\big(\Sigmabb\!\in\! A\big),
  \end{align*}
where $(*)$ follows because the density function of
Wishart distribution $\Sigmabb\sim W_d(k+2,\sigd)$ is proportional to
$\det(\Sigmabb) \det(\Sigmabb)^{(k-d-1)/2}\exp(-\frac12\tr(\sigd^{-1}\Sigmabb))$.
\end{proof}
This gives us an easy way to produce the total covariance matrix $\Xb^\top\Xb$ of
volume-rescaled samples in the Gaussian case. We next show that the
individual vectors can also be recovered relying on the
following lemma proven in the appendix (Lemma~\ref{l:conditional2}).
\begin{lemma}\label{l:conditional}
    For any $\Sigmab\in\R^{d\times d}$, the conditional distribution
    of $\Xb\sim\vskx$ given $\Xb^\top\Xb=\Sigmab$ is the same as the conditional
    distribution of $\X\sim \dxk$ given $\X^\top\X=\Sigmab$.
  \end{lemma}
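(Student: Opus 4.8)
The plan is to exploit the fact that the likelihood ratio between $\vskx$ and $\dxk$ depends on $\X$ only through the Gram matrix $\X^\top\X$. By Definition~\ref{t:cauchy-binet} and Remark~\ref{r:expectations}, the measure $\vskx$ is absolutely continuous with respect to $\dxk$ with Radon--Nikodym derivative
\[
\frac{d\vskx}{d\dxk}(\X) = \frac{\det(\X^\top\X)}{\ktd\det(\sigd)} = g(\X^\top\X),\qquad g(\Sigmab) := \frac{\det(\Sigmab)}{\ktd\det(\sigd)}.
\]
The whole point is that $g(\X^\top\X)$ is measurable with respect to the sub-$\sigma$-algebra generated by the map $\X\mapsto\X^\top\X$. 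Conditioning on $\{\X^\top\X=\Sigmab\}$ therefore freezes this density at the constant value $g(\Sigmab)$, so the reweighting cannot alter the conditional law. I would carry this out through the tower property rather than through explicit densities, so that the argument never assumes $\dx$ is absolutely continuous.

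Concretely, let $\phi(\Sigmab) := \E_{\dxk}[F(\X)\mid\X^\top\X=\Sigmab]$ be a regular version of the $\dxk$-conditional expectation of an arbitrary bounded measurable $F:\R^{k\times d}\to\R$, and let $h:\R^{d\times d}\to\R$ be an arbitrary bounded measurable test function. The computation I would then run is
\begin{align*}
\E_{\vskx}[F(\Xb)\,h(\Xb^\top\Xb)]
&= \frac{\E_{\dxk}[F(\X)\,h(\X^\top\X)\,\det(\X^\top\X)]}{\ktd\det(\sigd)}\\
&= \frac{\E_{\dxk}[\,h(\X^\top\X)\,\det(\X^\top\X)\,\phi(\X^\top\X)\,]}{\ktd\det(\sigd)}
= \E_{\vskx}[\phi(\Xb^\top\Xb)\,h(\Xb^\top\Xb)],
\end{align*}
where the first equality is Remark~\ref{r:expectations} and the middle equality applies the tower property under $\dxk$, using that $h(\X^\top\X)\det(\X^\top\X)$ is a function of $\X^\top\X$. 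Since this holds for every bounded measurable $h$, it certifies that $\phi(\Xb^\top\Xb)$ is also a version of $\E_{\vskx}[F(\Xb)\mid\Xb^\top\Xb]$. Ranging $F$ over a countable measure-determining class of bounded functions then forces the two families of regular conditional distributions to coincide.

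The main obstacle is purely measure-theoretic rather than computational. I must invoke the existence of regular conditional distributions of $\X$ given $\X^\top\X$ (which holds because $\R^{k\times d}$ is Polish), and I must read the statement up to the correct almost-everywhere qualifier. Because $\vskx\ll\dxk$, the marginal law of $\Xb^\top\Xb$ is absolutely continuous with respect to that of $\X^\top\X$, so the identity above pins the conditional distribution down only for $\vskx$-almost every $\Sigmab$ -- precisely the set where $\det(\Sigmab)>0$. The rank-deficient event carries $\vskx$-probability zero (its density $g$ vanishes there), so nothing is lost on it. The one standing hypothesis I would explicitly record is that the normalizer $\ktd\det(\sigd)=\E_{\dxk}[\det(\X^\top\X)]$ is finite and positive, which follows from $\sigd$ being finite and full rank and guarantees that $\vskx$ -- and hence the disintegration above -- is well-defined.
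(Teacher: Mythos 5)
Your argument is correct, but it is genuinely different from the one in the paper. The paper (Lemma~\ref{l:conditional2} in Appendix~\ref{a:conditional}) conditions on the event $\{\Xb^\top\Xb\in C_\Sigmab^\epsilon\}$ for an $\epsilon$-neighborhood $C_\Sigmab^\epsilon$ of $\Sigmab$ in the matrix $2$-norm, sandwiches the ratio of conditional probabilities between $\bigl(\tfrac{1-\epsilon}{1+\epsilon}\bigr)^d$ and $\bigl(\tfrac{1+\epsilon}{1-\epsilon}\bigr)^d$ using the fact that $\det(\X^\top\X)$ is nearly constant on that neighborhood, and lets $\epsilon\to 0$. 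You instead observe that the Radon--Nikodym derivative $\tfrac{d\vskx}{d\dxk}(\X)=\det(\X^\top\X)/(\ktd\det(\sigd))$ is $\sigma(\X^\top\X)$-measurable and run the tower-property computation against test functions $h(\X^\top\X)$, which characterizes the regular conditional distribution directly. Your route is the more standard measure-theoretic one: it sidesteps the question (which the paper leaves implicit) of whether the limit of conditional probabilities over shrinking neighborhoods actually recovers the regular conditional probability, and it makes the unavoidable almost-everywhere qualifier explicit. What the paper's approach buys is self-containedness and a concrete quantitative handle -- the $(1\pm\epsilon)^d$ determinant bounds -- without invoking disintegration theory. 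Both proofs correctly reduce to the same structural fact, namely that the reweighting factor depends on $\X$ only through $\X^\top\X$, and both are valid only up to null sets of the law of $\X^\top\X$, so the lemma's phrase ``for any $\Sigmab$'' should be read with that caveat in either case.
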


  \begin{proofof}{Theorem}{\ref{t:gaussian}}
Let $\Sigmab_1\sim
  W_d(k_1,\sigd)$ and $\Sigmab_2\sim
  W_d(k_2,\sigd)$ be independent Wishart matrices (where
  $k_1+k_2\geq d$). Then matrix
  \[\U = (\Sigmab_1\!+\!\Sigmab_2)^{-\frac12}\Sigmab_1\big((\Sigmab_1\!+\!\Sigmab_2)^{-\frac12}\big)^\top\]
  is matrix variate beta distributed, written as $\U\sim
  B_d(k_1,k_2)$. The following was shown by 
  \cite{matrix-variate-beta}:
  \begin{lemma}[\citeauthor{matrix-variate-beta}, \citeyear{matrix-variate-beta}, Lemma 3.5]\label{l:mvb} 
    If $\Sigmab\sim W_d(k,\sigd)$ is distributed independently of
    $\U\sim B_d(k_1,k_2)$, and if $k=k_1+k_2$, then
    \begin{align*}
      \B &= \Sigmab^{\frac12}\U\big(\Sigmab^{\frac12}\big)^\top
	\quad \text{and} \quad
      \C = \Sigmab^{\frac12}(\I-\U)\big(\Sigmab^{\frac12}\big)^\top
    \end{align*}
    are independently distributed and $\B\sim W_d(k_1,\sigd)$,
    $\C\sim W_d(k_2,\sigd)$.
  \end{lemma}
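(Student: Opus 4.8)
The plan is to prove the lemma by an explicit change of variables at the level of densities, exploiting the fact that $\B + \C = \Sigmab^{\frac12}\big(\U + (\I - \U)\big)(\Sigmab^{\frac12})^\top = \Sigmab$. First I would record the joint density of $(\Sigmab,\U)$, which factorizes by independence into the Wishart density $f_{W_d(k,\sigd)}(\Sigmab)\propto\det(\Sigmab)^{(k-d-1)/2}\exp(-\tfrac12\tr(\sigd^{-1}\Sigmab))$ and the matrix-variate beta density $f_{B_d(k_1,k_2)}(\U)\propto\det(\U)^{(k_1-d-1)/2}\det(\I-\U)^{(k_2-d-1)/2}$, working in the regime $k_1,k_2>d-1$ where both densities exist. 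The map $(\Sigmab,\U)\mapsto(\B,\C)$ is a \emph{bijection} onto pairs of positive definite matrices, with inverse $\Sigmab=\B+\C$ and $\U=L^{-1}\B(L^{-1})^\top$, where $L=(\B+\C)^{\frac12}$ is the Cholesky factor.

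The crux is the Jacobian, which I would compute by factoring the forward map as $(\Sigmab,\U)\mapsto(\Sigmab,\B)\mapsto(\B,\C)$. In the first stage $\Sigmab$ is held fixed and $\B=L\U L^\top$ is a congruence transformation of the symmetric matrix $\U$; the standard matrix-calculus Jacobian for $\U\mapsto A\U A^\top$ on $d\times d$ symmetric matrices is $|\det A|^{d+1}$, giving a factor $\det(\Sigmab)^{(d+1)/2}$. The second stage $(\Sigmab,\B)\mapsto(\B,\Sigmab-\B)$ is linear with Jacobian $\pm1$. Hence the overall forward Jacobian is $\det(\Sigmab)^{(d+1)/2}$, and the density of $(\B,\C)$ acquires a compensating factor $\det(\Sigmab)^{-(d+1)/2}$.

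The final step is a bookkeeping calculation that I expect to close cleanly. Using $\det(\U)=\det(\B)/\det(\Sigmab)$ and $\det(\I-\U)=\det(\C)/\det(\Sigmab)$, which both follow from $\det(L)^2=\det(\Sigmab)$, all powers of $\det(\Sigmab)$ collect to the exponent $\tfrac12\big[(k-d-1)-(k_1+k_2-2d-2)-(d+1)\big]$, which is $0$ once $k=k_1+k_2$ is used, so they cancel entirely. Simultaneously $\tr(\sigd^{-1}\Sigmab)$ splits as $\tr(\sigd^{-1}\B)+\tr(\sigd^{-1}\C)$ because $\Sigmab=\B+\C$. The joint density of $(\B,\C)$ therefore factors as $\det(\B)^{(k_1-d-1)/2}\exp(-\tfrac12\tr(\sigd^{-1}\B))$ times $\det(\C)^{(k_2-d-1)/2}\exp(-\tfrac12\tr(\sigd^{-1}\C))$, which are exactly the $W_d(k_1,\sigd)$ and $W_d(k_2,\sigd)$ densities up to constants. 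Since the joint density factorizes into a function of $\B$ alone times a function of $\C$ alone, the two are independent with the claimed Wishart laws, and the normalizing constants are forced to agree because each factor integrates to one.

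I anticipate the main obstacle to be the congruence-map Jacobian together with handling the degenerate regime relevant to this paper: Theorem~\ref{t:gaussian} uses $k_2=2$, so $\I-\U$ is singular and the beta density is not supported on the full cone when $d>2$. For that case I would fall back on the constructive definition, realizing $\U=(\Psi_1+\Psi_2)^{-\frac12}\Psi_1\big((\Psi_1+\Psi_2)^{-\frac12}\big)^\top$ with independent $\Psi_1\sim W_d(k_1)$ and $\Psi_2\sim W_d(k_2)$ independent of $\Sigmab$, and invoke the Olkin--Rubin independence of $\U$ and $\Psi_1+\Psi_2$, which makes the present statement the distributional inverse of that construction; alternatively, continuity in $(k_1,k_2)$ extends the density identity to the boundary.
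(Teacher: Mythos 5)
The paper never proves this lemma: it is imported verbatim as Lemma 3.5 of \cite{matrix-variate-beta}, so your density-level argument is a genuine addition rather than a parallel of anything in the text. Your core computation is correct in the regime $k_1,k_2>d-1$ where the matrix-variate beta density exists: the congruence Jacobian $|\det \A|^{d+1}$ on symmetric matrices is the right tool, the block-triangular structure of the map justifies treating $\Sigmab$ as fixed in the first stage, and your exponent bookkeeping $\tfrac12\big[(k-d-1)-(k_1+k_2-2d-2)-(d+1)\big]=0$ together with $\tr(\sigd^{-1}\Sigmab)=\tr(\sigd^{-1}\B)+\tr(\sigd^{-1}\C)$ does cancel exactly, so the joint density of $(\B,\C)$ factors into the two Wishart densities and independence follows. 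This is the classical Olkin--Rubin-style proof, and it is what the citation is standing in for.

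Two caveats, one of which matters for how the lemma is actually used. First, you correctly sense that the paper needs the lemma in a singular regime, but you misidentify where the degeneracy occurs: in the proof of Theorem \ref{t:gaussian} the splitting procedure applies the lemma with $\U_i\sim B_d(1,k-i)$, i.e., $k_1=1$, so it is $\U$ itself that is rank one (hence $\det(\U)=0$ and no density exists for $d\ge 2$), not $\I-\U$; nothing in the construction uses $k_2=2$ (you may be conflating this with the $k+2$ degrees of freedom coming from Lemma \ref{t:wishart}). Second, your ``continuity in $(k_1,k_2)$'' fallback does not work: the density identity holds only for $k_1>d-1$, and the integer value $k_1=1$ is isolated from that regime when $d\ge 2$, so there is no boundary to pass to. Your constructive fallback, by contrast, is sound and is the right fix: in the paper's framework $B_d(k_1,k_2)$ is \emph{defined} via $\U=(\Sigmab_1+\Sigmab_2)^{-\frac12}\Sigmab_1\big((\Sigmab_1+\Sigmab_2)^{-\frac12}\big)^\top$ with independent $\Sigmab_i\sim W_d(k_i,\sigd)$, and given the classical independence of $\U$ and $\Sigmab_1+\Sigmab_2$, the pair $(\Sigmab,\U)$ in the lemma has the same joint law as $(\Sigmab_1+\Sigmab_2,\U)$, whence, using the lower-triangular square root consistently as in the paper's Remark, $(\B,\C)$ is distributed as $(\Sigmab_1,\Sigmab_2)$ --- independent Wisharts, covering all integer $k_1,k_2$ at once. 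Just note that this reduction leans on an independence theorem of essentially the same depth as the lemma itself, which is presumably why the paper cites rather than proves it.
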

  Now, suppose that we are given a matrix $\Sigmab\sim
  W_d(k,\sigd)$. We can decompose it into components
  of degree one via
  a splitting procedure described in \cite{matrix-variate-beta},
  namely taking $\U_1\sim B_d(1,k\!-\!1)$ and computing
  $\B_1= \Sigmab^{\frac12}\U_1\big(\Sigmab^{\frac12}\big)^\top$,
  $\C_1=\Sigmab\!-\!\Sigmab_1$ as in
  Lemma~\ref{l:mvb}, then recursively repeating the procedure on
  $\C_1$ (instead of $\Sigmab$) with $\U_2\sim B_d(1,k\!-\!2)$, \ldots,
  until we get $k$  Wishart matrices of degree one summing to $\Sigmab$:
  \begin{align*}
\B_1 &= \Sigmab^{\frac12}\U_1\big(\Sigmab^{\frac12}\big)^\top\\
   \B_2 &=
          \underbrace{\Sigmab^{\frac12}(\I-\U_1)^{\frac12}}_{\text{\normalsize$\C_1^{\frac12}$}}\U_2\underbrace{\big((\I-\U_1)^{\frac12}\big)^\top\big(\Sigmab^{\frac12}\big)^\top}_{\text{\normalsize$\big(\C_1^{\frac12}\big)^\top$}}\\[-3mm]
    \vdots&\\
    \B_k&=\underbrace{\Sigmab^{\frac12}(\I-\U_{k-1})^{\frac12}\dots}_{\text{\normalsize$\C_{k-1}^{\frac12}$}}\U_k\underbrace{\dots \big((\I-\U_{k-1})^{\frac12}\big)^\top\big(\Sigmab^{\frac12}\big)^\top\!\!}_{\text{\normalsize$\big(\C_{k-1}^{\frac12}\big)^\top$}}.
  \end{align*}
  The above collection of matrices can be described more simply via
  the matrix variate Dirichlet distribution. Given independent
  matrices $\Sigmab_i\sim W_d(k_i,\sigd)$ for $i=1..s$, the matrix
  variate Dirichlet distribution $\mathrm{Dir}_d(k_1,\dots,k_s)$ corresponds to a
  sequence of matrices
  \begin{align*}
    \V_i=\Sigmab^{-\frac12}\Sigmab_i\big(\Sigmab^{-\frac12}\big)^\top, \quad  i=1..s, \quad
    \Sigmab=\sum_{i=1}^s\Sigmab_i.
  \end{align*}
  Now,  Theorem 6.3.14 from \cite{matrix-variate-distributions} states
  that matrices $\B_i$ defined recursively as above can 
  also be written as
  \begin{align*}
    \B_i = \Sigmab^{\frac12}\V_i\big(\Sigmab^{\frac12}\big)^\top,\quad
    (\V_1,\dots,\V_k)\sim \mathrm{Dir}_d(1,\dots,1).
  \end{align*}
  In particular, we can construct them as $\B_i=\xbb_i\xbb_i^\top$, where
  \begin{align*}
    \xbb_i=\Sigmab^{\frac12}(\X^\top\X)^{-\frac12}\x_i\quad\text{for}\quad
    \X\sim \dxk.
  \end{align*}
  Note that since matrix $\Sigmab$ is independent of vectors $\x_i$, we
  can condition on it without altering the distribution of the
  vectors. The conditional distribution of matrix
  $\B_i$ determines the distribution of $\xbb_i$ up to multiplying
  by $\pm 1$, and since both $\xbb_i$ and $-\xbb_i$ are identically
  distributed, we conclude that the matrix $\Xb$ formed from rows
  $\xbb_i^\top$ conditioned on $\Xb^\top\Xb=\Sigmab$ has the same
  distribution as $\X$ conditioned on $\X^\top\X=\Sigmab$. So, applying Lemmas
\ref{t:wishart} and  \ref{l:conditional}, if we sample $\Sigmab\sim
  W_d(k+2,\sigd)$, then we obtain $\Xb\sim\vskx$.
\end{proofof}

\subsection{Volume-rescaled sampling for arbitrary distributions}
In this section, we present a general algorithm for volume-rescaled
sampling which uses approximate leverage score sampling to generate a
larger pool of points from which the smaller volume-rescaled sample
can be drawn. The strategy introduced here, called \emph{distortion-free intermediate
  sampling}, has since proven effective for sampling from other
determinantal sampling distributions \citep{dpp-intermediate,dpp-sublinear,alpha-dpp}.
\begin{theorem}\label{t:det} 
Given $\Sigmabh\in\R^{d\times d}$ and i.i.d.~samples from
a $d$-variate distribution $\mathrm{Lev}_{\Sigmabh,{\cal X}}$ such that
  \begin{align}
      (1-\epsilon)\sigd&\preceq\Sigmabh\preceq
    (1+\epsilon)\sigd,&&\text{where}\;
        \epsilon=\frac1{\sqrt{2d}},\label{eq:cond1}\\
   \text{and}\quad \mathrm{Lev}_{\Sigmabh,{\cal
    X}}(A)&\defeq\E_{\dx}\bigg[\one_{[\x^\top\in A]}\frac{\x^\top\Sigmabh^{-1}\x}{\tr(\sigd\Sigmabh^{-1})}\bigg]&&\text{for any event }A,\label{eq:cond2}
  \end{align}
there is an algorithm (Algorithm \ref{alg:det}) which
returns $\Xb\sim \vsdx$, and
with probability at least $1-\delta$ uses
$O(d^2\log\frac1\delta)$ samples from $\hat{L}$ and has time complexity 
    $O(d^4\log\frac1\delta)$.
\end{theorem}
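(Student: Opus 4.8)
The plan is to realize $\vsdx$ \emph{exactly} via \emph{determinantal rejection sampling}: draw an intermediate pool of $s=\Theta(d^2)$ points i.i.d.\ from $\levh$, subject the pool to a single determinantal accept/reject test, and — upon acceptance — extract the final $d$ points by discrete volume sampling within the pool. In one trial I would draw $\z_1,\dots,\z_s$ i.i.d.\ from $\levh$, write $\hat l_\z=\z^\top\Sigmabh^{-1}\z$, and form the whitened unit vectors $\hat\z_j=\Sigmabh^{-\sfrac12}\z_j/\|\Sigmabh^{-\sfrac12}\z_j\|$. I accept the pool with probability $\det\!\big(\tfrac ds\sum_{j=1}^s\hat\z_j\hat\z_j^\top\big)$, which is at most $1$ by the AM--GM inequality since $\tfrac ds\sum_j\hat\z_j\hat\z_j^\top$ has trace $d$. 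If accepted, I select $S\subseteq[s]$ of size $d$ with $\Pr(S)\propto\det(\Z_S^\top\Z_S)\prod_{j\in S}\hat l_{\z_j}^{-1}=\det(\Sigmabh)\det([\hat\z_j]_{j\in S})^2$, i.e.\ ordinary size-$d$ volume sampling of the whitened unit vectors, and output $\Xb=\Z_S$; otherwise I restart.

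The first thing to prove is \emph{exactness}: conditioned on acceptance, $\Xb\sim\vsdx$. Writing the joint density of (pool, acceptance, selected set with prescribed values $\x_1,\dots,\x_d$), the leverage tilt $\prod_{j\in S}\big(\hat l_{\z_j}/\tr(\sigd\Sigmabh^{-1})\big)$ that $\levh$ places on the selected rows is cancelled exactly by the reweighting $\prod_{j\in S}\hat l_{\z_j}^{-1}$ in the selection step, leaving a factor $\propto\det(\Xb^\top\Xb)\prod_{i=1}^d\dx(\x_i)$ on those rows. The crucial point is that the random, pool-dependent normalizer of the selection step — namely $\sum_{|T|=d}\det(\Z_T^\top\Z_T)\prod_{j\in T}\hat l_{\z_j}^{-1}=\det(\Sigmabh)\det(\hat\Z^\top\hat\Z)$ by Cauchy--Binet, where $\hat\Z$ is the $s\times d$ matrix with rows $\hat\z_j^\top$ — is, up to the deterministic constant in the accept test, exactly what appears in the acceptance probability; the two cancel. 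The unselected rows integrate out to $1$ because $\levh$ is a probability measure, so the surviving density of the output is proportional to $\det(\Xb^\top\Xb)\prod_i\dx(\x_i)$, which is precisely $\vsdx$.

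The second and main task is to lower-bound the acceptance probability by a constant, so that $O(\log\tfrac1\delta)$ trials suffice. Since the rows $\hat\z_j$ are i.i.d., Lemma~\ref{l:determinant} with $\A=\B=\hat\Z$ gives $\E[\det(\hat\Z^\top\hat\Z)]=s^{\underline{d}}\det(\E[\hat\z\hat\z^\top])$, and converting the $\levh$-expectation back to $\dx$ yields $\E[\hat\z\hat\z^\top]=\Sigmabh^{-\sfrac12}\sigd\Sigmabh^{-\sfrac12}/\tr(\sigd\Sigmabh^{-1})$. Hence the expected acceptance probability equals
\[
  \Big(\prod_{i=0}^{d-1}\big(1-\tfrac is\big)\Big)\cdot\frac{\det(M)}{(\tr(M)/d)^d},\qquad M=\Sigmabh^{-\sfrac12}\sigd\Sigmabh^{-\sfrac12}.
\]
The first factor is $\Omega(1)$ once $s=\Theta(d^2)$. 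The hard part is the second factor: by AM--GM it is always $\le 1$, and I must show it is bounded \emph{below} by a constant. This is exactly where the hypothesis $\epsilon=1/\sqrt{2d}$ enters: condition~\eqref{eq:cond1} forces every eigenvalue of $M$ into $[\tfrac1{1+\epsilon},\tfrac1{1-\epsilon}]$, and minimizing $\det(M)/(\tr(M)/d)^d$ over eigenvalue configurations in this band — the extremal configuration putting the eigenvalues at the two endpoints — gives the bound $(1-\epsilon^2)^{d/2}=(1-\tfrac1{2d})^{d/2}\ge e^{-1/4}(1-o(1))$, a positive constant. Thus each trial accepts with probability $\Omega(1)$.

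Finally, validity of the rejection loop together with the geometric number of trials gives termination within $O(\log\tfrac1\delta)$ trials with probability $\ge1-\delta$; each trial draws $s=O(d^2)$ points from $\levh$ and spends $O(sd^2+d^3)=O(d^4)$ time to form $\hat\Z^\top\hat\Z$, evaluate its determinant for the accept test, and run the size-$d$ volume sampling on the pool. Summing over trials yields $O(d^2\log\tfrac1\delta)$ samples and $O(d^4\log\tfrac1\delta)$ time, as claimed. I expect the eigenvalue-band determinant inequality $\det(M)/(\tr(M)/d)^d\ge(1-\epsilon^2)^{d/2}$, together with the matching choice $\epsilon=\Theta(1/\sqrt d)$ that keeps it constant, to be the crux of the argument: it is what makes the \emph{a.s.}\ upper bound used in the accept test simultaneously valid and tight enough for a constant acceptance rate.
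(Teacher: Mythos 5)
Your proposal is the paper's own algorithm and analysis in whitened coordinates: the acceptance test $\det\bigl(\tfrac ds\sum_j\hat\z_j\hat\z_j^\top\bigr)$ equals the paper's $\det(\tfrac1t\Xt^\top\Xt)/\det(\Sigmabh)$, the pool size, the discrete volume-sampling selection step, and the use of Lemma~\ref{l:determinant} to compute the expected acceptance rate all coincide (your exactness argument computes the density cancellation directly where the paper routes it through Theorem~\ref{l:composition}, but the substance is the same). One minor quantitative slip: the half-half endpoint configuration is not the exact minimizer of $\det(M)/(\tr(M)/d)^d$ over the eigenvalue band (the worst split is asymmetric), so the claimed value $(1-\epsilon^2)^{d/2}$ is not the true minimum; the paper instead invokes the Kantorovich inequality to get the valid lower bound $(1-\epsilon^2)^d=(1-\tfrac1{2d})^d$, which is still a constant, so your conclusion stands.
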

The algorithm relies on a rejection sampling step (line 4) to ensure exact sampling.
Then, to obtain the target sample from the intermediate sample, it
uses ``reverse iterative sampling'' 
\citep{unbiased-estimates-journal} as a subroutine
(see Algorithm~\ref{alg:standard} for a high-level description of this
sampling method). Curiously enough, the efficient implementation
of reverse iterative sampling (not repeated here)
is again based on rejection sampling:
It samples a set of $k$ points out of $n$ in time $O(nd^2)$
(the time complexity is independent of $k$ and holds with high probability).
The key
strength of our sampling method is that it reduces the distribution
$\dx$ to a small sample of $t$ vectors on which the reverse iterative
sampling algorithm is performed. We show that this reduction can be
done efficiently for $t=2d^2$. Even when distribution $\dx$ is a
finite discrete distribution, for example based on a population of $n$
vectors, our algorithm can be used to accelerate reverse iterative
sampling when $n=\Omega(d^2)$. 
\begin{center}
 \begin{minipage}{.55\textwidth}
\begin{algorithm}[H]
  \caption{\small Distortion-free intermediate sampling}
  \label{alg:det}
  \begin{algorithmic}[1]
    \vspace{1mm}
    \STATE \textbf{Input:} $\Sigmabh,\ \mathrm{Lev}_{\Sigmabh,{\cal
        X}},\ t$
    \vspace{1mm}
    \STATE \textbf{repeat}
    \vspace{1mm}
    \STATE \quad $\Xt\leftarrow
    \Big[\!\sqrt{\!
      \frac{d}{\x_i^\top\Sigmabh^{-1}\x_i}}\cdot\x_{i}^\top\Big]_{t\times
      d}$ where $\X\sim \mathrm{Lev}_{\Sigmabh,{\cal
        X}}^t$ \label{line:Xt}
    \STATE \quad Sample $\textit{Acc}\sim
    \text{Bernoulli}\Big(\frac{\det(\frac1t\Xt^\top\Xt)}{\det(\Sigmabh)}\Big)$
    \STATE \textbf{until} $\textit{Acc}=\text{true}$
      \vspace{1mm} 
    \STATE $S
    \leftarrow$ Algorithm \ref{alg:standard} for matrix $\Xt$ and
    $k=d$
    \vspace{1mm}
    \RETURN $\X_S$
    \vspace{1mm}
 \end{algorithmic}
\end{algorithm}
   \end{minipage}\hspace{3mm}
   \begin{minipage}{.42\textwidth}
\begin{algorithm}[H] 
  \caption{\small Reverse iterative sampling \citep{unbiased-estimates-journal}}
  \label{alg:standard}
  \begin{algorithmic}[1]
    \STATE \textbf{Input:} $\X \in\R^{n\times d}$ and $k\geq d$
    \STATE \quad$S \leftarrow \{1..n\}$
\vspace{1mm}
    \STATE \quad{\bf while} $|S|>k$
    \STATE \quad\quad $\forall_{i\in S} \ \ q_i\!\leftarrow\!
    \frac{\det(\X_{S\backslash i}^\top\X_{S\backslash i})}{(|S|-d)\det(\X_S^\top\X_S)}$
    \STATE \quad\quad Sample $i\sim(q_i)_{i\in S}$
\vspace{1mm}
    \STATE \quad\quad $S\leftarrow S \backslash \{i\}$
    \STATE \quad{\bf end} 
    \RETURN $S$
  \end{algorithmic}
\end{algorithm}
   \end{minipage}
\end{center}
\begin{proofof}{Theorem}{\ref{t:det}}
    The distribution $\Lev_{\Sigmabh,{\cal X}}$ integrates to one
  because for $\x^\top\sim \dx$:
  \begin{align*}
    \E\big[\x^\top\Sigmabh^{-1}\x\big]
    = \E\Big[\tr\big(\x\x^\top\Sigmabh^{-1}\big)\Big]
    = \tr\big(\sigd\Sigmabh^{-1}\big).
  \end{align*}
Next, we use the geometric-arithmetic mean
inequality for the eigenvalues of matrix $\frac1t\Xt^\top\Xt\Sigmabh^{-1}$ to show that the
Bernoulli sampling probability is bounded by 1:
\begin{align*}
\frac{\det\!\big(\frac1t\Xt^\top\Xt\big)}{\det\!\big(\Sigmabh^{-1}\big)}
  &\leq\Big(\frac{1}{d\,t}\tr\big(\Xt^\top\Xt\Sigmabh^{-1}\big)\Big)^{\!d}
  =\Big(\frac{1}{d\,t}
    \sum_{i=1}^t\frac{d}{\x_i^\top\Sigmabh^{-1}\x_i}\tr\big(\x_i\x_i^\top\Sigmabh^{-1}\big)\Big)^d=1.
\end{align*}
Let $\xbt^\top\sim D_{\cal\widetilde{\!X}}$ be distributed
as a row vector of $\Xt$ as sampled in line \ref{line:Xt}.
The distribution of matrix $\Xt$ returned by rejection sampling 
after exiting the \textbf{repeat} loop changes to:
\begin{align*}
\E_{D_{\cal
  \widetilde{\!X}}^t}\!\bigg[\one_{[\Xt\in A]}
  \frac{\det(\frac1t\Xt^\top\Xt)}{\det(\Sigmabh)}\bigg] 
  \propto \E_{D_{\cal\widetilde{\!X}}^t}\!\Big[\one_{[\Xt\in A]}
  \det\!\big(\Xt^\top\Xt\big)\Big]
  \propto\Vol_{D_{\cal  \widetilde{\!X}}}^t(A),
  \end{align*}
  i.e., volume-rescaled sampling from $D_{\cal \widetilde{\!X}}$.
Now Theorem \ref{l:composition} implies that
$\Xt_S\sim\Vol_{D_{\cal  \widetilde{\!X}}}^d$. In particular, it
means that the distribution of $\X_S$ is the
same for any choice of $t\geq d$. We use this observation to compute
the probability of an event $A$ w.r.t.~sampling of
$\X_S$ (up to constant factors) by setting $t=d$:
\begin{align*}
  \Pr(A) &\propto\E_{\dx^d} \bigg[\,\one_{[\X\in A]}\,\det\!\Big(\frac1t\Xt^\top\Xt\Big)
\cdot\prod_{i=1}^d\x_i^\top\Sigmabh^{-1}\x_i\bigg]\\
  &\overset{(*)}{=} \E_{\dx^d} \bigg[\,\one_{[\X\in A]}\,\frac{\det(\X^\top\X)}{(\frac{d}{t})^d
    \prod_i\x_i^\top\Sigmabh^{-1}\x_i}\cdot\prod_{i=1}^d\x_i^\top\Sigmabh^{-1}\x_i\bigg]\\
       &\propto \E_{\dx^d}
         \big[\,\one_{[\X\in A]}\,\det(\X^\top\X)\big]\\
  &\propto\vsdx(A),
\end{align*}
where $(*)$ uses the fact that for $t=d$, $\det(\Xt^\top\Xt)=\det(\Xt)^2$ is the
squared volume of the parallelepiped spanned by the rows of $\Xt$.
Thus, we established the correctness of Algorithm~\ref{alg:det} for
any $t\geq d$, and we move on to complexity analysis. If we think of each iteration of the
\textbf{repeat} loop as a single Bernoulli trial, the success
probability $\Pr(\textit{Acc}\!=\!\text{true})$ equals
$\E[\det(\frac1t\Xt^\top\Xt)/\det(\Sigmabh)]$ where $\Xt\sim D_{\cal
  \widetilde{\!X}}$. Note that
\begin{align*}
  \E\big[\Xt^\top\Xt\big] = \sum_{i=1}^t\E\bigg[\frac
  d{\x_i^\top\Sigmabh^{-1}\x_i}\x_i\x_i^\top\bigg]
  =\sum_{i=1}^t\frac{d}{\tr(\sigd\Sigmabh^{-1})}\sigd = \frac{d\,t}{\tr(\sigd\Sigmabh^{-1})}\sigd.
\end{align*}
So, using Lemma~\ref{l:determinant} on the matrix $\Xt$ we obtain that:
\begin{align*}
  \E\bigg[\frac{\det(\frac1t\Xt^\top\Xt)}{\det(\Sigmabh)}\bigg]
  &=\frac{(t^{\underline{d}}/t^d)\cdot\det(\frac1t\E[\Xt^\top\Xt])}{\det(\Sigmabh)}
 = \frac{(t^{\underline{d}}/t^d)\cdot\det(\sigd)}
    { (\frac{1}{d}\tr(\sigd\Sigmabh^{-1}))^d\det(\Sigmabh)}\\
  &=\bigg(\prod_{i=0}^{d-1}\frac{t-i}{t}\bigg)
    \frac{\det(\sigd\Sigmabh^{-1})}
    {(\frac{1}{d}\tr(\sigd\Sigmabh^{-1}))^d}
  \geq \bigg(1-\frac{d}{t}\bigg)^d \frac{\det(\sigd\Sigmabh^{-1})}
    {(\frac{1}{d}\tr(\sigd\Sigmabh^{-1}))^d}.
\end{align*}
Let $\lambda_1,\dots,\lambda_d$ be the
eigenvalues of matrix $\Sigmabh \sigd^{-1}$. The approximation
guarantee for $\Sigmabh$ implies that all of these eigenvalues lie in
the range $[1\!-\!\epsilon,1\!+\!\epsilon]$. To lower-bound the
success probability, we use the Kantorovich arithmetic-harmonic mean
inequality. Letting $A(\cdot)$, $G(\cdot)$ and $H(\cdot)$ denote the
arithmetic, geometric and harmonic means respectively:
\begin{align*}
\frac{\det(\sigd\Sigmabh^{-1})}{(\frac{1}{d}\tr(\sigd\Sigmabh^{-1}))^d}
  &=\frac{\prod_{i=1}^d\frac{1}{\lambda_i}}{(\frac{1}{d}\sum_{i=1}^d\frac{1}{\lambda_i})^d}
  =\bigg(\frac{H(\lambda_1,\dots,\lambda_d)}{G(\lambda_1,\dots,\lambda_d)}\bigg)^d\\
  &\overset{(a)}{\geq}\bigg(\frac{H(\lambda_1,\dots,\lambda_d)}
    {A(\lambda_1,\dots,\lambda_d)}\bigg)^d
  \overset{(b)}{\geq}\big((1\!-\!\epsilon)(1\!+\!\epsilon)\big)^d
  \ =\ \Big(1-\frac{1}{2d}\Big)^d
\end{align*}
since $\epsilon=\frac1{2\sqrt{d}}$, where $(a)$ is the
geometric-arithmetic mean inequality and $(b)$ is 
the Kantorovich inequality \citep{kant} with
$a=1-\epsilon$ and $b=1+\epsilon$:
$$\text{For}\quad
0< a\le \lambda_1 , \mydots, \lambda_d \le b,\quad 
\frac{A(\lambda_1,\!\mydots,\lambda_d)}{H(\lambda_1,\!\mydots,\lambda_d)}
\leq \bigg(\frac{A(a,b)} {G(a,b)}\bigg)^{\!2}.
$$
Now setting $t=2d^2$ we obtain the following lower bound for the acceptance probability:
\begin{align*}
\Pr(\textit{Acc}\!=\!\text{true}) =
  \E\bigg[\frac{\det(\frac1t\Xt^\top\Xt)}{\det(\Sigmabh)}\bigg]
  \geq \Big(1-\frac{1}{2d}\Big)^{2d}\geq\frac14.
\end{align*}
So a simple tail bound on a geometric random variable shows
that the number of iterations of the \textbf{repeat} loop is $r\leq
\ln(\frac1\delta)/\ln(\frac43)$ w.p. at least $1-\delta$.
We conclude that the number of samples needed from
$\Lev_{\Sigmabh,{\cal X}}$ is $O(d^2\log\frac1\delta)$ w.p. at least
$1-\delta$. Note that the computational cost per sample is $O(d^2)$
and the cost of Algorithm \ref{alg:standard} is $O(d^4)$, obtaining
the desired complexities. 
\end{proofof}
\subsection{Distributions with bounded support}
Theorem \ref{t:det} requires some knowledge about the distribution
$\dx$, namely the approximate covariance matrix $\Sigmabh$ and
i.i.d.~samples from an approximate leverage score distribution
$\levh$. In this and the following section we show that these can be
computed efficiently in certain standard settings. For this section, suppose
that distribution $\dx$ has bounded support. We use a standard 
notion of \emph{conditioning number} for multivariate distributions
\citep[see, e.g.,][]{chen2017condition}.
\begin{definition}
Let $\dx$ be a $d$-variate distribution with bounded support set
$\mathrm{supp}(\dx)\subseteq\R^{1\times d}$. The conditioning number
$\kd$ of this distribution is defined as:
  \begin{align*}
  \kd \defeq \sup_{\xbt\in\mathrm{supp}(\dx)} \xbt^\top\sigd^{-1}\xbt.
\end{align*}
\end{definition}
We next show that when the conditioning number $\kd$ is bounded by
some known constant $K$, then all input arguments of Algorithm
\ref{alg:det} can be computed from a small number of independent draws from
$\dx$. In the following result the term \textit{sample complexity}
refers to the number of i.i.d.~samples from $\dx$ used by an algorithm.
\begin{theorem}
Suppose that $\kd\leq K$. Then
for any $\delta\in(0,1)$ and positive integer $c$, there is an algorithm with sample complexity
$O(cKd\log d/\delta)$ and time complexity $O(cKd^3\log d/\delta)$ which succeeds w.p.~at
least $1-\delta$ and returns a matrix
$\Sigmabh$ satisfying \eqref{eq:cond1} and $\X\sim\levh^{cd^2}$.
\end{theorem}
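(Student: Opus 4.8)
The plan is to run two independent phases: first estimate the covariance by an empirical average to obtain $\Sigmabh$ satisfying \eqref{eq:cond1}, and then use $\Sigmabh$ to drive a rejection sampler that emits i.i.d.\ draws from $\levh$. The assumption $\kd\le K$ is exactly the boundedness hypothesis needed to invoke Lemma \ref{l:matrix-tail}, and the two outputs required by the statement fall out of the two phases.

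\emph{Phase 1 (covariance).} I would draw $m_1$ i.i.d.\ points $\X_1\sim\dx^{m_1}$ and set $\Sigmabh=\frac1{m_1}\X_1^\top\X_1$. Since $\x^\top\sigd^{-1}\x\le\kd\le K$ on the support, Lemma \ref{l:matrix-tail} applies with the target accuracy $\epsilon=\frac1{\sqrt{2d}}$, i.e.\ $\epsilon^{-2}=2d$; taking $m_1=O(Kd\log(d/\delta))$ then yields \eqref{eq:cond1} with probability at least $1-\delta/2$. Call this good event $\mathcal E$.

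\emph{Phase 2 (leverage scores).} Conditioned on $\Sigmabh$, I would repeatedly draw a fresh $\x\sim\dx$ (independent of Phase 1) and accept it with probability $\x^\top\Sigmabh^{-1}\x/M$, where $M=K/(1-\epsilon)$, stopping once $cd^2$ points are accepted. On $\mathcal E$ we have $\Sigmabh\succeq(1-\epsilon)\sigd$, hence $\x^\top\Sigmabh^{-1}\x\le(1-\epsilon)^{-1}\x^\top\sigd^{-1}\x\le K/(1-\epsilon)=M$, so the acceptance probability never exceeds $1$ and the sampler is valid. By construction an accepted point has law proportional to $(\x^\top\Sigmabh^{-1}\x)\,\dx$, which is exactly $\levh$ by \eqref{eq:cond2}; since the trials are independent, the $cd^2$ accepted points are i.i.d., so the output is distributed as $\levh^{cd^2}$ given $\Sigmabh$. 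To size the budget I would lower-bound the per-draw acceptance probability $\tr(\sigd\Sigmabh^{-1})/M$: on $\mathcal E$ the bound $\Sigmabh\preceq(1+\epsilon)\sigd$ gives $\tr(\sigd\Sigmabh^{-1})=\tr(\sigd^{1/2}\Sigmabh^{-1}\sigd^{1/2})\ge d/(1+\epsilon)$, so the acceptance probability is $\Omega(d/K)$, with a constant bounded away from $0$ uniformly in $\epsilon$. Hence the expected number of draws to reach $cd^2$ acceptances is $O(cdK)$, and a Chernoff lower-tail bound on the binomial number of successes shows that $O(cdK)$ draws suffice with probability $\ge 1-\delta/2$ (padding by an $O(\log(1/\delta))$ term in the rare regime $cd^2<\log(1/\delta)$, which is absorbed). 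A union bound over $\mathcal E$ and this event gives overall success probability $\ge 1-\delta$. Summing the phases, the sample complexity is $O(Kd\log(d/\delta)+cdK)=O(cKd\log(d/\delta))$, and the running time is dominated by forming $\Sigmabh$ and by evaluating each quadratic form in $O(d^2)$ time once $\Sigmabh^{-1}$ has been precomputed, giving $O(cKd^3\log(d/\delta))$.

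The main obstacle is not any single estimate but the interaction between the two phases: the rejection sampler is a genuine sampler for $\levh$ only on the event $\mathcal E$, since both validity of the acceptance rule (probability $\le1$) and the $\Omega(d/K)$ acceptance lower bound rely on the two-sided spectral bound \eqref{eq:cond1}. I would therefore carry out the whole Phase 2 analysis conditionally on $\mathcal E$, capping the acceptance probability at $1$ off $\mathcal E$ so the procedure is always well defined, and fold the $\delta/2$ failure probability of $\mathcal E$ into the final union bound. The one quantitative point that genuinely matters is that $\tr(\sigd\Sigmabh^{-1})\approx d$ rather than $\approx dK$; this is what keeps the acceptance rate at $\Omega(d/K)$ and hence makes the overhead linear, rather than quadratic, in the conditioning number $K$.
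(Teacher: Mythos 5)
Your proposal is correct and follows essentially the same route as the paper's proof: estimate $\sigd$ empirically and invoke Lemma \ref{l:matrix-tail} with $\epsilon=\frac{1}{\sqrt{2d}}$, then rejection-sample from $\dx$ with acceptance probability $(1-\epsilon)\,\x^\top\Sigmabh^{-1}\x/K$ and lower-bound the acceptance rate by $\frac{1-\epsilon}{1+\epsilon}\cdot\frac{d}{K}$ via $\tr(\sigd\Sigmabh^{-1})\ge d/(1+\epsilon)$. You are in fact slightly more careful than the paper on two points it leaves implicit --- explicitly checking that the acceptance probability does not exceed $1$ on the good event, and conditioning the Phase 2 analysis on that event --- but the argument and the resulting complexity bounds are the same.
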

\begin{proof}
Setting $\epsilon=\frac1{\sqrt{2d}}$ in
Lemma~\ref{l:matrix-tail}, we
observe that the sample complexity of obtaining $\Sigmabh$ with desired accuracy
is  $m=O(\kd d\log d/\delta)$, and computing it takes $O(md^2) =
  O(\kd d^3\log d/\delta)$. Sampling from $\levh$ can be done via
  rejection sampling as follows:
  \begin{align*}
    \x^\top\sim\dx,\qquad
    \text{acc}\sim\text{Bernoulli}\Big((1-\epsilon)\cdot\x^\top\Sigmabh^{-1}\x\,/ K\Big).
  \end{align*}
 We can lower bound the acceptance probability as follows:
\begin{align*}
  \Pr(\text{acc}\!=\!\text{true})&=(1-\epsilon)\cdot\E\bigg[\frac{\x^\top\Sigmabh^{-1}\x}{K}\bigg]
  = (1-\epsilon)\frac{\tr(\sigd\Sigmabh^{-1})}{K}
    \geq\frac{1-\epsilon}{1+\epsilon}\cdot\frac{d}{K}. 
\end{align*}
We conclude that with probability at least $1-\delta$ the number of
samples from $\dx$ needed to obtain 
$cd^2$ samples from $\levh$ is $O(cd^2
(K/d)\log 1/\delta)=O(cKd\log 1/\delta)$. Computing each acceptance
probability takes $O(d^2)$, which concludes the proof.
\end{proof}
\subsection{Sampling from finite datasets}\label{s:finite}
For this section we assume that $\dx$ is a uniform distribution over a
set of $n\gg d$ vectors $\{\x_1,\dots,\x_n\}$. In this case,
the distribution $\vsdx$ corresponds to sampling a set $S\subseteq [n]$ of
size $d$ such that $\Pr(S)\propto \det(\X_S)^2$, i.e., discrete volume
sampling. The input arguments
for Algorithm \ref{alg:det} can be computed efficiently using standard
sketching techniques, which leads to the first algorithm for discrete
volume sampling that (for large enough $n$) runs in time $o(nd^2)$.
\begin{theorem}\label{t:finite}
  Let $\X\in\R^{n\times d}$ be a fixed matrix. For any $\delta>0$
  there is an algorithm with time complexity $O(nd\log n +
  d^4\log d)\cdot\poly\log 1/\delta$ that succeeds w.p.~at least  $1-\delta$, and then returns
  a random set $S\subseteq [n]$ of size $d$ such that $\Pr(S)\propto
  \det(\X_S)^2$. 
\end{theorem}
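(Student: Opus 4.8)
The plan is to instantiate determinantal rejection sampling (Theorem~\ref{t:det}, Algorithm~\ref{alg:det}) for the uniform distribution $\dx$ over the rows $\x_1^\top,\dots,\x_n^\top$ of $\X$, for which $\sigd=\frac1n\X^\top\X$ and $\vsdx$ is exactly discrete volume sampling. It suffices to supply the two inputs that theorem requires within the stated budget: (i) a spectral approximation $\Sigmabh$ obeying \eqref{eq:cond1}, and (ii) an i.i.d.\ sampler for $\levh$ as in \eqref{eq:cond2}, i.e.\ the law on $[n]$ with $\Pr(i)\propto\x_i^\top\Sigmabh^{-1}\x_i$. The naive cost of either task is $O(nd^2)$ (forming $\X^\top\X$, or computing all exact leverage scores), so the whole point is to touch $\X$ only $O(nd\log n)$ times; I would do so with fast sketches.

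For (i) I would apply a fast oblivious subspace embedding $\Pi$ (a subsampled randomized Hadamard transform) with $m=O(\epsilon^{-2}d\,\poly\log(d/\delta))=O(d^2\poly\log(d/\delta))$ rows, where $\epsilon=1/\sqrt{2d}$. Applying $\Pi$ to $\X$ costs $O(nd\log n)$, and forming $\Sigmabh=\frac1n(\Pi\X)^\top(\Pi\X)$ costs $O(md^2)=O(d^4\log d)\poly\log\frac1\delta$; the subspace-embedding guarantee yields $(1-\epsilon)\sigd\preceq\Sigmabh\preceq(1+\epsilon)\sigd$ with probability $1-\delta$, which is exactly \eqref{eq:cond1}.

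For (ii) I would use a two-stage, distortion-free ``sketch-then-reject'' scheme. First compute a constant-factor proxy for every $\Sigmabh$-leverage score: take $\B$ with $\B\B^\top=\Sigmabh^{-1}$ (Cholesky, $O(d^3)$), draw a Johnson--Lindenstrauss matrix $\Pi_2\in\R^{d\times s}$ with $s=O(\log(n/\delta))$, precompute $\mathbf{G}=\B\Pi_2$ in $O(d^2\log(n/\delta))$ time, and set $\hat l_i=\|\x_i^\top\mathbf{G}\|^2$ for all $i$ in total time $O(nd\log(n/\delta))$; by the JL property, $\tfrac12\,\x_i^\top\Sigmabh^{-1}\x_i\le\hat l_i\le\tfrac32\,\x_i^\top\Sigmabh^{-1}\x_i$ for all $i$ simultaneously with probability $1-\delta$. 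Then sample from $\levh$ by rejection: propose $i$ with probability $\propto\hat l_i$ (after $O(n)$ preprocessing, each draw costs $O(\log n)$) and accept with probability $\frac{\x_i^\top\Sigmabh^{-1}\x_i}{2\,\hat l_i}$, evaluating the exact score $\x_i^\top\Sigmabh^{-1}\x_i$ in $O(d^2)$ time \emph{only} for the proposed index (after precomputing $\Sigmabh^{-1}$ once in $O(d^3)$). Since this ratio lies in $[\tfrac13,1]$, the acceptance probability is at least a constant, so the $O(d^2\log\frac1\delta)$ samples demanded by Algorithm~\ref{alg:det} are produced after $O(d^2\log\frac1\delta)$ proposals in expectation, at total cost $O(d^4\log\frac1\delta)$; moreover the accepted indices are distributed \emph{exactly} as $\levh$, so no distortion is introduced. (Alternatively, the $\hat l_i$ could be obtained from the fast leverage-score estimator of \cite{fast-leverage-scores}.)

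Feeding $\Sigmabh$ and this sampler into Algorithm~\ref{alg:det}, Theorem~\ref{t:det} returns $S$ with $\Pr(S)\propto\det(\X_S)^2$ using $O(d^2\log\frac1\delta)$ samples and $O(d^4\log\frac1\delta)$ time; the $d$ rows chosen by the internal reverse iterative sampling (Algorithm~\ref{alg:standard}) are linearly independent almost surely and hence index $d$ distinct points of $[n]$, as required. Summing the stages and union-bounding their failure probabilities (absorbed into the $\poly\log\frac1\delta$ factor) gives total time $O(nd\log n+d^4\log d)\poly\log\frac1\delta$ and success probability $1-\delta$. The main obstacle is exactly step (ii): producing \emph{exact} $\levh$ samples while inspecting only $O(d^2\log\frac1\delta)$ of the $n$ points at the $O(d^2)$ price of an exact score, rather than paying $O(nd^2)$ for all of them. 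The sketch-then-reject construction is what makes this possible, and the crux is verifying that the cheap JL proxy keeps the rejection-acceptance rate bounded below by a constant so that exactness costs us only a constant factor in the number of proposals.
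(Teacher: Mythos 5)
Your proposal is correct and follows essentially the same route as the paper: both reduce the problem to supplying the two inputs of Algorithm~\ref{alg:det} --- a spectral approximation $\Sigmabh$ via a fast Johnson--Lindenstrauss/subspace-embedding sketch in $O(nd\log n + d^4\log d)$ time, and exact i.i.d.\ samples from $\levh$ obtained by rejection sampling against sketched constant-factor leverage-score estimates computed in $O(nd\log n)$ time. The only difference is one of exposition: you spell out the sketch-then-reject step (the proposal weights $\hat l_i$, the acceptance ratio $\x_i^\top\Sigmabh^{-1}\x_i/(2\hat l_i)\in[\tfrac13,1]$, and the resulting exactness and constant acceptance rate) which the paper states only implicitly by citing \cite{fast-leverage-scores} and asserting that rejection sampling yields exact $\levh$ samples.
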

\begin{proof}
Naturally it suffices to show that the inputs for Algorithm
  \ref{alg:det} can be constructed efficiently. First note that
  $\sigd=\frac1n\X^\top\X$, and we can compute an
  $\epsilon$-approximation $\Sigmabh$ of this matrix
in time $O(nd\log n + d^3\epsilon^{-2}\log d)$, where $\epsilon=\frac1{2\sqrt{d}}$, using a sketching
technique called Fast Johnson-Lindenstraus Transform~\citep{ailon2009fast}, as described in
\cite{fast-leverage-scores}. Now, we need to produce samples from the
leverage score-type distribution $\levh$, which in this setting
corresponds to a discrete distribution over the index set $[n]$.
Using a different sketch of the data, an
approximation $\hat{L}=(\hat{L}_1,\dots,\hat{L}_n)$ of this
distribution can be computed in time $O(nd\log n + d^3)$ as shown in
\cite{fast-leverage-scores}, which satisfies 
$\hat{L}_i\geq \frac{\x_i^\top\Sigmabh^{-1}\x_i}{2\cdot\tr(\sigd\Sigmabh^{-1})}$.
Then we can use rejection sampling to get i.i.d.~samples from
$\levh$. All of the above randomized procedures succeed w.p.~at
least $1-\delta$, where the time complexity scales with $\poly\log
1/\delta$. Conditioned on them succeeding, Algorithm \ref{alg:det}
samples exactly from the distribution $\vsdx$ in time $O(d^4)\cdot
\poly\log1/\delta$, concluding the proof.
\end{proof}

\section{Experiments}
\label{s:experiments}
Subsampling from large datasets is an important practical application
of our methods. In this context, distribution $\dxy$ is defined via a
fixed matrix $\X\in\R^{n\times d}$ and a vector $\y\in\R^n$ by
sampling a row-response pair $(\x_i^\top,y_i)$ uniformly at
random. The square loss for this problem becomes
$L_{\dxy}(\w)=\frac{1}{n}\|\X\w-\y\|^2$. A commonly used
approach in this problem is \textit{leverage score sampling} \citep{drineas2006sampling}.
In Section \ref{s:loss} we propose a hybrid sampling scheme which
combines leverage score sampling with volume-rescaled sampling. We
will call it here \textit{leveraged volume sampling}. As discussed in
Section~\ref{s:algs}, this method can be implemented very efficiently
(see also Figure \ref{fig:runtime}), with time complexity similar to
leverage score sampling. In the following experiments we evaluate the
loss $L_{\dxy}$ of the estimators produced by both methods, showing
that if the sample size is small, then leveraged volume sampling
performs significantly better than leverage score sampling. We also contrast this with the
estimators produced by a previously proposed variant of discrete volume
sampling, given by \cite{unbiased-estimates-journal}, which for larger
sample sizes does not perform as well as the other two methods.
Overall, the three estimators we tested are:
\begin{align*}
  \textit{volume sampling:} \quad\wbh
  &= (\X_S)^\dagger\y_S,&\Pr(S)&\sim
  \det(\X_S^\top\X_S),
\quad S\in{[n]\choose k},\\[-2mm]
  \textit{leverage score sampling:}\quad
  \wbh&=  (\P_{\!\!\Xh}\Xh)^\dagger\P_{\!\!\Xh}\ybh,
  &\Xh&\sim\lev^k,\quad
    \P_{\!\X}=\sum_{i=1}^k
    \tfrac1{\sqrt{l_{\x_i}}}\e_i\e_i^\top,
  \\
\textit{leveraged volume sampling:}\quad
 \wbh &=(\P_{\!\!\Xb}\Xb)^\dagger\P_{\!\!\Xb}\ybb,
&\Xb &\sim\vsdx\cdot \lev^{k-d}.
\end{align*}
For the latter two estimators, the response vector is constructed from
$D_{\cal Y|\x}$, i.e., to match the selected row vectors. Both the
volume sampling-based estimators are unbiased, however
the leverage score sampling estimator is not. The volume
sampling method proposed in prior work is very similar to our distribution $\vskx$ defined
w.r.t.~uniform sampling from the dataset, except for the 
fact that the former does not allow the same row from the dataset to appear more
than once in the sample (because $S$ is a set). For large datasets
that difference does not have any practical impact on the
estimator. In particular, as discussed in Section \ref{s:lower}, our
lower bound from Theorem \ref{t:lower-vskx} can be easily adapted to hold
for this method as well. 

\begin{wrapfigure}{l}{0.5\textwidth}
\begin{tabular}{c|c|c}
Dataset & Instances ($n$) & Features ($d$) \\
\hline
\textit{bodyfat} & 252& 14\\
\textit{cpusmall} & 8,192 &12\\ 
\textit{mg} & 1,385 & 21\\
\textit{abalone} & 4,177  & 36 \\
\textit{cadata} & 20,640&8\\
\textit{MSD} &463,715&90
\end{tabular}
\captionof{table}{Libsvm regression datasets \citep{libsvm}. We
  expanded the features in \textit{mg} and \textit{abalone} to all
  degree 2 monomials, and removed redundancies.}
\label{tab:datasets}
  \includegraphics[width=.5\textwidth]{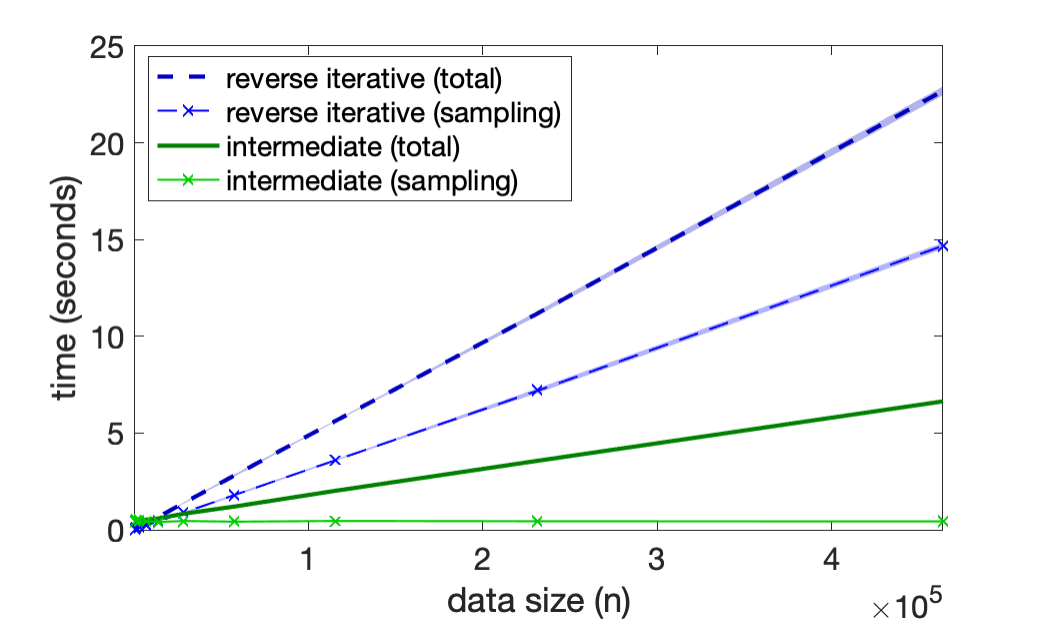}
  \captionof{figure}{Runtime comparison of algorithms for discrete
    volume sampling on the MSD dataset, varying
    the data size $n$ by taking row subsets of the full data matrix.}
  \label{fig:runtime}
\end{wrapfigure}

For each
estimator we plotted the loss $L_{\dxy}(\wbh)$ for a range of sample
sizes $k$, contrasted with the loss of the  best least-squares estimator $\w^*$
computed from all data. 
Plots shown in Figure \ref{fig:experiments} were
averaged over 100 runs, with shaded area representing standard error
of the mean. We used six benchmark datasets from the libsvm
repository \citep{libsvm}, whose dimensions are given in Table
\ref{tab:datasets}.

The results confirm that our proposed leveraged
volume sampling is as good or better than either of the baselines for any sample size
$k$. We can see that, in some of the examples, standard volume sampling
exhibits bad behavior for larger sample sizes, as suggested by the
lower bound of Theorem \ref{t:lower-vskx} (especially noticeable on
\textit{bodyfat} and \textit{cpusmall} datasets). On the other hand, leverage
score sampling exhibits poor performance for small sample sizes due to
the coupon collector problem, which is most noticeable for
\textit{abalone} dataset, where we can see a very sharp transition 
after which leverage score sampling becomes effective. Neither of the
variants of volume sampling suffers from this issue.

Finally, in Figure \ref{fig:runtime}, we compared the computational
cost of implementing discrete volume sampling using our
new distortion-free intermediate sampling (Algorithm~\ref{alg:det}) to
the prior state-of-the-art method of \cite{unbiased-estimates-journal}, reverse iterative sampling
(Algorithm~\ref{alg:standard}). Note that the output samples from the two
algorithms are identically distributed according to $\vsdx$, where $\dx$ denotes
the uniform distribution over the dataset, and both of the volume sampling
distributions considered in our experiments can be implemented using either
of these algorithms. In the figure, we distinguished between the
``total'' cost and ``sampling'' cost: the sampling cost excludes any
preprocessing steps that can be avoided during repeated sampling (see
Section \ref{s:app} for the motivations of repeated volume sampling).
The preprocessing cost for both methods involves computing the leverage
scores of the data matrix. The experiments were performed on MSD, the
largest dataset considered in this empirical evaluation. We varied the
data size by taking subsets of the full data matrix. The results were averaged over 5 runs,
with the shaded area representing standard deviation.
For the total cost, Figure \ref{fig:runtime} shows that both methods
scale linearly with $n$, however 
our intermediate sampling approach is considerably faster for large
data sizes, up to a factor of 3 in this experiment. When we look at the
sampling cost, the gap between the two approaches becomes much larger
because the cost of reverse iterative sampling still grows linearly
with $n$, whereas the cost of intermediate sampling stays flat. As a
result, for the full MSD dataset we observe at least an order of
magnitude difference. This is consistent with our analysis, since
Algorithm~\ref{alg:det} effectively reduces the dataset
down to an intermediate sample with size independent of $n$, and then
runs reverse iterative sampling on that intermediate sample. Thus, the
vast majority of the total cost of intermediate sampling involves
the preprocessing step of computing the leverage scores. It is worth noting that for even larger datasets, further
computational savings in the preprocessing cost can be achieved by computing the leverage scores
approximately (see Section \ref{s:finite}).

\begin{figure}
  \includegraphics[width=0.5\textwidth]{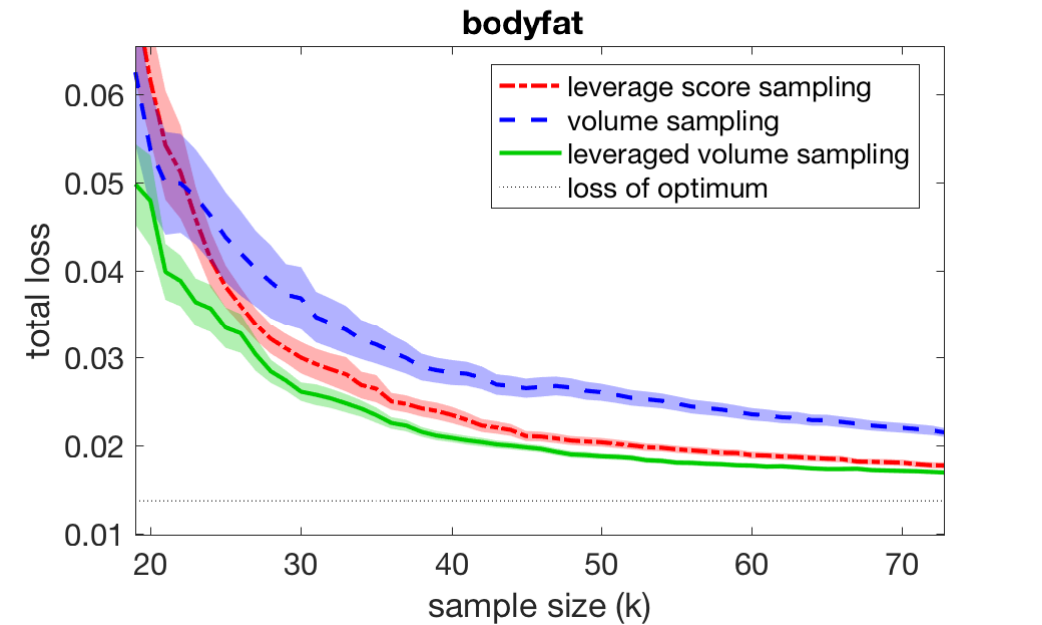}\nobreak
  \includegraphics[width=0.5\textwidth]{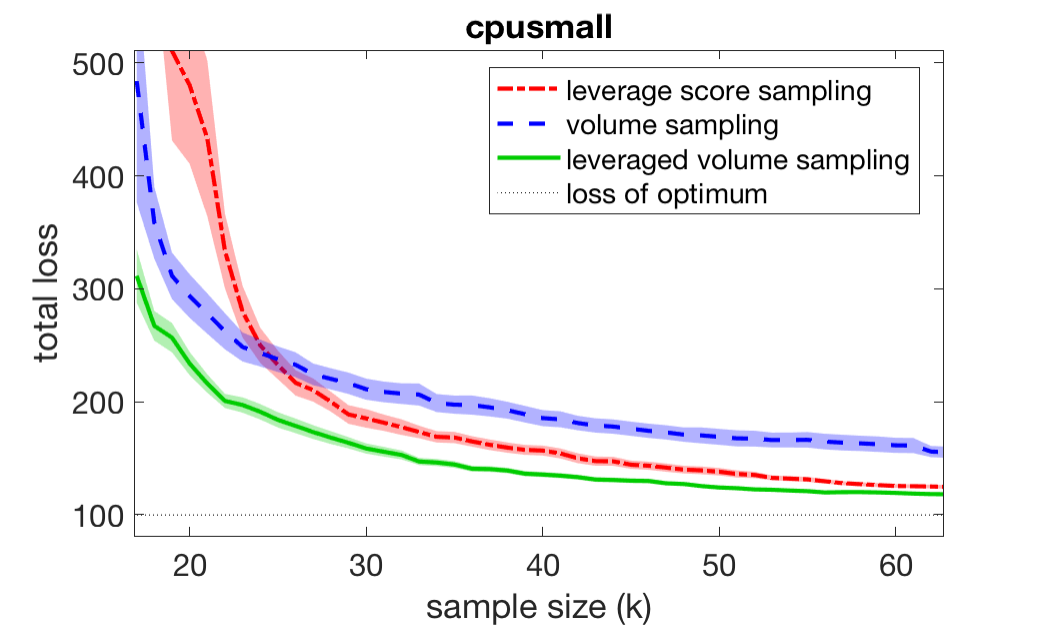}
\includegraphics[width=0.5\textwidth]{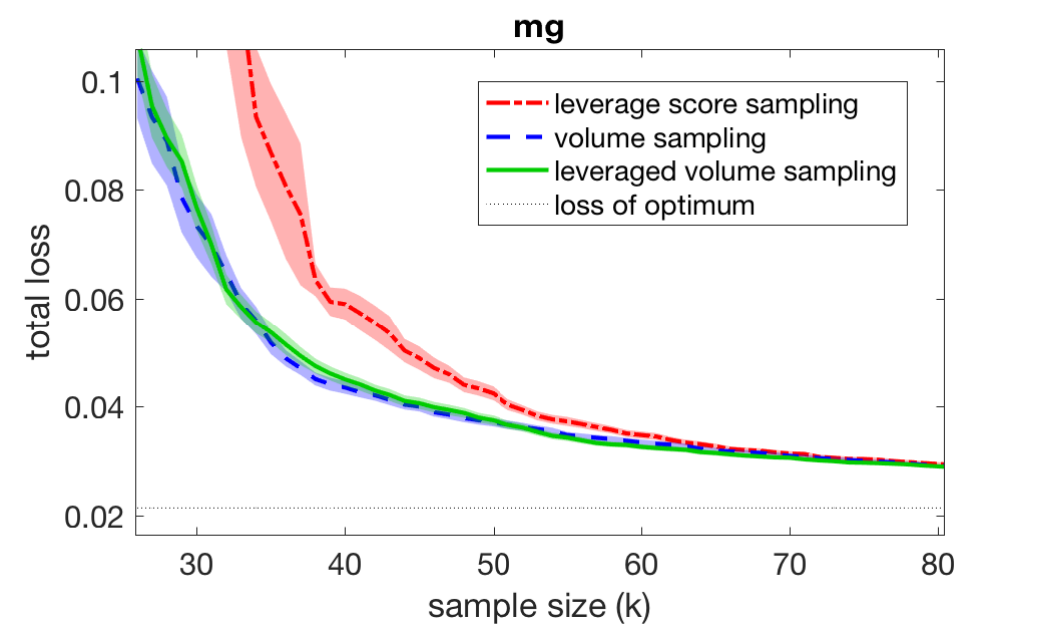}\nobreak
\includegraphics[width=0.5\textwidth]{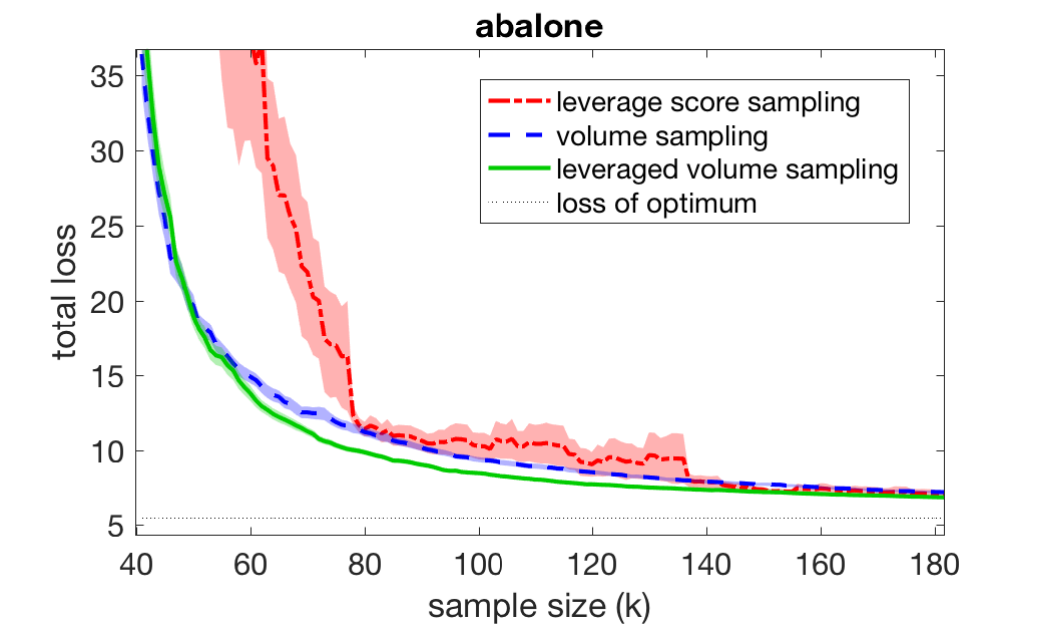}
\includegraphics[width=0.5\textwidth]{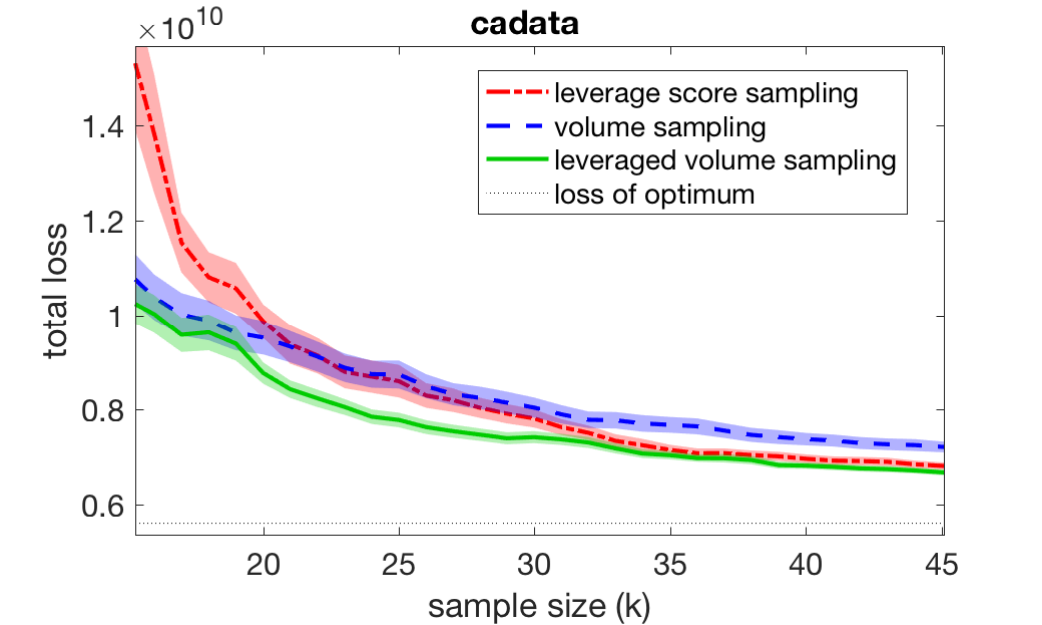}\nobreak
\includegraphics[width=0.5\textwidth]{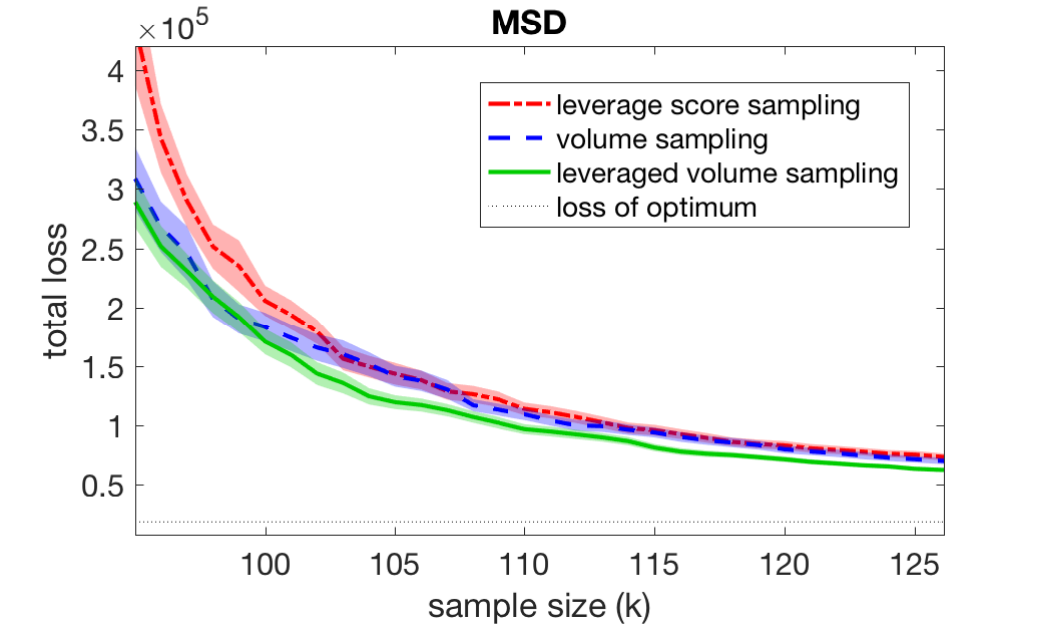}
\caption{Comparison of loss of the subsampled estimator when
  using \textit{leveraged volume sampling} vs using \textit{leverage score sampling} and
  standard \textit{volume sampling}  on six datasets.}
\label{fig:experiments}
\end{figure}

\section{Conclusions} \label{s:conclusion}

We showed that
for any input distribution and $\epsilon>0$, 
there is a random design consisting of
$O(d\log d+ d/\epsilon)$ points from which an {\em unbiased} estimator can
be constructed whose expected square loss over  
the entire distribution is bounded by $1+\epsilon$ times
the loss of the optimum. 
However, two main open problems remain.
First, can the sample size bound be reduced to $O(d/\epsilon)$?
This has already been done with a {\em biased estimator}
by \cite{chen2017condition}, but finding an {\em unbiased} estimator
of the smaller size remains open. 

Second, the least squares estimator combined with i.i.d.~leverage
score sampling already achieves loss $1+\epsilon$ times the optimum
with $O(d\log d+ d/\epsilon)$ points. 
The resulting estimator is biased. However, in our
preliminary experiments the bias of
\emph{exact} leverage score sampling is small and decreases
rather quickly (unlike for uniform sampling, or even approximate
leverage score sampling, where the bias can
be significant). Thus, one of the key open problems is
to quantify the bias of this method.

\acks{Micha{\l } Derezi\'{n}ski and Manfred K. Warmuth
  were supported by the NSF grant IIS-1619271. Micha{\l }
  Derezi\'{n}ski would also like to thank the NSF for funding via the
  NSF TRIPODS program. Daniel Hsu was supported by the NSF grant
  CCF-1740833 and a Sloan Research Fellowship. Part of this work was done
while Manfred K. Warmuth was visiting Google Inc.~in Z\"urich and 
Micha{\l } Derezi\'{n}ski was visiting the Simons Institute for the Theory of
Computing. We would also like to acknowledge Eric Price for valuable discussions
regarding this paper.}

\appendix
\section{Exact calculation of $\w^*$ for the 
i.i.d.\ Gaussian experiment of the introduction
and a technical proposition}
\label{a:exact}

Since in the setup $\sigd=\I$,
the least squares solution can be computed as:
\begin{align*}
  \w^* &= \argmin_\w\E\big[(\x^\top\w-y)^2\big]= \sigd^{-1}\E[y\x] \\
  &= \sum_{i=1}^d\E\Big[\big(\tfrac13x_i^3 + x_i\big)\x\Big] =
\begin{pmatrix}
\E[\frac13 x_1^4+x_1^2]\\ |\\ \E[\frac13 x_d^4+x_d^2]
\end{pmatrix} = \begin{pmatrix}2\\ |\\ 2\end{pmatrix}.
\end{align*}
Here the second to last equality uses the fact that the
cross terms are 0 due to independence and the last equality
follows from the fact that $\E[x^4]=3$ and $\E[x^2]=1$, for $x \sim \Nc(0,1)$.

\begin{proposition}[Theorem 2 of~\citealp*{cho2009inner}] \label{prop:uniform-moments}
  Let $\u = (u_1,\dotsc,u_d)$ be a uniformly random unit vector in $\R^d$.
  For any $k_1,\dotsc,k_d \geq 0$,
  \[
    \E\bigg[ \prod_{j=1}^d |u_j|^{2k_j} \bigg]
    = \frac{\prod_{j=1}^d \Gamma\big( k_j + \tfrac12 \big)}{\Gamma\big( \sum_{j=1}^d k_j + \tfrac{d}{2} \big)} \cdot \frac{\Gamma\big( \tfrac{d}{2} \big)}{\Gamma\big( \tfrac12 \big)^d} .
  \]
\end{proposition}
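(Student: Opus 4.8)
The plan is to use the standard Gaussian representation of a uniformly random point on the sphere. Let $g=(g_1,\dotsc,g_d)$ have i.i.d.\ $\Nc(0,1)$ coordinates, and set $\u = g/\|g\|$. By the rotational invariance of the isotropic Gaussian, this $\u$ is exactly uniform on the unit sphere in $\R^d$, so it suffices to compute $\E\big[\prod_{j}|u_j|^{2k_j}\big]$ for $\u$ defined this way. The advantage is that the coordinates $g_j$ are independent and have well-known absolute moments, which converts the sphere integral into an elementary Gamma-function calculation.

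The crux of the argument is the classical fact that the radial part $R=\|g\|$ and the angular part $\u=g/\|g\|$ are \emph{independent}. Granting this, I would write $|g_j|^{2k_j}=R^{2k_j}|u_j|^{2k_j}$ and multiply over $j$, giving $\prod_{j}|g_j|^{2k_j}=R^{2K}\prod_{j}|u_j|^{2k_j}$ where $K=\sum_{j}k_j$. Taking expectations and factoring by independence yields $\E\big[\prod_{j}|g_j|^{2k_j}\big]=\E\big[R^{2K}\big]\cdot\E\big[\prod_{j}|u_j|^{2k_j}\big]$, so the target equals the ratio of two explicitly computable moments.

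It then remains to evaluate both. By independence of the coordinates, the numerator factorizes as $\prod_{j}\E[|g_j|^{2k_j}]$, and the standard absolute-moment formula for a standard normal, $\E[|g|^{2k}]=2^{k}\,\Gamma(k+\tfrac12)/\Gamma(\tfrac12)$, handles each factor, producing $2^{K}\,\prod_{j}\Gamma(k_j+\tfrac12)\big/\Gamma(\tfrac12)^d$. For the denominator, $R^2=\|g\|^2\sim\chi^2_d$, whose moments are $\E\big[(R^2)^{K}\big]=2^{K}\,\Gamma(\tfrac{d}{2}+K)\big/\Gamma(\tfrac{d}{2})$. Dividing, the factors $2^{K}$ cancel and the Gamma functions rearrange, since $\tfrac{d}{2}+K=\sum_{j}k_j+\tfrac{d}{2}$, precisely into the claimed expression $\frac{\prod_{j}\Gamma(k_j+\tfrac12)}{\Gamma(\sum_j k_j+\tfrac{d}{2})}\cdot\frac{\Gamma(\tfrac{d}{2})}{\Gamma(\tfrac12)^d}$.

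The only genuinely nontrivial step is the radius/direction independence; everything else is bookkeeping with known moment formulas. I would justify it either from rotational invariance (the law of $g$ is invariant under every orthogonal transformation, hence so is that of $\u$, and this symmetry forces the conditional law of $\u$ given $R$ to be free of $R$), or, if a self-contained argument is preferred, by changing to polar coordinates and observing that the Gaussian density $\propto e^{-\|g\|^2/2}$ separates into a function of $R$ times the uniform measure on the sphere, so the joint density factorizes. Either route makes the independence, and hence the whole computation, rigorous.
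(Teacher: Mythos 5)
Your proposal is correct. Note, however, that the paper does not prove this proposition at all: it is imported verbatim as Theorem 2 of the cited reference (Cho, 2009) and used as a black box inside the proof of Theorem \ref{t:bias}. So there is no in-paper argument to compare against; what you have supplied is a self-contained derivation of the cited fact. Your route is the standard one and all the pieces check out: writing $\u = g/\|g\|$ for $g\sim\Nc(0,\I_d)$, the independence of $R=\|g\|$ and $\u$ follows from the factorization of the Gaussian density in polar coordinates, as you say; the identity $\E[|g|^{2k}]=2^{k}\Gamma(k+\tfrac12)/\Gamma(\tfrac12)$ is valid for all real $k>-\tfrac12$, and $\E[(\chi^2_d)^{K}]=2^{K}\Gamma(\tfrac d2+K)/\Gamma(\tfrac d2)$ for $K>-\tfrac d2$, so the argument covers the non-integer exponents $k_j\geq 0$ allowed in the statement --- a point worth being explicit about, since the proposition is stated for arbitrary nonnegative reals. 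The $2^{K}$ factors cancel in the ratio and the Gamma functions assemble exactly into the claimed expression. The one step you flag as nontrivial, radius/direction independence, is indeed the only substantive input, and either of your two justifications (rotational invariance or the polar-coordinate factorization) is rigorous.
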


\section{Loss bound with approximate leverage scores}
\label{a:loss}
In this section we describe the changes needed for the proof of
Theorem \ref{t:loss} to be extended to approximate leverage score
sampling, as described in Lemma \ref{l:loss}. Below, we state the
result in its full generality. Recall that we denote a leverage score
of point $\x$ as $l_\x=\x^\top\sigd^{-1}\x$.
\begin{theorem}\label{t:loss-full}
  Let $\dx$ be a $d$-variate distribution. Assign
  to every $\x^\top\!\in\mathrm{supp}(\dx)$ a real-valued $\hat{l}_\x$
  such that $\frac12 l_\x\leq \hat{l}_\x\leq \frac32 
  l_\x$ and define the following $d$-variate distribution:
  \begin{align*}
    \levhh(A) \defeq \frac{\E_{\dx}\big[\one_{[\x^\top\in
    A]}\,\hat{l}_\x\big]}{\E_{\dx}\big[\hat{l}_\x\big]}\quad\text{for
any    $\dx$-measurable $A$.}
  \end{align*}
  For any $\epsilon >0$, there
   is  $k=O(d\log d+d/\epsilon)$ such that for any $D_{{\cal Y}|\x}$, if we sample $\Xb\sim \vsdx\!\cdot \levhh^{k-d}$
   and $\yb_i\sim D_{{\cal Y}|\x=\xbb_i}$ then $\wbh =
   \argmin_\w \sum_{i=1}^k \frac1{\hat l_{\xbb_i}}(\xbb_i^\top\w -
   \yb_i)^2$ satisfies: 
   \begin{align*}
\quad \E[\wbh] &= \argmin_\w L_{\dxy}(\w) \quad\text{and}\quad \E\big[L_{\dxy}(\wbh)\big] \leq
     (1+\epsilon)\cdot \min_\w L_{\dxy}(\w).
     \end{align*}
 \end{theorem}
 \begin{proof}
The reduction described at the beginning of the proof of Theorem
\ref{t:loss} proceeds almost unchanged, except that now distribution
$\dxyt$ is defined in terms of the approximate leverage scores:
\begin{align*}
(\xbt^\top\!,\yt)=\bigg(\frac1{\sqrtlh{\xbh}}\,\xbh^\top\!,
  \frac1{\sqrtlh{\xbh}}\,\yh\bigg)\sim\dxyt,
\end{align*}
where $\xbh\sim\levhh$ and $\yh\sim D_{{\cal Y}|\x=\xbh}$. Denoting
$\hat{d}=\E_{\dx}[\hat{l}_\x]\in[\frac12d,\frac32d]$, we have
$\Sigmab_{\dxt}\!=\sigd/\hat{d}$. Also, the
leverage scores of $\dxyt$ are approximately uniform:
\begin{align*}
\xbt^\top\Sigmab_{\dxt}^{-1}\xbt =\frac1{\hat{l}_{\xbh}}\xbh^\top\Sigmab_{\dxt}^{-1}
  \xbh=\frac{\hat{d}}{\hat{l}_{\xbh}}\xbh^\top\sigd^{-1}\xbh\,\in\,
  [d/3,\,3d].
\end{align*}
Following the same steps as for Theorem \ref{t:loss}, we conclude that
without loss of generality it suffices to show the result w.r.t.~loss
$L_{\dxyt}$ for the estimator $\Xt^\dagger\ybt$ drawn 
from $\Xt\sim\Vol_{\dxt}^d\!\!\cdot\dxt^{k-d}$ and $\yt_i\sim \dxyt_{{\cal Y}|\xbt=\xbt_i}$.

Using the above reduction, from now on we assume that $l_\x\in[d/3,\,3d]$ a.s.~for $\x\sim\dx$,
and we consider the estimator $\Xb^\dagger\ybb$, where $\Xb\sim\vsdx\cdot\dx^{k-d}$.
Now, the unbiasedness of this estimator follows immediately from Theorems
 \ref{l:composition} and \ref{t:unbiased}. Again, without loss of generality, we can replace the distribution
$\x^\top\sim\dx$ by the distribution of $\x^\top\sigd^{-\sfrac12}$,
so from now on we will let $\sigd=\I$. The loss bound reduces to
 the following, same as before:
\begin{align}
L_{\dxy}(\wbh)-L_{\dxy}(\w^*)
  &= \|\wbh-\w^*\|^2\leq \big\|(\Xb^\top\Xb)^{-1}\|^2\cdot\|\Xb^\top(\ybb-\Xb\w^*)\big\|^2.\label{eq:terms-full}
\end{align}
Applying Lemma \ref{l:matrix-tail} for $\dx$ with $K=3d$, $m=k-\lfloor k/2\rfloor$
and $\epsilon=1/2$ we obtain that if $k\geq d+12Cd\log d/\delta$
then $\Xb\sim\vsdx\cdot\dx^{k-d}$ with probability at least $1-\delta$ satisfies
\begin{align*}
  \Ec:\qquad\Xb_{[s]^c}^\top\Xb_{[s]^c}\succeq\frac k4\cdot\I,\quad
  \text{ where }s=\lfloor k/2 \rfloor.
\end{align*}
We now decompose the expectation into two terms depending on
whether the event $\Ec$ occurs or not:
\begin{align}
\E[\|\wbh-\w^*\|^2]= \Pr(\Ec)\cdot\E[\|\wbh-\w^*\|^2\mid \Ec] +
  \Pr(\neg\Ec)\cdot\E[\|\wbh-\w^*\|^2\mid \neg\Ec],
\end{align}
and the proof is divided into two parts, for handling the two terms.

\paragraph{Part 1: Event $\Ec$ suceeds} We use the upper bound from \eqref{eq:terms-full}.
Event $\Ec$ implies that
$\|(\Xb^\top\Xb)^{-1}\|^2\leq 4^2/k^2$. The second term in
\eqref{eq:terms-full} is decomposed similarly as in 
\eqref{eq:marginals0}, however bounding each of the obtained
components will require a bit more care. Denoting $\rbb=\ybb-\Xb\w^*$,
we have
\begin{align*}
  \E\big[\big\|\Xb^\top\rbb\big\|^2\big]
  &=\sum_{\{i,j\}\subseteq[d]}\!\!\E\big[\rb_i\rb_j\xbb_i^\top\xbb_j\big]
    +\sum_{i\in[d]}\E\big[\|\xbb_i\rb_i\|^2\big]
    +\sum_{i\in[d]^c}\E\big[\|\xbb_i\rb_i\|^2\big]
  \\ &=
       d(d\!-\!1)\,\E\big[\rb_1\rb_2\xbb_1^\top\xbb_2\big]
       + d\,\E\big[\rb_1^2 l_{\xbb_1}\big] + (k-d)\,\E_{\dx}\big[(y-\x^\top\w^*)^2l_\x\big].
\end{align*}
Since $l_\x\leq 3d$, the last component above can be immediately
bounded by $3d(k-d)L_{\dxy}(\w^*)$. Invoking Theorem \ref{t:marginal},
we know that $\xbb_1\sim\lev$ so the second term can be bounded as
follows:
$d\,\E[\rb_1^2l_{\xbb_1}]=d\,\E_{\dxy}[(y-\x^\top\w^*)l_\x^2]/d\leq
9d^2L_{\dxy}(\w^*)$. The remaining term is computed by invoking Lemma
\ref{l:joint}. Denoting $r_i=y_i-\x_i^\top\w^*$, we have
  \begin{align*}
    d(d\!-\!1)\,\E\big[\rb_1\rb_2\xbb_1^\top\xbb_2\big]
    &= d(d\!-\!1)\,\E_{\dxy^2}\big[r_1r_2\x_1^\top\x_2\cdot
      \big(l_{\x_1}l_{\x_2}\
-(\x_1^\top\x_2)^2\big)\big]/d^{\underline{2}}
\\ &= \big\|\E_{\dxy}[(y-\x^\top\w^*)l_\x \x]\big\|^2-
     \underbrace{\E_{\dxy^2}\big[r_1r_2(\x_1^\top\x_2)^3\big]}_{\geq
     0}
\\[-5mm] &\overset{(*)}{\leq} \E_{\dxy}\big[(y-\x^\top\w^*)l_\x^2\big]\leq 9d^2L_{\dxy}(\w^*),
  \end{align*}
where $(*)$ is implied by the following more general property
  of the random vector $\x^\top\sim\dx$ when $\sigd=\I$: for any random variable $b$ jointly
  distributed with $\x$ we have $\|\E[b\,\x]\|^2\leq\E[b^2]$. This
  follows because $\E[\x\x^\top] =\I$, so the
  components of $\x$, treated as scalar random variables, form an
  orthonormal basis of a $d$-dimensional subspace of the Hilbert space $\Hc$
  of square-integrable random variables. Thus, $\|\E[b\x]\|^2$,
  which is the $\Hc$-norm of the projection of $b$ onto that subspace,
  is no more than the $\Hc$-norm of $b$ itself.

  \paragraph{Part 2: Event $\Ec$ fails} This part follows identically
  as in the proof of Theorem \ref{t:loss}, except that when applying
  Lemma \ref{l:decorrelation}, we use the fact that $\|\x\|^2\leq 3d$,
  obtaining:
  \begin{align*}
    \E\big[\tr((\Xb_{[s]}\Xb_{[s]})^{-1})\|\rbb_{[s]}\|^2\big]
    &\leq s\cdot\Big(\frac ds\cdot L_{\dxy}(\w^*) + \frac{d-1}{s(s-d+1)}
   \cdot \E_{\dx}\big[\|\x\|^2\rb_1^2\big]\Big)
    \\
    &\leq d\cdot L_{\dxy}(\w^*) + \frac{3d(d-1)}{s-d+1}\cdot
L_{\dxy}(\w^*)\leq 10d\,L_{\dxy}(\w^*).
  \end{align*}
With the remaining steps same as in Theorem \ref{t:loss}, this concludes
the proof.
\end{proof}

\section{Volume-rescaled sampling conditioned on the
  covariance}\label{a:conditional}
In this section we present the proof of a lemma used to construct
volume-rescaled samples when $\dx$ is a centered multivariate Gaussian
distribution.
\begin{lemma}[restated Lemma \ref{l:conditional}]\label{l:conditional2}
  For any $\Sigmab\in\R^{d\times d}$, the conditional distribution
  of $\Xb\sim\vskx$ given $\Xb^\top\Xb=\Sigmab$ is the same as the conditional
  distribution of $\X\sim \dxk$ given $\X^\top\X=\Sigmab$.
\end{lemma}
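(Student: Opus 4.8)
The plan is to exploit that $\vskx$ is simply $\dxk$ reweighted by $\det(\X^\top\X)$, a factor that depends on $\X$ only through the matrix $\X^\top\X$. Conditioning on the event $\{\X^\top\X=\Sigmab\}$ freezes this factor at the constant $\det(\Sigmab)$, so the reweighting cannot distort the conditional law; it only rescales the law of the statistic $\X^\top\X$ itself. Formally, I would invoke the general principle that if a probability measure $Q$ has density $g(T(X))/Z$ with respect to a base measure $P$, where $T$ is a measurable statistic and $Z=\E_P[g(T(X))]$, then for every integrable $F$ the version $\psi(t)=\E_P[F(X)\mid T(X)=t]$ of the base conditional expectation is also a version of $\E_Q[F(X)\mid T(X)=t]$; hence the regular conditional distributions of $X$ given $T(X)=t$ coincide for $Q$-almost every $t$. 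Here I instantiate $P=\dxk$, $Q=\vskx$, the statistic $T(\X)=\X^\top\X$, the reweighting $g=\det$, and the normalizer $Z=\ktd\det(\sigd)$ from Remark~\ref{r:expectations}.

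The verification is a one-line application of the defining property of conditional expectation. For any bounded measurable $h$,
\[
\E_{\vskx}\big[F(\Xb)\,h(\Xb^\top\Xb)\big]
= \frac{\E_{\dxk}\big[\det(\X^\top\X)\,h(\X^\top\X)\,F(\X)\big]}{Z}.
\]
Because $\det(\X^\top\X)\,h(\X^\top\X)$ is $\sigma(\X^\top\X)$-measurable, the tower property lets me replace $F(\X)$ inside the expectation by $\psi(\X^\top\X)$, turning the right-hand side back into $\E_{\vskx}\big[\psi(\Xb^\top\Xb)\,h(\Xb^\top\Xb)\big]$. Since this holds for all $h$ and $\psi(\Xb^\top\Xb)$ is $\sigma(\Xb^\top\Xb)$-measurable, $\psi(\Xb^\top\Xb)=\E_{\vskx}[F(\Xb)\mid\Xb^\top\Xb]$ almost surely; letting $F$ range over a measure-determining family yields the equality of the conditional distributions.

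Rather than a genuine obstacle, the only point requiring care is the almost-sure qualifier together with the rank-deficient locus. The conclusion holds for $\Sigmab$ in the support of the law of $\Xb^\top\Xb$ under $\vskx$, i.e.\ for $\Sigmab$ with $\det(\Sigmab)>0$; on the set $\{\det=0\}$ the factor $\det(\X^\top\X)$ vanishes, so $\vskx$ places no mass there and the conditional law is vacuous. This is harmless for our intended use, since the lemma is invoked in the Gaussian setting where $\rank(\X)=d$ almost surely and the degenerate set is null. Existence of the regular conditional distributions is automatic, as all the underlying spaces are Borel subsets of Euclidean space.
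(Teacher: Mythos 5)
Your proof is correct, but it takes a genuinely different route from the paper's. You use the abstract fact that reweighting a base measure by a density of the form $g(T(X))/Z$ leaves the regular conditional distribution of $X$ given $T(X)$ unchanged, verified via the defining property of conditional expectation and the tower rule; this yields equality of the conditional laws for $\vskx$-almost every $\Sigmab$, which is exactly what is needed for the Gaussian application in Theorem \ref{t:gaussian}. The paper instead argues pointwise: it conditions on the $\epsilon$-neighborhood $C_\Sigmab^\epsilon=\{\B:\|\B-\Sigmab\|\le\epsilon\}$, sandwiches $\Pr(\Xb\in A\mid \Xb^\top\Xb\in C_\Sigmab^\epsilon)$ between $\Pr(\X\in A\mid \X^\top\X\in C_\Sigmab^\epsilon)$ times $\bigl(\tfrac{1-\epsilon}{1+\epsilon}\bigr)^d$ and times $\bigl(\tfrac{1+\epsilon}{1-\epsilon}\bigr)^d$ using that $\det$ is nearly constant on the neighborhood, and lets $\epsilon\to 0$. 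The trade-off: the paper's version is more hands-on and aims at the literal ``for any $\Sigmab$'' phrasing, but it quietly relies on identifying the limit of neighborhood-conditional probabilities with the conditional probability given the exact event (a differentiation-of-measures step), and its determinant bounds as written implicitly assume the eigenvalues of $\Sigmab$ are on the order of $1$. Your argument sidesteps both issues at the cost of only an almost-everywhere conclusion, and your observation that $\vskx$ puts no mass on the rank-deficient locus correctly disposes of the degenerate case.
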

\begin{proof}
  Since we are conditioning on an event which may have probability
  $0$, this requires a careful limiting argument. Let $A$ be any
  measurable event over the random  matrix $\Xb$ and let 
    $C_\Sigmab^\epsilon \defeq \big\{\B\in\R^{d\times d}\,:\, \|\B-\Sigmab\|\leq \epsilon\big\}$
  be an $\epsilon$-neighborhood of $\Sigmab$ w.r.t.~the matrix
  $2$-norm such that $\Pr(\Xb^\top\Xb\!\in\! C_\Sigmab^\epsilon > 0)$.
  We write the probability of $\Xb\in A$ conditioned on
  $\Xb^\top\Xb\in C_\Sigmab^\epsilon$ as:
  \begin{align*}
    \Pr\big(\Xb\!\in\! A\,|&\,\Xb^\top\Xb\!\in\! C_\Sigmab^\epsilon\big) =
    \frac{\Pr\big(\Xb\!\in\! A\,\wedge\,\Xb^\top\Xb\!\in\! C_\Sigmab^\epsilon\big)}
{\Pr\big(\Xb^\top\Xb\!\in\! C_\Sigmab^\epsilon\big)}
    =\frac{\E\big[\one_{[\X\in A]}\one_{[\X^\top\X\in
      C_\Sigmab^\epsilon]}\det(\X^\top\X)\big]}
      {\E\big[\one_{[\X^\top\X\in
      C_\Sigmab^\epsilon]}\det(\X^\top\X)\big]}\\
&\leq      \frac{\E\big[\one_{[\X\in A]}\one_{[\X^\top\X\in
      C_\Sigmab^\epsilon]}\det(\Sigmab)(1+\epsilon)^d\big]}
      {\E\big[\one_{[\X^\top\X\in
                                                    C_\Sigmab^\epsilon]}\det(\Sigmab)(1-\epsilon)^d\big]}
=\frac{\E\big[\one_{[\X\in A]}\one_{[\X^\top\X\in
      C_\Sigmab^\epsilon]}\big]}{\E\big[\one_{[\X^\top\X\in
   C_\Sigmab^\epsilon]}\big]} \bigg(\frac{1+\epsilon}{1-\epsilon}\bigg)^d\\
&=\Pr\big(\X\!\in\! A\,|\,\X^\top\X\!\in\! C_\Sigmab^\epsilon\big)
\bigg(\frac{1+\epsilon}{1-\epsilon}\bigg)^d
\overset{\epsilon\rightarrow 0}{\longrightarrow}
\Pr\big(\X\!\in\! A\,|\,\X^\top\X\!=\!\Sigmab\big).
  \end{align*}
  We can obtain a lower-bound analogous to the above upper-bound,
  namely
$\Pr\big(\X\!\in\! A\,|\,\X^\top\X\!\in\! C_\Sigmab^\epsilon\big)
    \big(\frac{1-\epsilon}{1+\epsilon}\big)^d$, which also converges
    to $\Pr\big(\X\!\in\! A\,|\,\X^\top\X\!=\!\Sigmab\big)$.
  Thus, we conclude that:
  \begin{align*}
    \Pr\big(\Xb\!\in\! A\,|\,\Xb^\top\Xb\!=\!\Sigmab\big)
    &=
    \lim_{\epsilon\rightarrow 0}\,\Pr\big(\Xb\!\in\!
A\,|\,\Xb^\top\Xb\!\in\! C_\Sigmab^\epsilon\big)
=\Pr\big(\X\!\in\! A\,|\,\X^\top\X\!=\!\Sigmab\big),
  \end{align*}
  completing the proof.
\end{proof}

\bibliography{pap}
\end{document}